\documentclass[11pt]{article}

\usepackage[preprint]{acl}

\usepackage{times}
\usepackage{latexsym}
\usepackage[ruled,vlined]{algorithm2e}

\usepackage[T1]{fontenc}

\usepackage[utf8]{inputenc}
\usepackage{microtype}
\usepackage{booktabs}

\usepackage{microtype}

\usepackage{inconsolata}

\usepackage{subfigure}
\usepackage{graphicx}
\usepackage{array}
\usepackage{multirow}
\usepackage{booktabs}
\usepackage{colortbl}
\usepackage{xcolor}
\usepackage{vcell}
\usepackage{arydshln}
\usepackage{amsmath}
\usepackage{graphicx}   
\usepackage{caption}    
\usepackage{subcaption} 
\usepackage{amsfonts}
\usepackage{tabularx}
\usepackage{makecell}
\usepackage[normalem]{ulem}
\usepackage{CJKutf8}
\usepackage{pifont}
\usepackage{arydshln}
\usepackage{tikz}
\usepackage{pgfplots}
\usepackage{inconsolata}
\usepackage{fancyvrb}
\usepackage{tabularx}
\usepackage{colortbl}
\usepackage{xcolor}
\usepackage{multirow}
\usepackage{makecell}
\usepackage{tabularx}
\usepackage{xcolor}
\usepackage{colortbl}
\usepackage{tcolorbox}
\usepackage{amsthm}
\newtheorem{theorem}{Theorem}

\tcbuselibrary{skins,breakable}
\newtcolorbox{mtcard}{
  enhanced,
  colback=gray!5,
  colframe=gray!40,
  boxrule=0.3pt,
  arc=2pt,
  left=4pt,
  right=4pt,
  top=4pt,
  bottom=4pt,
}
\newtcolorbox{mtbox}{
  colback=gray!5,
  colframe=gray!40,
  boxrule=0.3pt,
  arc=2pt,
  left=3pt,right=3pt,top=3pt,bottom=3pt
}

\newtcolorbox{pebox}{
  colback=green!5,
  colframe=green!40,
  boxrule=0.3pt,
  arc=2pt,
  left=3pt,right=3pt,top=3pt,bottom=3pt
}

\newtcolorbox{attrbox}{
  colback=white,
  colframe=gray!20,
  boxrule=0.2pt,
  arc=1pt,
  left=2pt,right=2pt,top=2pt,bottom=2pt
}

\usepackage{tcolorbox}

\definecolor{lightgray}{gray}{0.95}
\definecolor{lightgreen}{rgb}{0.90,0.97,0.90}

\usepackage{todonotes}
\usepackage[capitalise,noabbrev]{cleveref}
\usepackage{listings}
\usepackage{fancyvrb} 
\makeatletter
\def\adl@drawiv#1#2#3{%
        \hskip.5\tabcolsep
        \xleaders#3{#2.5\@tempdimb #1{1}#2.5\@tempdimb}%
                #2\z@ plus1fil minus1fil\relax
        \hskip.5\tabcolsep}

\definecolor{verbgray}{gray}{0.9}

\usepackage{colortbl}
\definecolor{lightgray}{rgb}{0.7,0.7,0.7}

\definecolor{light-orange}{HTML}{fee9d4}
\definecolor{light-green}{HTML}{d8f0d3}
\definecolor{light-blue}{HTML}{dae8f5}
\definecolor{light-red}{HTML}{FBC7C4}
\definecolor{set10-red}{HTML}{e41a1c}
\definecolor{set10-blue}{HTML}{377eb8}
\definecolor{set10-green}{HTML}{4daf4a}
\definecolor{bblue}{HTML}{4F81BD}
\definecolor{rred}{HTML}{c4260b}
\definecolor{ggreen}{HTML}{098c1f}
\definecolor{ppurple}{HTML}{9F4C7C}
\definecolor{oorange}{HTML}{F79646}
\usepackage{colortbl}
\usepackage{xcolor}
\definecolor{oursorange}{RGB}{255,245,230}

\usepackage{enumitem}
\setlist[itemize,enumerate]{leftmargin=*}
\usepackage[normalem]{ulem}
\usepackage{pgfplots}
\usetikzlibrary{pgfplots.groupplots}
\pgfplotsset{compat=1.3}
\usepackage{tikz}
\usetikzlibrary{patterns}
\usepackage{xcolor}
\usepackage{todonotes}
\usepackage{listings}
\usepackage{multirow}
\usepackage{graphicx}
\definecolor{CustomBlue}{RGB}{57,83,191}
\usepackage{tcolorbox}

\newtcolorbox{casebox}{
  colback=gray!3,
  colframe=gray!40,
  boxrule=0.5pt,
  arc=2mm,
  left=6pt,
  right=6pt,
  top=6pt,
  bottom=6pt
}

\title{PEGRL: Improving Machine Translation by Post-Editing Guided Reinforcement Learning}
\newcommand*{\affmark}[1][*]{\textsuperscript{#1}}
\newcommand*{\affaddr}[1]{#1}
\newcommand*{\email}[1]{\texttt{#1}}

\author{
Yunzhi Shen\affmark[1]\quad 
Hao Zhou\affmark[1]\quad 
Xin Huang\affmark[2]\thanks{Co-corresponding authors.}\quad
Xue Han\affmark[2] \quad 
Junlan Feng\affmark[2] \\
\textbf{Shujian Huang\affmark[1]\footnotemark[1]}
\\
\affaddr{\affmark[1]National Key Laboratory for Novel Software Technology, Nanjing University}\\
\affaddr{\affmark[2]China Mobile Research Beijing, China}\\
\email{\{shenyunzhi, zhouh\}@smail.nju.edu.cn}\quad
\email{huangsj@nju.edu.cn} \\
\email{\{huangxin, hanxuejt, fengjunlan\}@cmjt.chinamobile.com}\\
}

\usepackage{tcolorbox}
\begin{document}
\maketitle
\begin{abstract}
Reinforcement learning (RL) has shown strong promise for LLM-based machine translation, with recent methods such as GRPO demonstrating notable gains; nevertheless, translation-oriented RL remains challenged by noisy learning signals arising from Monte Carlo return estimation, as well as a large trajectory space that favors global exploration over fine-grained local optimization.
We introduce \textbf{PEGRL}, a \textit{two-stage} RL framework that uses post-editing as an auxiliary task to stabilize training and guide overall optimization.
At each iteration, translation outputs are sampled to construct post-editing inputs, allowing return estimation in the post-editing stage to benefit from conditioning on the current translation behavior, while jointly supporting both global exploration and fine-grained local optimization.
A task-specific weighting scheme further balances the contributions of translation and post-editing objectives, yielding a biased yet more sample-efficient estimator.
Experiments on English$\to$Finnish, English$\to$Turkish, and English$\leftrightarrow$Chinese show consistent gains over RL baselines, and for English$\to$Turkish, performance on COMET-KIWI is comparable to advanced LLM-based systems (DeepSeek-V3.2). Our code and a set of representative pretrained models are publicly available at \url{https://github.com/NJUNLP/peg-rl} and \url{https://huggingface.co/collections/DGME/pegrl}.
\end{abstract}

\section{Introduction}
\label{sec:intro}

Reinforcement learning (RL) techniques on large language models (LLMs) have achieved notable advances, exemplified by DeepSeek-R1~\citep{deepseekai2025deepseekr1incentivizingreasoningcapability}, which demonstrates strong performance on verifiable tasks such as mathematical reasoning and code generation. More recently, RL-based methods, such as GRPO~\citep{shao2024deepseekmathpushinglimitsmathematical}, have been adapted for machine translation through the use of automatic evaluation metrics, including BLEU~\citep{post-2018-call} and COMET-style metrics~\citep{rei-etal-2022-comet,rei2023scalingcometkiwiunbabelist2023}, as reward signals~\citep{he2025r1t1fullyincentivizingtranslation,feng2025mtr1zeroadvancingllmbasedmachine}. Despite these initial improvements, \citet{zeng2025shrinkingvarianceshrinkagebaselines} show that the Monte Carlo group-wise baseline used in GRPO may suffer from high estimation variance, causing instability in training and suggesting opportunities for further refinement.

Moreover, the large trajectory space in translation-oriented RL tends to emphasize \textbf{global exploration}, while providing limited optimization signals for fine-grained local improvements. Thus the corresponding translation quality is limited, especially for those low-resource translation directions, or those models that are not thoroughly trained.

\begin{figure}[t]
    \centering
    \includegraphics[width=0.43\textwidth]{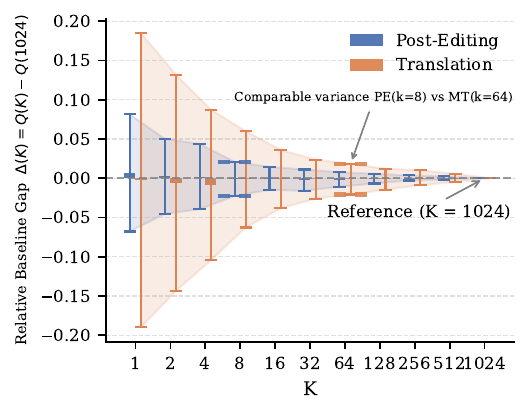}
\caption{Convergence of the GRPO \emph{group-wise baseline} with respect to the number of sampled trajectories $K$.
For each of 100 instances, we roll out 1024 trajectories and use the resulting baseline as a reference.
We report the mean and standard deviation (error bars) of the relative gap $\Delta(K)=Q(K)-Q(1024)$, where $K$ denotes the GRPO group size.
Larger $K$ reduces Monte Carlo variance (Appendix~\ref{sec:appendix_mc}), making $Q(1024)$ a potential proxy for the true baseline $\mathbb{E}[R]$.
Smaller error bars indicate more stable baseline estimation.}

    \label{fig:baseline}
\end{figure}

Compared to machine translation, post-editing refines an existing target-side draft with typically minor edits~\citep{melby-1984-machine,do2021review,lim2025mufumultilingualfusedlearning}, enabling \textbf{exploration within a more localized output neighborhood} for a given translation trajectory. As shown in Figure~\ref{fig:baseline}, post-editing also exhibits substantially lower baseline variance than translation, indicating potentially smaller policy gradient variance and more stable training. 

We propose to model the translation workflow as a two-step process: translation followed by post-editing. This allows post-editing to perform fine-grained exploration of the output space based on the initial translation trajectory for improved translations. 
As a subsequent stage, the post-edited outputs directly reflect the quality of the edited translation, providing more stable learning signals for optimizing the translation policy, which helps mitigate the noise introduced by return estimation in the translation task itself.

This workflow is formulated as a two-stage RL problem. Under Monte Carlo sampling, the joint policy gradient decomposes into additive contributions from translation and post-editing, naturally aligning with the intuition outlined in the previous paragraph (see Section~\ref{sec:problem} for details). Motivated by variance considerations in return estimation, we introduce a task-specific weighting scheme that places greater emphasis on the post-editing learning signal, whose baseline provides a more stable estimate of the optimized return, while down-weighting the translation term that involves additional variability. Although this results in a biased estimator, we demonstrate both theoretically and empirically that it is more sample-efficient than its unbiased counterpart. To optimize the weighted objective, we introduce \textbf{PEGRL}, a GRPO-based dual-task training framework in which translation produces on-policy data for post-editing at each iteration. 
This design enables comprehensive exploration while ensuring that the post-editing objective, whose return estimation benefits from conditioning on the current translation policy, is optimized under up-to-date translation behavior. 
Our experiments further show that local exploration induced by post-editing promotes more efficient global exploration (see Section~\ref{sec:ablation}).

We evaluate our approach on English$\to$Finnish, English$\to$Turkish, and English$\leftrightarrow$Chinese translation using the WMT24 and FLORES benchmarks. Across chrF++, COMETKIWI, and XCOMET, our method consistently outperforms the RL baseline MT-R1-Zero~\citep{feng2025mtr1zeroadvancingllmbasedmachine}, with particularly strong gains in less-covered language directions for the base model (EN$\to$FI and EN$\to$TR). Notably, on English$\to$Turkish, our COMET-KIWI scores are competitive with state-of-the-art LLMs such as DeepSeek-V3.2~\citep{deepseekai2025deepseekv32pushingfrontieropen}. These results demonstrate the effectiveness of our framework in leveraging more stable learning signals to improve translation quality. Our main contributions are as follows:
\begin{itemize}
\setlength{\itemsep}{0.3em}     
\setlength{\parsep}{0em}      
\setlength{\parskip}{0em}     
\setlength{\topsep}{0em}
    \item We analyze the policy gradients of post-editing and show that, under GRPO, the corresponding baseline is substantially easier to estimate than that of direct translation.

    \item We propose a two-stage translation framework that integrates translation and post-editing to enable joint global and local RL exploration, with task-specific gradient weighting that exploits the lower-variance post-editing signal for more stable and sample-efficient learning.

    \item We implement a GRPO-based dual-task RL framework and demonstrate its effectiveness on WMT24 and FLORES datasets (EN$\to$FI, EN$\to$TR, EN$\leftrightarrow$ZH), outperforming strong RL baselines, and achieving performance on some metrics and directions comparable to SOTA LLMs.

\end{itemize}

\section{Related Work}

\paragraph{LLMs for Post-Editing}  
LLMs have shown strong inference-time post-editing performance on WMT benchmarks~\citep{raunak2023leveraginggpt4automatictranslation}, but training-time LLM post-editing remains underexplored. \textit{Mufu}~\citep{lim2025mufumultilingualfusedlearning} uses a teacher--student setup with auxiliary translations but relies on a strong teacher and surface metrics. In contrast, we model post-editing as a learned policy within a unified RL framework, evaluated with both lexical and semantic metrics.

\paragraph{RL for Machine Translation}  
Inspired by RL successes on verifiable reasoning tasks \citep{deepseekai2025deepseekr1incentivizingreasoningcapability}, recent work adapts RL to translation using GRPO-style optimization with diverse reward designs. For example, R1-T1~\citep{he2025r1t1fullyincentivizingtranslation} combines COMET-based rewards with format signals, MT-R1-Zero~\citep{feng2025mtr1zeroadvancingllmbasedmachine} uses hybrid BLEU+COMET rewards, and DeepTrans~\citep{wang2025deeptransdeepreasoningtranslation} and SSR-Zero~\citep{yang2025ssrzerosimpleselfrewardingreinforcement} adopt trajectory-level generative rewards. These works focus primarily on reward design, while trajectory sampling and multi-stage or multi-task setups, which can significantly affect translation performance, have received less attention.

\paragraph{RL Algorithms for LLMs}  
Policy gradient methods for LLM post-training optimize expected reward:
\begin{align}
\mathcal{J}_{\mu}(\theta)
&= \mathbb{E}_{\tau \sim \pi_\theta(\cdot \mid q)}[R(\tau \mid q)], \notag \\
\nabla_\theta \mathcal{J}_{\mu}(\theta) 
&= \mathbb{E}_{\tau}[\widehat{A} (\tau,q) \,\nabla_\theta \log \pi_\theta(\tau \mid q)], \notag
\end{align}
with different methods computing the advantage $\widehat{A}$. PPO~\citep{schulman2017proximalpolicyoptimizationalgorithms} uses GAE~\citep{schulman2018highdimensionalcontinuouscontrolusing}, while GRPO~\citep{shao2024deepseekmathpushinglimitsmathematical} normalizes rewards over a group.

\section{Formal Framework}
\label{sec:problem}
We formulate machine translation and post-editing as sequential decision processes within a unified RL framework.
Let $q$ denote the initial translation prompt, and let $\tau_0 = (a_0, a_1, \ldots, a_{|\tau_0|})$ be the translation trajectory, where each $a_i$ is a translation token.
Conditioned on $\tau_0$, the model generates a post-editing trajectory $\tau_1 = (b_0, b_1, \ldots, b_{|\tau_1|})$, where each $b_i$ is a post-editing token.
The post-editing policy is additionally conditioned on an auxiliary prompt $p$, which, together with $q$, is derived from the same source input.

Let $\pi_\theta$ denote the LLM with parameters $\theta$.
We optimize a trajectory-level RL objective:
\begin{equation}
\max_{\theta}\;
\mathbb{E}_{\tau_0 \sim \pi_\theta(\cdot \mid q),\;\tau_1 \sim \pi_\theta(\cdot \mid p, \tau_0)}
\big[ R(\tau_1) \big].
\label{eq:obj}
\end{equation}

where the reward $R(\tau_1)$ is assigned to the post-editing trajectory.
The policy gradient of this objective is given by (see Appendix~\ref{app:pg} for details):
\begin{align}
&\nabla_\theta 
\mathbb{E}_{\tau_0 \sim \pi_\theta(\cdot \mid q),\;\tau_1 \sim \pi_\theta(\cdot \mid p, \tau_0)}
\big[ R(\tau_1) \big] \notag \\
&=
\mathbb{E}_{\tau_0,\tau_1}
\big[
\nabla_\theta \log \pi_\theta(\tau_1 \mid p, \tau_0)\, R(\tau_1)
\big] \notag \\
&\quad +
\mathbb{E}_{\tau_0}
\big[
\nabla_\theta \log \pi_\theta(\tau_0 \mid q)\;
\mathbb{E}_{\tau_1}[R(\tau_1)]
\big]. \label{eq:master_obj}
\end{align}

\subsection{Two-stage Monte-Carlo Estimation}
\label{sec:two_stage}
The policy gradient in the right hand side of Eq.~\eqref{eq:master_obj} involves nested expectations over
$\tau_0$ and $\tau_1$, which are intractable to compute exactly. 
To address this, we adopt a two-stage Monte Carlo estimator~\citep{metropolis1953equation} that removes the double expectation. 

Given a query $q$, we first sample $N$ trajectories 
$\{\tau_0^{(i)}\}_{i=1}^{N}$ from $\pi(\cdot \mid q)$. 
For each $\tau_0^{(i)}$, we then sample $M$ trajectories 
$\{\tau_1^{(i,j)}\}_{j=1}^{M}$ from $\pi(\cdot \mid p, \tau_0^{(i)})$.

We refer to the following term as the \emph{post-editing policy gradient}.
Using Monte Carlo sampling and expanding only the expectation over $\tau_0$, \textbf{the inner expectation reduces to a standard policy gradient} for post-editing conditioned on a fixed input $\tau_0^{(i)}$, derived via the log-derivative trick (Appendix~\ref{app:logderiv}).

\begin{align}
&\mathbb{E}_{\tau_0} \Big[ \mathbb{E}_{\tau_1} [\nabla_\theta \log \pi_\theta(\tau_1 \mid p,\tau_0)\,R(\tau_1)] \Big] \notag \\
&\approx \frac{1}{N} \sum_{i=1}^{N} \mathbb{E}_{\tau_1} [\nabla_\theta \log \pi(\tau_1 \mid p,\tau_0^{(i)})\,R(\tau_1)] \notag
\end{align}

Analogously, we refer to the following term as the \emph{translation policy gradient}.
Expanding only the expectation over $\tau_1$ yields \textbf{the policy gradient of the translation task} with respect to the input $q$, derived via the log-derivative trick (Appendix~\ref{app:logderiv}).

\begin{align}
&\mathbb{E}_{\tau_0} \Big[ \nabla_\theta \log \pi_\theta(\tau_0 \mid q)\;\mathbb{E}_{\tau_1}[R(\tau_1)] \Big] \notag \\
& \approx \mathbb{E}_{\tau_0^{(i)}}\Big[ \nabla_\theta \log \pi_\theta(\tau_0^{(i)} \mid q)\; \frac{1}{M} \sum_{j=1}^{M} R(\tau_1^{(i,j)})\Big]. \notag
\end{align}

\subsection{Optimization with GRPO}

Following the decomposition in Section~\ref{sec:two_stage}, we estimate both policy gradients using GRPO. For post-editing, the group-normalized advantage is computed directly from the post-editing reward $R(\tau_1)$. For translation, we use the average reward of the associated post-editing candidates to compute the group-normalized advantage for updating the translation policy.

\begin{equation}
\bar{R}^{(i)}_{\text{pe}} \;=\; \frac{1}{M} \sum_{j=1}^{M} R\!\left(\tau_1^{(i,j)}\right),
\label{eq:pe_avg_reward}
\end{equation}
where $\tau_1^{(i,j)}$ denotes the $j$-th post-editing trajectory associated with the $i$-th translation sample. Formally, this guides Stage~1 toward optimization directions that improve Stage~2 output quality.

\subsection{Variance Analysis of RL Baseline}
\label{sec:variance_analysis}

As illustrated in Section~\ref{sec:intro} and Fig.~\ref{fig:baseline}, 
starting from the baseline construction in GRPO advantage estimation, 
for a fixed draft trajectory $\tau_0$, 
the post-editing baseline
\(
\mathbb{E}_{\tau_1 \sim \pi_\theta(\cdot \mid \tau_0, p)} \big[ R(\tau_1) \big]
\)
provides a more accurate estimate than the translation-level baseline
\(
\mathbb{E}_{\tau_0 \sim \pi_\theta(\cdot \mid q)} \big[ R(\tau_0) \big].
\)
Moreover, \textbf{the translation gradient discussed in Section~\ref{sec:two_stage} }
requires estimating the nested expectation
\(
\mathbb{E}_{\tau_0 \sim \pi_\theta(\cdot \mid q),\;
\tau_1 \sim \pi_\theta(\cdot \mid p, \tau_0)} 
\big[ R(\tau_1) \big].
\)

The variance of the estimator,
$\mathrm{Var}_{\tau_0 \sim \pi_\theta(\cdot \mid q),\, \tau_1 \sim \pi_\theta(\cdot \mid \tau_0,p)}[R(\tau_1)]$,
decomposes into a non-negative between-$\tau_0$ term and
$\mathbb{E}_{\tau_0}[\mathrm{Var}_{\tau_1 \mid \tau_0}(R(\tau_1))]$ (Appendix \ref{sec:appendix_variance}).
The latter corresponds exactly to the variance of the post-editing estimator conditioned on a fixed $\tau_0$,
i.e., $\mathrm{Var}_{\tau_1 \sim \pi_\theta(\cdot \mid \tau_0,p)}[R(\tau_1)]$.
Therefore, conditioning on $\tau_0$ removes the between-$\tau_0$ variability and yields a lower-variance estimator in most cases. Accordingly, within our framework, the post-editing policy gradient baseline provides a lower-variance estimate than the translation policy gradient baseline.

\section{Methodology}

Based on the theoretical derivations presented earlier, we propose a GRPO-based RL training framework that jointly integrates the training of translation and post-editing. Unlike simple mixed RL training schemes \citep{deepseekai2025deepseekv32pushingfrontieropen}, the two tasks in our framework are tightly coupled: the translation component generates training data online for post-editing, while feedback from post-editing guides the translation model toward outputs that better facilitate downstream post-editing. We train a single model with both tasks simultaneously. \textbf{In a single training step}, trajectories are sampled from both tasks and contribute carefully weighted gradients (see Section~\ref{sec:problem_weight}) for model updates.

\subsection{Hybrid Sampling for Online Post-Editing Data Generation}

In our framework, translation and post-editing use separate prompts, reflecting the dual-task setup and avoiding the performance drop from multi-task prompts~\citep{khot2023decomposedpromptingmodularapproach}. The post-editing prompt is conditioned on the translation output and generated online during training~(Appendix~\ref{sec:appendix-prompts}).  

Thus we perform a hybrid sampling for both tasks. 
At each training step, for a translation pair $(\textit{src}, \textit{tgt})$, following the sampling procedure in Section~\ref{sec:two_stage}, we obtain $N$ translation trajectories $\{\textit{pred}_i\}_{i=1}^{N}$ and $N \times M$ post-editing trajectories $\{\textit{pe}_{i,j}\}_{i=1,j=1}^{N,M}$. In our main experiments, we set $N = M = 8$.

\subsection{Reward and Advantage}
Our reward function consists of three components.
First, the post-editing policy is trained with a quality estimation reward.
Second, the translation policy is optimized using the expected reward $\frac{1}{M}\sum_{j=1}^{M} R(\tau_1^{(i,j)})$ from the post-editing task.
Finally, we introduce a penalty term to discourage degenerate behaviors, such as unbounded or excessively long outputs.

\subsubsection{Reward for Post-editing}

The post-editing objective is defined to encourage quality improvements as
measured by a quality estimation function $f(\cdot)$.
Under the group-relative policy optimization (GRPO) framework, optimizing
improvement-based rewards is equivalent to directly optimizing absolute output
quality after group-advantage normalization.
A formal proof is provided in Appendix~\ref{app:grpo-qe-equivalence}.

To prevent degenerate updates, if the post-edited output does not modify the
initial translation ($\textit{pe}_{i,j} = \textit{pred}_i$) and its estimated
semantic quality falls below a threshold $\alpha$ (e.g., $\alpha = 0.95$,
which is used in all our experiments), we assign a zero reward.
Let $\mathcal{D}(u)$ denote this condition.
For each post-editing instance
$u = (\textit{src}, \textit{pred}_i, \textit{pe}_{i,j}, \textit{tgt})$,
the post-editing reward is defined as
\begin{equation}
R_{\mathrm{pe}}(u) =
\begin{cases}
0, & \mathcal{D}(u), \\
f(\textit{pe}_{i,j} \mid \textit{src}, \textit{tgt}), & \text{otherwise}.
\end{cases}
\label{eq:metric}
\end{equation}
In our subsequent experiments, $f(\cdot)$ is instantiated by COMETKiwi~\citep{rei2023scalingcometkiwiunbabelist2023} together with a surface-level metric, e.g., chrF++~\citep{popovic2017chrf++} or BLEU\citep{post-2018-call}.

\subsubsection{Reward For Translation}
When computing the translation reward, for each translation instance
$v = (\textit{src}, \textit{pred}_i, \textit{tgt})$,
we aggregate the contributions from all associated post-editing trajectories.
Let $\mathcal{C}(v)$ denote the set of post-editing trajectories corresponding
to $v$.
The translation reward is then defined as
\begin{equation}
R_{\text{mt}}(v) = \mathrm{Mean}\big( \{ R_{\text{pe}}(u) \mid u \in \mathcal{C}(v) \} \big).
\end{equation}

This formulation directly corresponds to the average post-editing reward defined in Eq.~\eqref{eq:pe_avg_reward}.

\subsubsection{Penalty Reward}
We disable explicit reasoning in Qwen3~\citep{yang2025qwen3technicalreport}, and thus do not use CoT during trajectory generation~\citep{wei2023chainofthoughtpromptingelicitsreasoning}. To discourage degenerate behaviors such as excessive repetition or unbounded outputs, any such trajectory is assigned a total reward of $-1$.

\begin{table*}[htbp]
    \centering
    \small
    \setlength{\tabcolsep}{4pt}
    \resizebox{\textwidth}{!}{%
    \begin{tabular}{
        p{3.8cm}
        *{3}{c} @{\hspace{6pt}}
        *{3}{c} @{\hspace{10pt}}
        *{3}{c} @{\hspace{6pt}}
        *{3}{c}
    }
    \toprule
    \multirow{1}{*}{\sc Model}
        & \multicolumn{3}{c}{\sc EN--FI (WMT24)}
        & \multicolumn{3}{c}{\sc EN--FI (FLORES200)}
        & \multicolumn{3}{c}{\sc EN--TR (WMT24)}
        & \multicolumn{3}{c}{\sc EN--TR (FLORES200)} \\
    \cmidrule(lr){2-4}
    \cmidrule(lr){5-7}
    \cmidrule(lr){8-10}
    \cmidrule(lr){11-13}
        & chrF++ & Kiwi & xCOM 
        & chrF++ & Kiwi & xCOM
        & chrF++ & Kiwi & xCOM
        & chrF++ & Kiwi & xCOM \\
    \midrule

    \multicolumn{13}{c}{\text{\textbf{Resource-Constrained LLM-based Translation Systems}}} \\
    \multicolumn{13}{@{}l}{\textcolor{lightgray}{\text{General-purpose LLMs}}} \\
    Qwen3-4B
        & 40.74 & 41.27 & 45.86 
        & 36.79 & 46.87 & 48.77 
        & 40.12 & 50.60 & 53.89 
        & 42.34 & 61.61 & 65.91  \\
    Qwen3-8B
        & 45.86 & 51.92 & 58.15 
        & 43.28 & 61.77 & 66.72 
        & 44.82 & 59.25 & 63.04 
        & 47.58 & 70.62 & 76.89  \\
    Qwen3-14B
        & 49.02 & 60.43 & 66.80 
        & 46.48 & 70.06 & 77.34 
        & 47.56 & 63.26 & 67.82 
        & 50.25 & 73.98 & 82.39  \\
    Qwen3-32B
        & 48.69 & 60.54 & 67.34 
        & 46.61 & 71.10 & 78.55 
        & 47.18 & 62.66 & 66.47 
        & 49.46 & 73.28 & 81.32  \\

    \multicolumn{13}{@{}l}{\textcolor{lightgray}{\text{MT-R1-Zero}}} \\
    MT-R1-Zero-4B
        & 43.42 & 56.04 & 61.34  
        & 40.49 & 65.20 & 69.41 
        & 43.25 & 61.57 & 63.85 
        & 45.22 & 72.70 & 78.14  \\
    MT-R1-Zero-8B
        & 47.45 & 62.16 & 69.79 
        & 44.51 & 72.42 & 78.78 
        & 47.10 & 63.96 & 68.98 
        & 48.72 & 75.92 & 83.53   \\

    \multicolumn{13}{@{}l}{\textcolor{lightgray}{\text{Ours}}} \\
    Ours-4B
        & 45.29 & 62.49 & 69.40  
        & 42.65 & 73.78 & 79.99  
        & 45.39 & 65.49 & 69.24  
        & 47.77 & 76.35 & 83.63   \\
    Ours-8B
        & \textbf{49.02 }& \textbf{67.90} & \textbf{76.49} 
        & \textbf{46.62} & \textbf{79.07} & \textbf{86.50}  
        & \textbf{48.04 }& \textbf{68.14} & \textbf{73.51}  
        & \textbf{50.41} & \textbf{78.26}& \textbf{87.25}  \\
    \midrule

\multicolumn{13}{c}{\textbf{LLM-based Translation Systems with Large Models or Extensive Data (only for reference)}} \\

    \multicolumn{13}{@{}l}{\textcolor{lightgray}{\text{General-purpose LLMs}}} \\
    Gemini-2.0-flash
    & 57.93 & 75.74  &  87.09
    &  58.09& 85.72 &   95.83
    & \textbf{57.42} & 68.05 & 77.65
    & 59.46 & 79.48 & 92.10\\
    
    OpenAI GPT-5.2
    & \textbf{59.44} & \textbf{76.26} & \textbf{87.83}
    & 59.56& \textbf{86.53} & \textbf{96.01}
    & 56.14 & \textbf{69.45}  &  \textbf{77.87}
    & 58.68 & \textbf{79.96} & \textbf{92.39}  \\

    DeepSeek-V3.2
        & 57.18 & 74.00  &  85.87
        & 56.53& 84.71 &  94.70
        & 56.21 & 68.13 &  77.84
        & 58.38 & 79.55&  91.70\\

    \multicolumn{13}{@{}l}{\textcolor{lightgray}{\text{Translation-specific LLMs}}} \\
    
    Seed-X-PPO-7B 
        & 57.48 & 74.72 & 86.51 
        & \textbf{62.57 }& 85.53 & 95.32 
        & 54.28 & 67.28 & 75.99 
        & \textbf{62.76} & 78.94 & 91.40  \\

    TowerInstruct-13B-v0.1
        & 44.58  &53.74  &  61.81
        & 43.96 &68.79  &  76.29
        & - & - &  -
        & - & - &  - \\

    \bottomrule
     \end{tabular}}
\caption{Results on translation directions (EN--FI and EN--TR). In the metric columns, \textbf{xCOM denotes xCOMET}. Models are grouped into resource-constrained LLM-based systems and large-scale or data-intensive LLM-based translation systems.
A dash (``--'') indicates that the model does not support the corresponding language direction.
MT-R1-Zero serves as the baseline, and both \emph{Ours} and MT-R1-Zero are trained with the same amount of data.
The best settings within each category are highlighted in \textbf{bold}.}

    \label{tab:main_mt_results}
\end{table*}

\subsubsection{Overall Reward and Advantage Computation}
\label{sec:overall-reward}

Let $x$ denote either a translation or a post-editing instance in our hybrid sampling step. Trajectories exceeding the token budget are penalized with $-1$. Valid trajectories receive task-specific rewards: 
\[
R(x)=
\begin{cases}
-1, & \text{if } x \text{ exceeds token budget},\\
R_{\text{pe}}(x), & \text{if } x = (\textit{src}, \textit{pred}_i, \textit{pe}_{i,j}, \textit{tgt}),\\
R_{\text{mt}}(x), & \text{if } x = (\textit{src}, \textit{pred}_i, \textit{tgt}).
\end{cases}
\]
After reward computation, the translation trajectories $\{\textit{pred}_i\}_{i=1}^{N}$ form a single GRPO group for advantage computation.  
The post-editing trajectories $\{\textit{pe}_{i,j}\}_{i=1,j=1}^{N,M}$ are divided into $N$ GRPO groups, each consisting of $\{\textit{pe}_{i,j}\}_{j=1}^{M}$ with independently computed advantages.  
All advantages are then used to optimize the policy via the GRPO policy gradient.

\subsection{Variance-Aware Gradient Weighting}
\label{sec:problem_weight}

As discussed in Section~\ref{sec:variance_analysis}, conditioning on a fixed draft trajectory $\tau_0$ yields a lower-variance estimator of the expected post-editing return, compared to a translation-level baseline that marginalizes over $\tau_0$.
As a consequence, the post-editing term in the policy gradient is associated with a more stable learning signal, while the translation-level term involves additional variability due to uncertainty over $\tau_0$.

Motivated by this discrepancy in the variance of their underlying return estimates and the different roles played by the two gradient terms, we introduce weighting coefficients to explicitly balance their relative contributions in Eq.~\eqref{eq:master_obj}.
This leads to a biased estimator, but allows for improved stability during optimization:
\begin{align}
	&
	\mathbb{E}_{\tau_0}\Big[
	\lambda_{\text{pe}}\,\mathbb{E}_{\tau_1}
	\big[
	\nabla_\theta \log \pi_\theta(\tau_1 \mid p,\tau_0)\, R(\tau_1)
	\big]
	\Big]
	\nonumber\\
	&\quad
	+ \lambda_{\text{mt}}\,
	\mathbb{E}_{\tau_0}
	\big[
	\nabla_\theta \log \pi_\theta(\tau_0 \mid q)\,
	\mathbb{E}_{\tau_1}[R(\tau_1)]
	\big].
\end{align}

In our main experiments, we set $\lambda_{\text{pe}} = M$ and $\lambda_{\text{mt}} = 1$,
placing greater emphasis on the post-editing signal, whose baseline is more directly aligned with the optimized return.
The effects of different $\lambda_{\text{pe}}$ and $\lambda_{\text{mt}}$ settings are further analyzed in Section~\ref{sec:grad_analysis}.

\section{Experiments}

\subsection{Experimental Setup}
\label{sec:experiments_setup}

\paragraph{Datasets.}
Following the capabilities of the base models and their relative coverage over different language pairs, we conduct experiments on two categories of translation directions:

\begin{itemize}
\setlength{\itemsep}{0.3em}
\setlength{\parsep}{0em}
\setlength{\parskip}{0em}
\setlength{\topsep}{0em}
    \item \textbf{Less-Covered Directions.}
    We conduct experiments with Qwen3-(4B, 8B)~\citep{yang2025qwen3technicalreport} on English$\rightarrow$Finnish (EN$\rightarrow$FI) and English$\rightarrow$Turkish (EN$\rightarrow$TR).
    For EN$\rightarrow$FI, 7K sentence pairs are sampled from the validation and test sets of WMT17--19~\citep{bojar-EtAl:2017:WMT1,bojar-EtAl:2018:WMT1,wmt19translate}, while for EN$\rightarrow$TR, 6K sentence pairs are sampled from the WMT17--18 test sets. For these language directions, the function $f(\cdot)$ in Eq.~\eqref{eq:metric} is defined as the sum of COMETKiwi and chrF++.
    
    \item \textbf{More-Covered Directions.}
    We conduct experiments with the smaller Qwen3-0.6B~\citep{yang2025qwen3technicalreport} on English$\leftrightarrow$Chinese (EN$\leftrightarrow$ZH), where the base model exhibits substantially stronger prior competence.
    The bidirectional parallel data are collected following prior work~\citep{feng2025mtr1zeroadvancingllmbasedmachine}. For these language directions, the function $f(\cdot)$ in Eq.~\eqref{eq:metric} is defined as the sum of COMETKIWI and BLEU.
\end{itemize}

Across all language pairs, evaluation is conducted on the WMT24 test sets~\citep{deutsch2025wmt24expandinglanguagecoverage} and the FLORES-200 benchmark~\citep{costa2022no}. In addition, for EN$\leftrightarrow$ZH, we further report results on a more challenging challenge set collected in prior work~\citep{cheng2025seedxbuildingstrongmultilingual}.

\paragraph{Baselines.}
Our baselines are grouped into two categories.
One category comprises advanced LLM-based translation systems characterized by large model sizes ($\ge$100B parameters) and/or extensive training data, including general-purpose LLMs such as Gemini-2.0-Flash,\footnote{https://ai.google.dev/gemini-api/docs/models} OpenAI GPT-5.2,\footnote{https://platform.openai.com/docs/models/gpt-5.2} and DeepSeek-V3.2~\citep{deepseekai2025deepseekv32pushingfrontieropen}, as well as translation-specialized models Seed-X-PPO-7B~\citep{cheng2025seedxbuildingstrongmultilingual} and TowerInstruct-13B-v0.1~\citep{tower_llm_2024}.
The other category targets resource-constrained settings and includes the Qwen3 family of general-purpose models and our primary comparison method, MT-R1-Zero.
Unlike our hybrid trajectory design that interleaves translation and post-editing, MT-R1-Zero samples trajectories only at the translation stage.
To control variables, we use the same prompts as in MT-R1-Zero and compute its translation quality using the post-editing reward ($R_{\text{pe}}$), reporting results under the \textbf{non-thinking} setting.

\paragraph{Evaluation Metrics.}
We evaluate translation quality along both surface-form and semantic dimensions.
For surface-level evaluation, we use chrF++~\citep{popovic2017chrf++} for Finnish and Turkish, which exhibit rich morphological variation, and BLEU~\citep{post-2018-call} for English and Chinese, where BLEU is well established.
For semantic evaluation, we adopt cost-effective COMET-style models: COMETkiwi~\citep{rei2023scalingcometkiwiunbabelist2023} as a reference-free metric and XCOMET~\citep{guerreiro2023xcomettransparentmachinetranslation} as a reference-based metric.
Both metrics are used in their XL variants.

\paragraph{Training Details.}
We adopt VeRL~\citep{sheng2024hybridflow} as the RL training framework.
During training, the input prompt length is capped at 768 tokens, and the maximum output length is set to 512 tokens.
Gradients are computed with an effective batch size of 128 samples per step using gradient accumulation, and the learning rate is set to $5\times10^{-7}$.

For GRPO sampling, our approach rolls out 8 translation candidates per input and further rolls out 8 post-editing outputs for each translation, resulting in 72 trajectories per data instance.
\textbf{Accordingly, all compared methods are trained with 72 rollouts per example to ensure a fair comparison.}

Main experiments are conducted on 1 × 8 NVIDIA A100 GPUs (80GB) and 4 × 8 NVIDIA H20 GPUs (96GB). Training for a single language direction takes approximately 24 hours, requiring around 400 training steps.

We further extend our experiments to the Ascend platform by training two 4B-scale models on Ascend 910 GPUs. Detailed implementation and experimental settings on Ascend are provided in Section~\ref{sec:plat} and Appendix~\ref{app:ascend}.
\begin{table*}[htbp]
\centering
\small
\setlength{\tabcolsep}{4pt}
\resizebox{\textwidth}{!}{%
\begin{tabular}{
    l
    *{3}{c} @{\hspace{6pt}}
    *{3}{c} @{\hspace{6pt}}
    *{3}{c} @{\hspace{10pt}}
    *{3}{c} @{\hspace{6pt}}
    *{3}{c} @{\hspace{6pt}}
    *{3}{c}
}
\toprule
\multirow{1}{*}{\sc Model}
& \multicolumn{3}{c}{\sc EN--ZH (WMT24)}
& \multicolumn{3}{c}{\sc EN--ZH (FLORES)}
& \multicolumn{3}{c}{\sc EN--ZH (Challenge)}
& \multicolumn{3}{c}{\sc ZH--EN (WMT24)}
& \multicolumn{3}{c}{\sc ZH--EN (FLORES)}
& \multicolumn{3}{c}{\sc ZH--EN (Challenge)} \\
\cmidrule(lr){2-4}
\cmidrule(lr){5-7}
\cmidrule(lr){8-10}
\cmidrule(lr){11-13}
\cmidrule(lr){14-16}
\cmidrule(lr){17-19}
& {\scriptsize BLEU} & {\scriptsize Kiwi} & {\scriptsize xCOM}
& {\scriptsize BLEU} & {\scriptsize Kiwi} & {\scriptsize xCOM}
& {\scriptsize BLEU} & {\scriptsize Kiwi} & {\scriptsize xCOM}
& {\scriptsize BLEU} & {\scriptsize Kiwi} & {\scriptsize xCOM}
& {\scriptsize BLEU} & {\scriptsize Kiwi} & {\scriptsize xCOM}
& {\scriptsize BLEU} & {\scriptsize Kiwi} & {\scriptsize xCOM} \\
\midrule

Qwen3-0.6B
& 26.20 & 58.57 & 64.45
& 30.25 & 70.54 & 77.18
& 21.67 & 64.33 & 63.90
& 15.00 & 63.87 & 75.62
& 19.32 & 72.85 & 88.34
& 15.52 & 58.83 & 62.91 \\

MT-R1-Zero
& 28.23 & 62.96 & 67.16
& 33.24 & 73.78 & 79.83
& 23.00 & 66.89 & 65.28
& 15.97 & \textbf{66.86} & 77.74
& 19.66 & 74.91 & 89.69
& 16.88 & 61.56 & 63.41 \\

Ours
& \textbf{29.23} & \textbf{64.63} & \textbf{68.40}
& \textbf{34.03} & \textbf{74.39} & \textbf{80.89}
& \textbf{24.44} & \textbf{68.89} & \textbf{67.00}
& \textbf{16.26} & 66.69 & \textbf{78.28}
& \textbf{20.68} & \textbf{75.49} & \textbf{90.48}
& \textbf{17.16} & \textbf{62.34} & \textbf{64.63} \\

\bottomrule
\end{tabular}}
\caption{Results on translation directions (EN$\leftrightarrow$ZH). In the metric columns, \textbf{xCOM denotes xCOMET}. Our method consistently outperforms baselines across different language directions and datasets. The best results are highlighted in \textbf{bold}.}
\label{tab:main_ze}
\end{table*}
\begin{figure*}[t]
    \centering
    \setlength{\tabcolsep}{0pt}
    \subfigure{
        \includegraphics[width=0.23\textwidth]{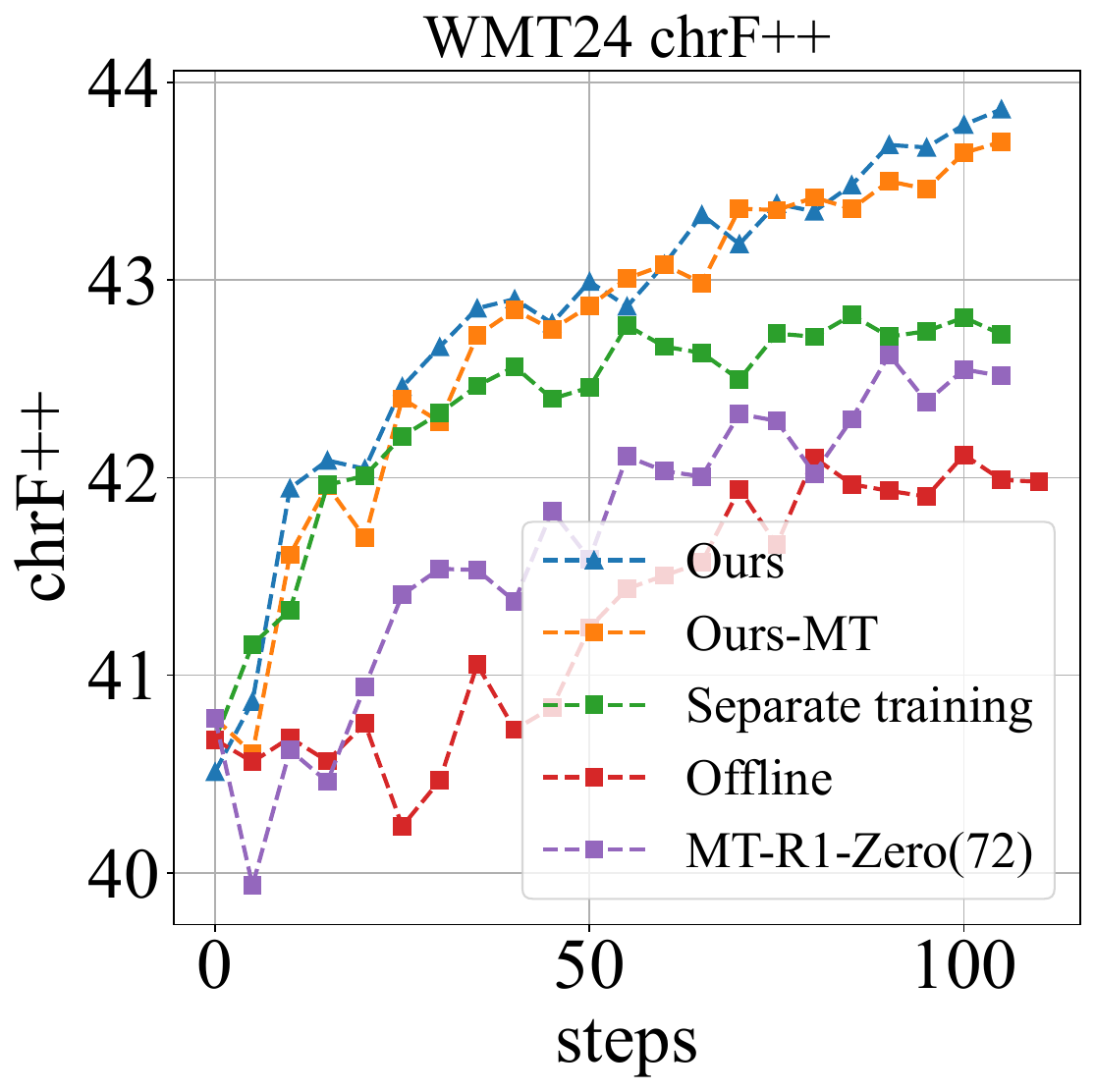}
    }
    \hfill
    \subfigure{
        \includegraphics[width=0.23\textwidth]{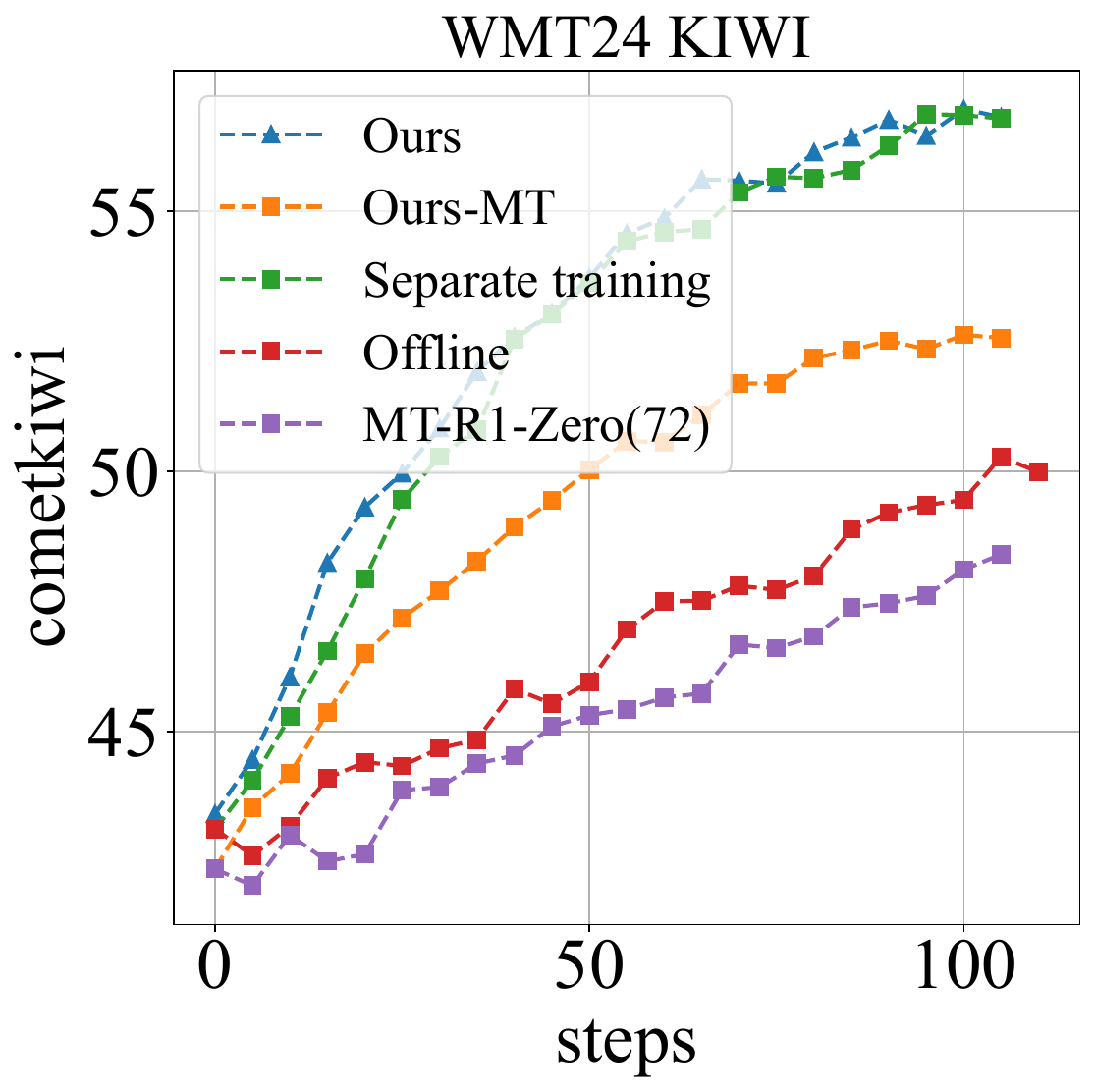}
    }
    \hfill
    \subfigure{
        \includegraphics[width=0.23\textwidth]{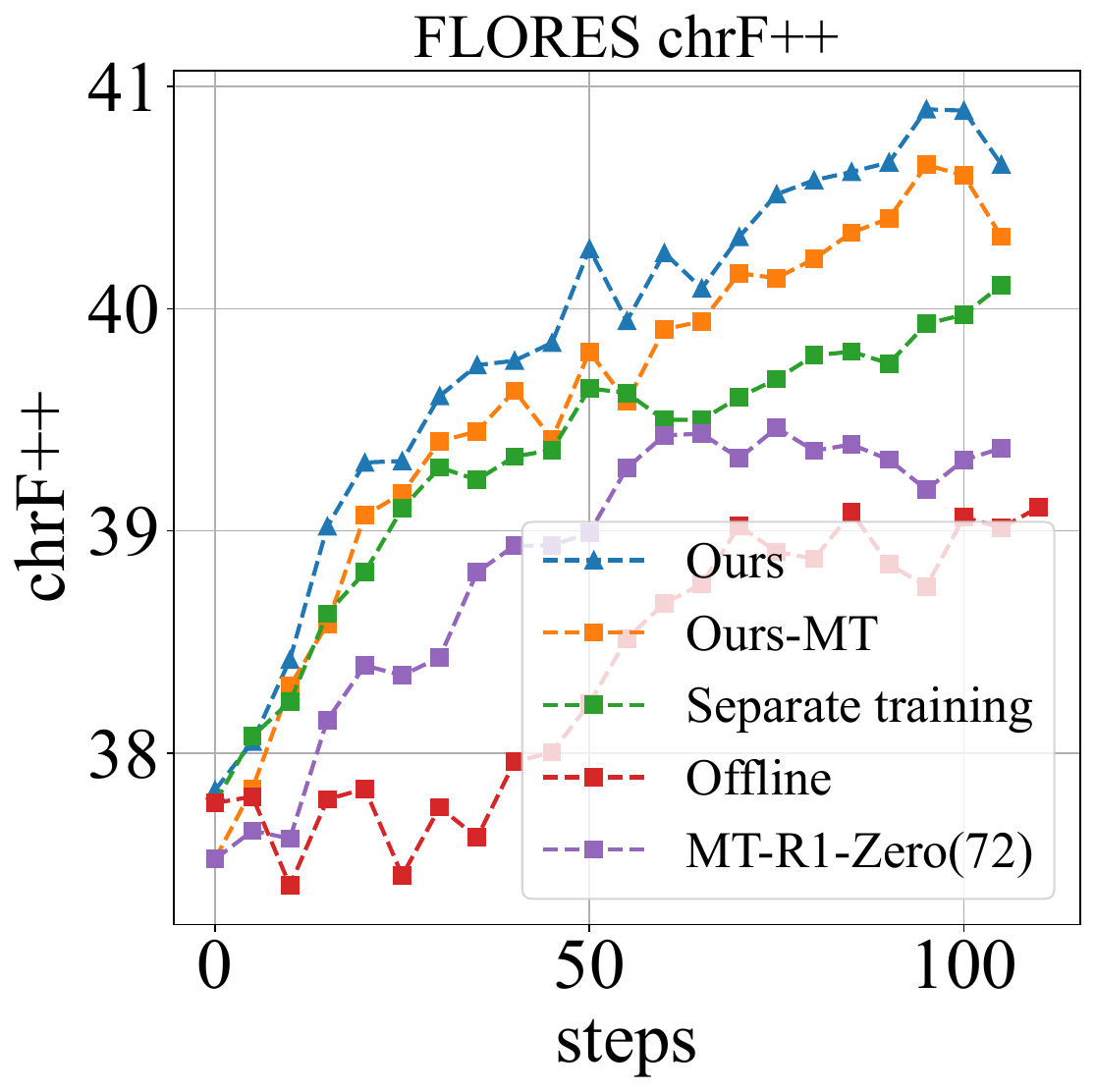}
    }
    \hfill
    \subfigure{
        \includegraphics[width=0.23\textwidth]{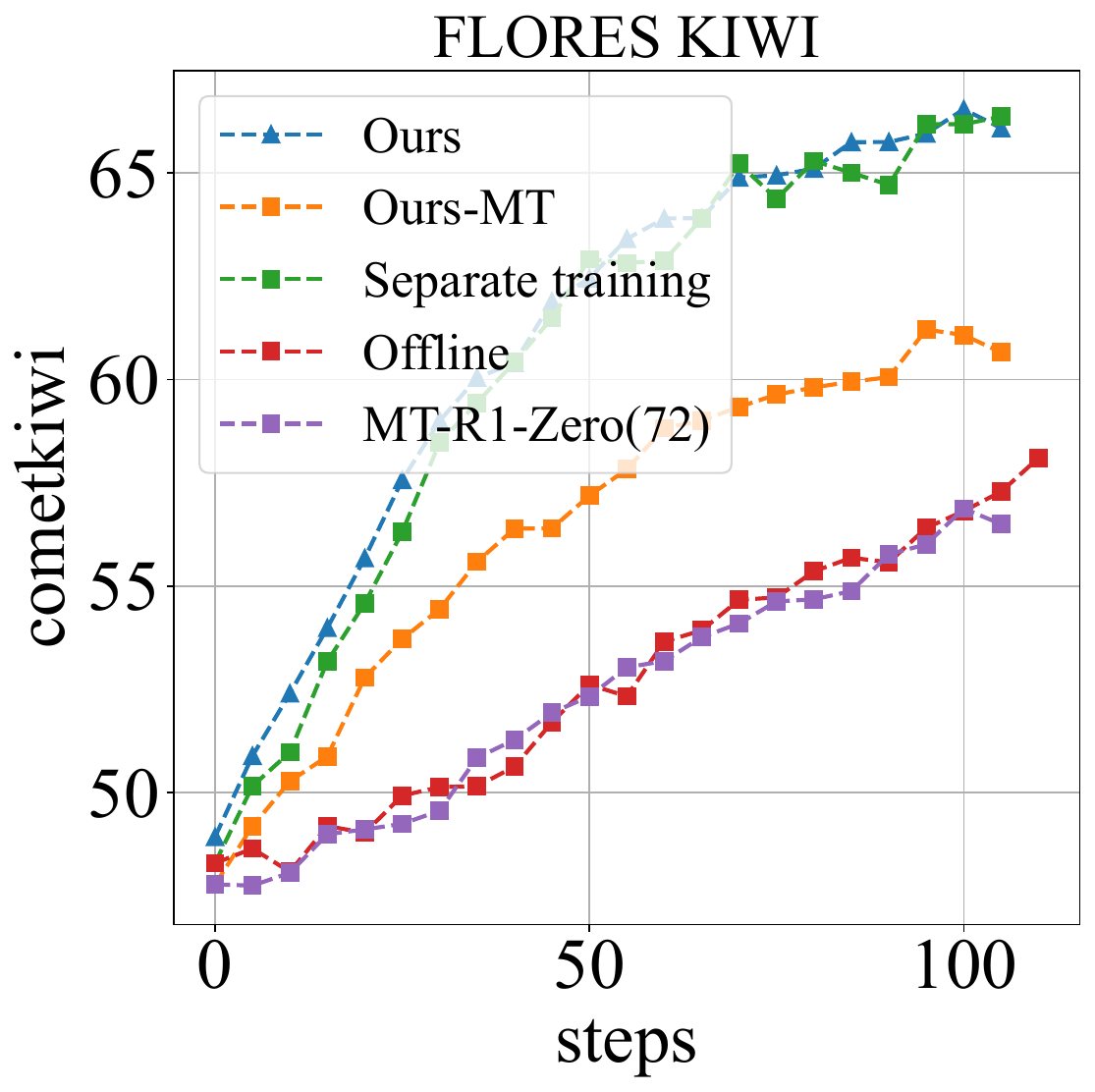}
    }
\caption{Ablation study of our framework components on WMT24 (EN$\rightarrow$FI) and FLORES200 (EN$\rightarrow$FI), evaluated using chrF++ and COMET-KIWI. All experiments are conducted on 1K EN$\rightarrow$FI translation instances sampled from the training set. In the offline setting, an additional 7K post-editing instances are used. Models are trained for 15 epochs; at each training step, 72 trajectories are sampled per instance, and evaluation is performed every 5 steps.}

    \label{fig:reward-ablation}
\end{figure*}

\subsection{Main Results}

\textbf{Our method outperforms pure GRPO under resource constraints.}  
As Table~\ref{tab:main_mt_results} shows, Ours-8B surpasses Qwen3-32B on EN$\rightarrow$FI, achieving COMET-KIWI gains of +7.36 (WMT24) and +7.97 (FLORES), with even larger improvements on XCOMET: +9.15 (WMT24) and +7.95 (FLORES). For EN$\rightarrow$TR, we observe consistent gains of approximately 5--6 points on COMET-KIWI and around 7 points on XCOMET. Ours-4B also outperforms Qwen3-32B on both COMET-KIWI and XCOMET.

Compared to MT-R1-Zero, our approach delivers larger improvements using the same base models. On EN$\rightarrow$FI (WMT24), Ours-4B improves XCOMET by +23.54, compared to +15.48 for MT-R1-Zero-4B, while Ours-8B achieves +18.34 versus +11.64 for MT-R1-Zero-8B. On EN$\leftrightarrow$ZH (Table~\ref{tab:main_ze}), our method consistently outperforms MT-R1-Zero across most metrics, with only a slight drop on COMET-KIWI for ZH$\rightarrow$EN.

\textbf{Our method achieves strong semantic gains with limited resources.}  
Table~\ref{tab:main_mt_results} shows that Ours-8B approaches state-of-the-art COMET-KIWI performance on EN$\rightarrow$TR, closely matching DeepSeek-V3.2 on WMT24 (68.14 vs.\ 68.13) and FLORES (78.26 vs.\ 79.55), despite being trained on only 6K examples with an 8B model, demonstrating the effectiveness of our framework.

\begin{table*}[t]
\footnotesize
\centering
\setlength{\tabcolsep}{6pt}
\begin{tabularx}{\linewidth}{@{}l X r r r r @{}}
\toprule
 &  & \textbf{chrF++} & \textbf{KIWI} & \textbf{SUM} &
$\mathbf{Avg_{pe}}$ \\
\midrule

\textbf{Source} &
``She had a real fear of food waste,'' Mr. Coe said.
& -- & -- & -- & -- \\

\midrule

\textbf{Reference} &
``Hän todellakin pelkäsi ruoan tuhlaamista,'' Coe sanoi.
& -- & -- & -- & -- \\

\midrule

\textbf{Base (T1)} &
\scalebox{0.95}{
{
``Hänellä oli todellinen järkytys ruoan hukkautumisesta,'' Coe hakeutui.
}}
& 0.25 & 0.4775 & 0.7321 & 0.7504 \\
& \multicolumn{5}{l}{\scalebox{0.85}{\textcolor{orange!70!black}{“She had a genuine shock about the causing of food to drown,” Coe hakeutui{\textcolor{red}{($\times$: to apply, to seek, to make one’s way)}}.}}} \\

\midrule

\textbf{Base (T2)} &
\scalebox{0.95}{
{
``Hänellä oli varsin vakava huuhtola ruoasta,'' sanoi herra Coe.
}}
& 0.23 & 0.2132 & 0.4407 & 0.4697 \\
& \multicolumn{5}{l}{\scalebox{0.85}{\textcolor{orange!70!black}{ “She had a rather serious huuhtola{\textcolor{red}{($\times$: possibly huuhtoutuminen ‘wash-away / leaching’)}} about food,” said Mr. Coe.}}} \\

\midrule

\textbf{M-Z (105 s, T1)} &
\scalebox{0.95}{
"Hänellä oli oltu todellinen huolia ruoan hajoamisesta", herra Coe sanoi.
}
& 0.38 & 0.5243 & 0.9004 & 1.0608 \\
& \multicolumn{4}{l}{\scalebox{0.85}{\textcolor{orange!70!black}{“She had been had real worries about the decomposition of food,” Mr. Coe said.}}}   \\

\midrule

\textbf{M-Z (105 s, T2)} &
\scalebox{0.95}{
"Hänellä oli todellinen korko ruoan hukkumisesta", herra Coe sanoi.
}
& 0.37 & 0.3219 & 0.6929 & 0.9250 \\
& \multicolumn{4}{l}{\scalebox{0.85}{\textcolor{orange!70!black}{“She had a genuine korko{\textcolor{red}{($\times$: interest rate / heel)}} about the drowning of food,” Mr. Coe said.}}}   \\

\midrule

\textbf{Ours (105 s, T1)} &
\scalebox{0.95}{
``Hänellä oli todellinen huoli ruoan häviöstä'', Coe sanoi.
}
& 0.40 & 0.8849 & 1.2826 & 1.2287 \\
& \multicolumn{4}{l}{\scalebox{0.85}{\textcolor{orange!70!black}{“She had a genuine concern about the loss of food,” Coe said.}}}   \\
$\hookrightarrow$\textbf{post-edit(T1):} &
\scalebox{0.95}{
``Hänellä oli todellinen huoli ruoan häviöstä'', Coe sanoi.
}
& 0.40 & 0.8849 & 1.2826 & -- \\
& \multicolumn{4}{l}{\scalebox{0.85}{\textcolor{orange!70!black}{ “She had a genuine concern about the loss of food,” Coe said. }}} \\
$\hookrightarrow$\textbf{post-edit(T1):} &
\scalebox{0.95}{
``Hänellä oli todellinen huoli ruoan käyttöstä'', Coe sanoi.
}
& 0.40 & 0.4765 & 0.8806 & -- \\
& \multicolumn{4}{l}{\scalebox{0.85}{\textcolor{orange!70!black}{ “She had a genuine concern about food käyttöstä{\textcolor{red}{($\times$: usage / use / utilization)}},” Coe said.}}} \\
\midrule

\textbf{Ours (105 s, T2)} &
\scalebox{0.95}{
``Hänellä oli todellinen huoli ruokaan menettymästä,'' Coe sanoi.
}
& 0.33 & 0.5793 & {0.9083} & {1.1774} \\
& \multicolumn{4}{l}{\scalebox{0.85}{\textcolor{orange!70!black}{“She had a genuine concern about ruokaan menettymästä{\textcolor{red}{($\times$: the loss of food)}},” Coe said.}}} \\
$\hookrightarrow$\textbf{post-edit(T2):} &
\scalebox{0.95}{
``Hänellä oli todellinen huoli ruoan häviämisestä,'' Coe sanoi.
}
& 0.36 & 0.8591 & 1.2229 & -- \\
& \multicolumn{4}{l}{\scalebox{0.85}{\textcolor{orange!70!black}{ “She had a genuine concern about food disappearing,” Coe said.}}} \\
$\hookrightarrow$\textbf{post-edit(T2):} &
\scalebox{0.95}{
``Hänellä oli todellinen huoli ruoan menetystä,'' Coe sanoi.
}
& 0.36 & 0.7472 & 1.1067 & -- \\
& \multicolumn{4}{l}{\scalebox{0.85}{\textcolor{orange!70!black}{“He had a genuine concern about food menetystä{\textcolor{red}{($\times$: loss / losing)}},” Coe said.}}} \\
\bottomrule
\end{tabularx}

\caption{Case study of model generation behavior. \textbf{Base} (T1/T2) denotes two translation trajectories sampled from the base model. \textbf{M-Z} (105s, T1/T2) refers to two trajectories produced by MT-R1-Zero after 105 training steps, while \textbf{Ours} (105s, T1/T2) are generated by our method. Each trajectory is followed by its \textbf{post-editing variants} ($\hookrightarrow$ post-edit). We analyze one training-set example using MT-R1-Zero and our 105-step checkpoint, selecting two representative trajectories from eight sampled translations. Scores are chrF++ and COMETKIWI (\textsc{Sum}); $\mathrm{Avg}_{\mathrm{pe}}$ denotes the average over post-edits. English translations are shown beneath each Finnish output. Misspelled Finnish words are left untranslated and annotated as \textcolor{red}{($\times$: text)}, where \emph{text} indicates the intended meaning (e.g., \emph{menetystä} \textcolor{red}{($\times$: loss / losing)}, denotes a misspelled form of a word meaning ``loss'' or ``losing''.).
}

\label{tab:case_study}
\end{table*}

\section{Analysis}

\subsection{Hybrid Sampling and Reward Analysis}
\label{sec:ablation}

This subsection examines the contribution of each component under different settings.

\begin{itemize}
\setlength{\itemsep}{0.3em}     
\setlength{\parsep}{0em}      
\setlength{\parskip}{0em}     
\setlength{\topsep}{0em} 
    \item \textbf{Ours}: Post-editing is trained with online-generated data. The translation trajectories are optimized using rewards derived from post-editing feedback, while the post-editing trajectories are optimized with $R_{\text{pe}}(x)$.

    \item \textbf{Ours-MT}: Trained the same with \textbf{Ours}. Evaluation is performed using only the first-stage draft translations, without applying post-editing.
    
    \item \textbf{Separate training}: Post-editing relies solely on online-generated data. Unlike \textbf{Ours}, the translation stage is trained only with the sum of COMETKIWI and chrF++.

    \item \textbf{Offline}: Post-editing is trained on static, pre-collected data, and both translation and post-editing models optimize only sum of COMETKIWI and chrF++. 

    \item \textbf{MT-R1-Zero(72)}: Used for comparison with \textbf{Ours-MT}, where the number 72 indicates that it uses 72 translation rollouts for gradient updates.

\end{itemize}
\textbf{Online generation of post-editing data is effective.}
As shown in Figure~\ref{fig:reward-ablation}, the \emph{Separate training} setting significantly outperforms its offline counterpart on the \text{COMETkiwi} metric, and also achieves a marginal improvement on \text{chrF++}.
This indicates that our framework does \textbf{not} simply optimize two independent tasks.

\textbf{Stage-1 translation reward aligns better with final post-edited quality.}  
Compared with \emph{Separate training}, our method differs only in the Stage~1 reward, defined as $\mathbb{E}_{\tau_1}[R(\tau_1)]$, which accounts for downstream post-editing. This yields an ~1-point improvement in chrF++ on the final outputs, while COMETKIWI remains comparable.

\textbf{Despite a smaller token budget, first-stage drafts from our framework outperform MT-R1-Zero.}
As shown in Figure~\ref{fig:reward-ablation}, \emph{Ours-MT} outperforms \emph{MT-R1-Zero(72)} on chrF++ and COMET-KIWI. Although each sample yields 8 translation and 64 post-editing trajectories, only the 8 drafts contribute to the policy gradient, compared to 72 translation trajectories in MT-R1-Zero. This indicates that post-editing enables fine-grained local exploration that guides translation toward higher-quality regions and indirectly promotes global exploration.

\begin{figure}[htbp]
    \centering
    \begin{minipage}[b]{0.23\textwidth}
        \includegraphics[width=\textwidth]{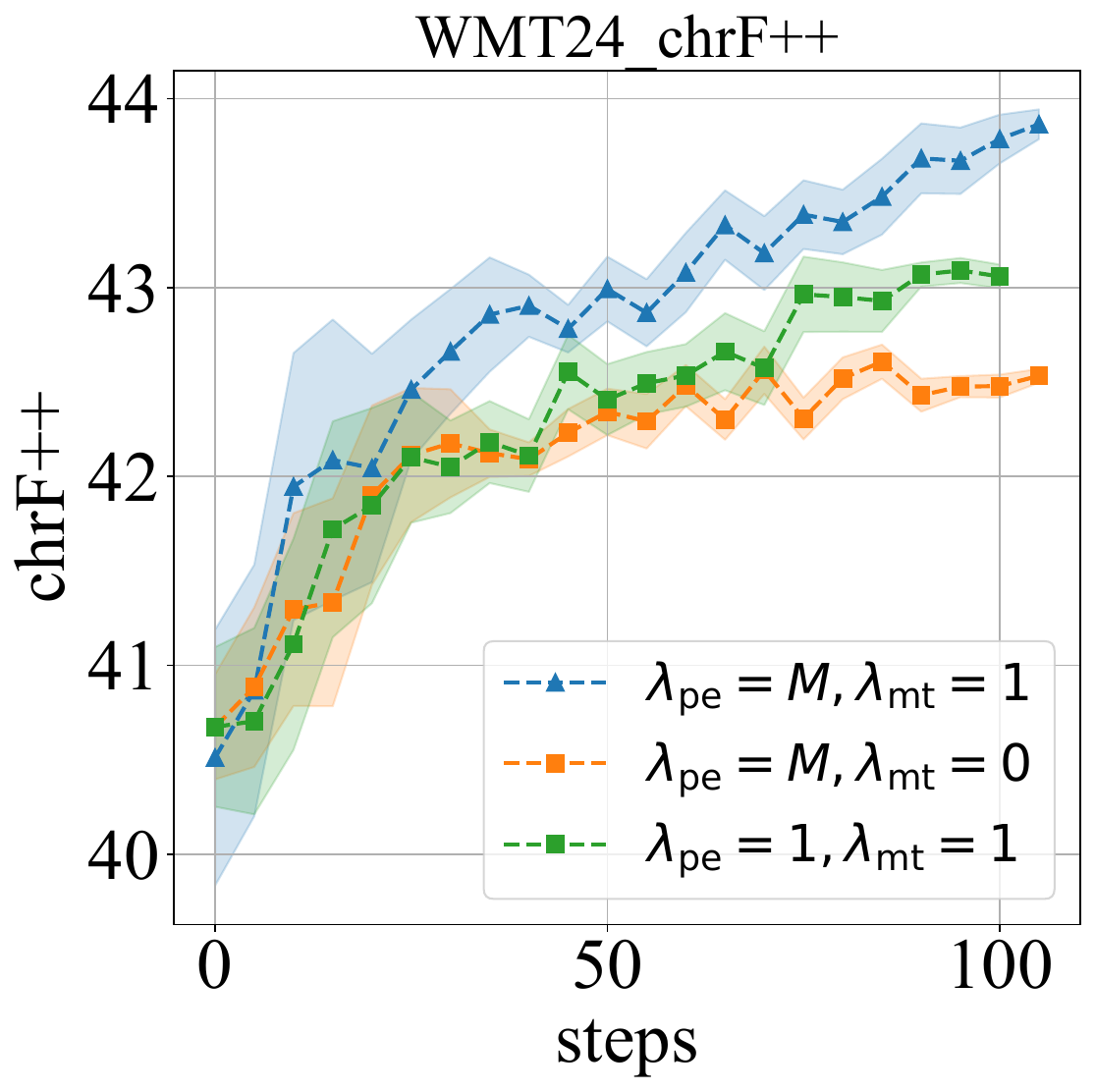}
        \centering
    \end{minipage}
    \hfill
    \begin{minipage}[b]{0.23\textwidth}
        \includegraphics[width=\textwidth]{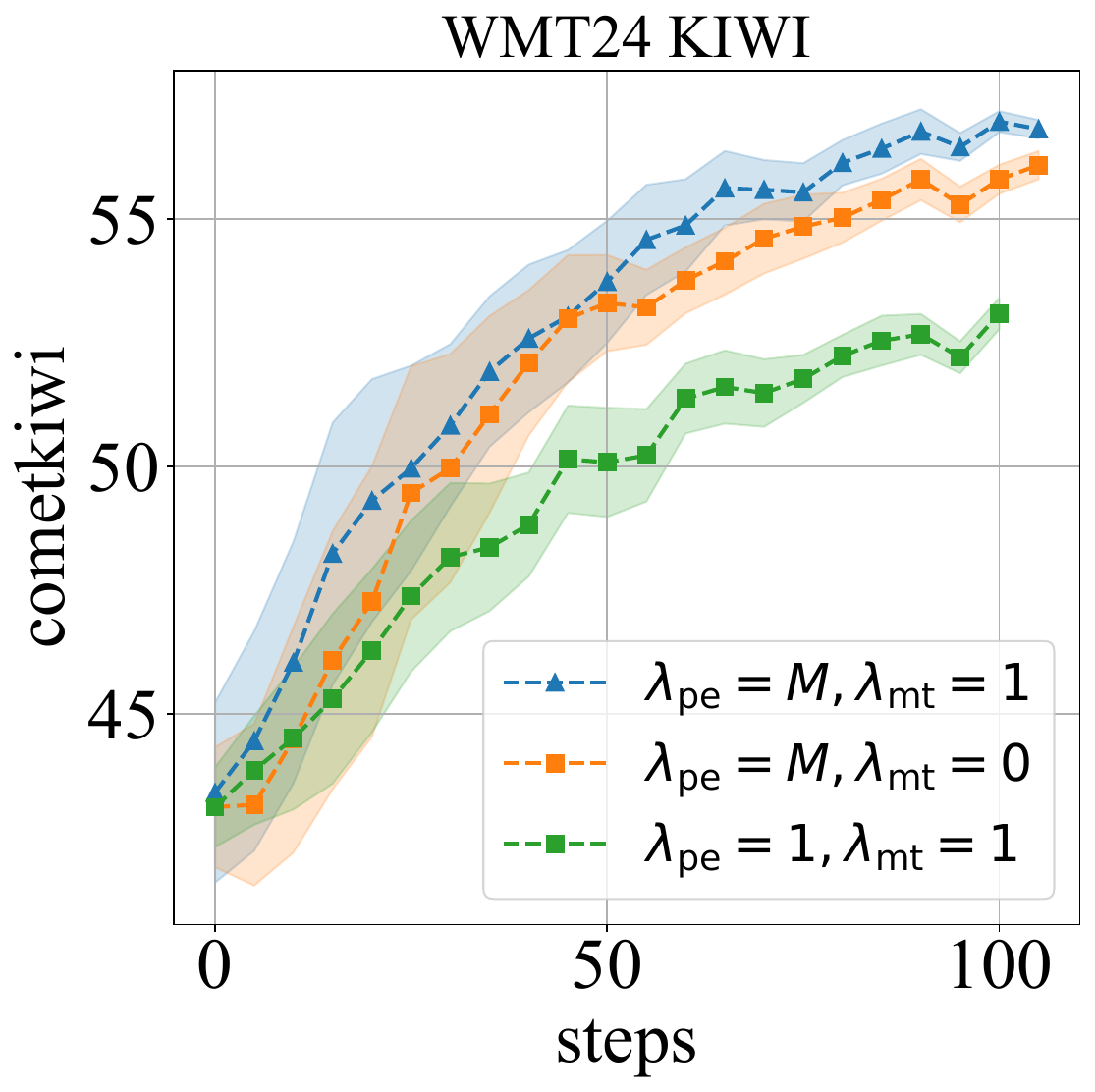}
        \centering
    \end{minipage}
\caption{Gradient Weight Analysis. Experimental settings are identical to those in Section~\ref{sec:ablation}.}

    \label{fig:grad_fig}
\end{figure}

\subsection{Gradient Weight Analysis}
\label{sec:grad_analysis}

As discussed in Section~\ref{sec:problem_weight}, the post-editing and translation gradient terms differ in their noise characteristics due to the variance of their underlying return estimators.
In this subsection, we analyze the impact of the scaling factors $\lambda_{\text{pe}}$ (for the post-editing policy gradient) and $\lambda_{\text{mt}}$ (for the translation policy gradient), which control the relative contributions of these two learning signals.

We consider the following experimental settings:
\begin{itemize}
	\setlength{\itemsep}{0.3em}
	\setlength{\parsep}{0em}
	\setlength{\parskip}{0em}
	\setlength{\topsep}{0em}
	\item $\lambda_{\text{pe}} = M$, $\lambda_{\text{mt}} = 1$: Places greater emphasis on the post-editing signal, whose baseline provides a more stable estimate of the optimized return, while keeping the number of trajectories balanced per step.
	\item $\lambda_{\text{pe}} = 1$, $\lambda_{\text{mt}} = 1$: Treats the two gradient terms equally, yielding an unbiased estimator but with increased sensitivity to noise from the translation-level return estimation.
	\item $\lambda_{\text{pe}} = M$, $\lambda_{\text{mt}} = 0$: Removes the translation-level term entirely, isolating its contribution to overall performance.
\end{itemize}

Figure~\ref{fig:grad_fig} and Table~\ref{tab:grad_tab} show that $\lambda_{\text{pe}} = M$ and $\lambda_{\text{mt}} = 1$ consistently achieve the best performance on WMT24, yielding the largest gains in chrF++ and improved COMET-KIWI.
Accordingly, we adopt this configuration as the default setting in all subsequent experiments.

\subsection{LLM-based Evaluation}

To assess whether the observed improvements extend beyond standard automatic metrics, we complement our evaluation with an LLM-as-a-judge analysis against MT-R1-Zero. We employ two independent LLM judges, both of which consistently prefer our method on the test set. 

We use the following experimental setup:
\begin{itemize}
	\setlength{\itemsep}{0.3em}
	\setlength{\parsep}{0em}
	\setlength{\parskip}{0em}
	\setlength{\topsep}{0em}
    \item \textit{Models.} We compare Ours-8B vs.\ MT-R1-Zero-8B on en$\to$fi and en$\to$tr, and Ours-0.6B vs.\ MT-R1-Zero-0.6B on en$\leftrightarrow$zh. 
    \item \textit{Data \& Judges.} Evaluation is conducted on WMT24, using two LLM judges: GPT-5.2 and Gemini-3-Pro-Preview.  
    \item \textit{Prompt Design.} To reduce positional bias, we randomly swap the order of the two candidate translations with equal probability.
\end{itemize}

As shown in Table~\ref{tab:llm_judge}, LLM preference results consistently favor our method over MT-R1-Zero across all directions and judges, in line with our main findings. While not fully ruling out metric-specific effects, the consistent preferences from two independent LLM judges, together with improvements on X-COMET—an evaluation metric not optimized during training—provide evidence that our method achieves stronger performance than the baseline across diverse evaluation signals, rather than merely better alignment with specific metrics.

\begin{table}[t]
\centering
\small
\setlength{\tabcolsep}{5pt}
\renewcommand{\arraystretch}{1.2}
\definecolor{tigray}{RGB}{160,160,160}
\begin{tabular}{@{}ll rr@{}}
\toprule
\textbf{Dir.} & \textbf{Method} & \textbf{Gemini} & \textbf{GPT} \\
\midrule

\multirow{3}{*}{en$\to$fi}
 & \textbf{Ours}           & \textbf{540/998} & \textbf{627/998} \\
 & MT-R1-Zero              & 310/998          & 361/998          \\
 & \textcolor{tigray}{T/I} & \textcolor{tigray}{148/998} & \textcolor{tigray}{10/998} \\
\midrule

\multirow{3}{*}{en$\to$tr}
 & \textbf{Ours}           & \textbf{570/998} & \textbf{592/998} \\
 & MT-R1-Zero              & 364/998          & 388/998          \\
 & \textcolor{tigray}{T/I} & \textcolor{tigray}{64/998}  & \textcolor{tigray}{18/998} \\
\midrule

\multirow{3}{*}{en$\to$zh}
 & \textbf{Ours}           & \textbf{495/998} & \textbf{540/998} \\
 & MT-R1-Zero              & 394/998          & 449/998          \\
 & \textcolor{tigray}{T/I} & \textcolor{tigray}{109/998} & \textcolor{tigray}{9/998}  \\
\midrule

\multirow{3}{*}{zh$\to$en}
 & \textbf{Ours}           & \textbf{459/998} & \textbf{517/998} \\
 & MT-R1-Zero              & 453/998          & 468/998          \\
 & \textcolor{tigray}{T/I} & \textcolor{tigray}{86/998}  & \textcolor{tigray}{13/998} \\

\bottomrule
\end{tabular}
\caption{LLM-based pairwise evaluation. Each entry is reported as $a/b$, where $a$ denotes the number of LLM-preferred samples and $b$ the total number of evaluated samples; ties or invalid judgments (T/I) may occur.}
\label{tab:llm_judge}
\end{table}

\subsection{Case Study}

\textbf{Base translation explores broadly.}  
Table~\ref{tab:case_study} illustrates two sampled translation trajectories (T1, T2). The base model generates outputs differing in lexical choice and structure, reflecting broad but unstable exploration. 

\textbf{Compared to MT-R1-Zero, our method yields draft translations that are semantically closer to the source and achieves higher average quality after post-editing.}
Ours (T1/T2) correctly captures the meaning of \emph{food waste} at the draft stage and achieves higher average final output quality (1.2287/1.1774) than MT-R1-Zero (1.0608/0.9250).

\subsection{Platform Generalization}
\label{sec:plat}
To improve the accessibility and reproducibility of our method, we further implement PEGRL on Ascend hardware, with full support for end-to-end training.

We compare the results obtained on the Ascend platform (Atlas 800I A3) at 400 training steps with those from our main experiments, and observe comparable performance, suggesting that our approach generalizes well across different hardware environments. Detailed comparisons are provided in Appendix~\ref{app:ascend}.

\section{Conclusion}

We present a two-stage RL framework for machine translation, which models translation and post-editing as sequential actions and enables both global and local RL exploration. 
By exploiting more stable learning signals derived from conditional return estimation in the post-editing stage, our framework supports more stable policy optimization. 
Furthermore, a task-specific weighting scheme balances the contributions of translation and post-editing objectives, improving sample efficiency under a fixed token budget. 
Our results highlight the importance of accounting for variance in return estimation when designing RL objectives, which may be critical for more complex tasks.


\section{Limitations}

While our framework demonstrates strong performance in translation experiments, its theoretical foundation relies on a task with a relatively small effective sampling space. We have only verified that post-editing can stabilize learning and improve convergence for translation; it remains unclear whether similar auxiliary tasks exist or provide comparable benefits in other domains, such as verifiable-reward tasks, mathematical reasoning, or code generation. Additionally, the reward density of auxiliary tasks in these domains may differ from translation, potentially limiting their impact. In terms of performance, our method still falls short of state-of-the-art LLM-based translation systems, particularly on surface-level metrics, as post-editing often involves minimal changes that are difficult to capture with such metrics. Moreover, due to limited resources, our experiments are restricted to low-resource scenarios and small models; the behavior in high-resource settings remains unexplored.

\section*{Acknowledgments}
We would like to thank the anonymous reviewers for their insightful comments. Shujian Huang is the corresponding author. This work is supported by National Science Foundation of China (No. 62376116), research project of Nanjing University-China Mobile Joint Institute (NJ20250038), the Fundamental Research Funds for the Central Universities (No. 2024300507), Fundamental and Interdisciplinary Disciplines Breakthrough Plan of the Ministry of Education of China (No. JYB2025XDXM118).

\bibliography{custom.bib}
\appendix

\section{Policy Gradient Derivation}
\label{app:pg}

We provide the detailed derivation of the policy gradient used in the main text.
We first review the log-derivative trick and then apply it to our two-stage trajectory objective.

\subsection{Log-Derivative Trick}
\label{app:logderiv}

For a parameterized distribution $p_\theta(x)$ and a scalar function $f(x)$, the gradient of the expectation can be written as:
\begin{align}
\nabla_\theta \mathbb{E}_{x \sim p_\theta}[f(x)]
&= \nabla_\theta \int p_\theta(x)\, f(x)\, dx \notag\\
&= \int \nabla_\theta p_\theta(x)\, f(x)\, dx \notag \\
&= \int p_\theta(x)\, \nabla_\theta \log p_\theta(x)\, f(x)\, dx \notag \\
&= \mathbb{E}_{x \sim p_\theta}
\big[
\nabla_\theta \log p_\theta(x)\, f(x)
\big].\notag
\end{align}

\subsection{Two-Stage Trajectory Objective}
\label{app:objective}



\section{Variance Analysis of Monte Carlo Estimators}
\label{sec:appendix_variance}

\subsection{Variance of Monte Carlo Estimation}
\label{sec:appendix_mc}
Let $Z \sim P$ and $\mu = \mathbb{E}_{Z \sim P}[f(Z)]$.
Given $N$ i.i.d.\ samples $\{Z_i\}_{i=1}^N$, the Monte Carlo estimator
\begin{equation}
\hat{\mu}_N = \frac{1}{N} \sum_{i=1}^N f(Z_i)
\end{equation}
has variance
\begin{equation}
\mathrm{Var}(\hat{\mu}_N)
= \frac{1}{N}\,\mathrm{Var}_{Z \sim P}\!\left[f(Z)\right].
\label{eq:mc_var}
\end{equation}
Thus, for fixed $N$, a larger population variance $\mathrm{Var}[f(Z)]$ results in a higher-variance estimator.

\subsection{Law of Total Variance}

Let $x \sim p(x)$ and $y \sim q(y \mid x)$.
For any function $f(y)$,
\begin{align}
\mathrm{Var}_{x,y}&\!\left[f(y)\right]
=
\mathbb{E}_{x}\!\left[
    \mathrm{Var}_{y \mid x}\!\left(f(y)\right)
\right]\\
&+
\mathrm{Var}_{x}\!\left(
    \mathbb{E}_{y \mid x}[f(y)]
\right).
\label{eq:total_variance}
\end{align}
The first term captures within-$x$ variability, while the second term reflects variability across different $x$.

\subsection{Variance Ordering of Nested Monte Carlo Estimators}

Consider the expectations
\begin{align}
\mu_0
&=
\mathbb{E}_{\tau_0 \sim \pi_\theta(\cdot \mid q)}\!\left[R(\tau_0)\right], \\
\mu_1(\tau_0)
&=
\mathbb{E}_{\tau_1 \sim \pi_\theta(\cdot \mid \tau_0, p)}\!\left[R(\tau_1)\right], \\
\mu
&=
\mathbb{E}_{\tau_0 \sim \pi_\theta(\cdot \mid q),\;
            \tau_1 \sim \pi_\theta(\cdot \mid \tau_0, p)}
\!\left[R(\tau_1)\right].
\end{align}

\paragraph{Estimators.}
Define the Monte Carlo estimators
\begin{align}
\hat{\mu}_0
&=
\frac{1}{N} \sum_{i=1}^N R(\tau_0^{(i)}),
\\
\tau_0^{(i)} &\sim \pi_\theta(\cdot \mid q), \\
\hat{\mu}_1(\tau_0)
&=
\frac{1}{M} \sum_{j=1}^M R(\tau_1^{(j)}),
\\
\tau_1^{(j)} &\sim \pi_\theta(\cdot \mid \tau_0, p), \\
\hat{\mu}
&=
\frac{1}{NM} \sum_{i=1}^N \sum_{j=1}^M
R(\tau_1^{(i,j)}),
\\
\tau_1^{(i,j)} &\sim \pi_\theta(\cdot \mid \tau_0^{(i)}, p).
\end{align}

\paragraph{Variance comparison.}
Applying Eq.~\eqref{eq:total_variance} with $x=\tau_0$ and $y=\tau_1$,
\begin{align}
\mathrm{Var}_{\tau_0,\tau_1}&\!\left[R(\tau_1)\right]
=
\mathbb{E}_{\tau_0}\!\left[
    \mathrm{Var}_{\tau_1 \mid \tau_0}\!\left(R(\tau_1)\right)
\right]\\
&+
\mathrm{Var}_{\tau_0}\!\left(
    \mathbb{E}_{\tau_1 \mid \tau_0}[R(\tau_1)]
\right).
\label{eq:nested_variance}
\end{align}
Since the second term is non-negative,
\begin{equation}
\mathrm{Var}_{\tau_0,\tau_1}\!\left[R(\tau_1)\right]
\;\ge\;
\mathbb{E}_{\tau_0}\!\left[
    \mathrm{Var}_{\tau_1 \mid \tau_0}\!\left(R(\tau_1)\right)
\right].\label{eq:com_varience}
\end{equation}
Therefore, under the same sampling budget,
\begin{equation}
\mathrm{Var}(\hat{\mu})
\;\ge\;
\mathrm{Var}\!\left(\hat{\mu}_1(\tau_0)\right),
\end{equation}
indicating that conditioning on a fixed $\tau_0$ yields a lower-variance Monte Carlo estimator.

\subsection{Other Supporting Evidence}
We also empirically approximate that the baseline of post-editing gradients is smaller than that of the MT policy gradients in our framework, as shown in Figure~\ref{fig:extra_baseline}.
\begin{figure}[htbp]
    \centering
    \includegraphics[width=0.43\textwidth]{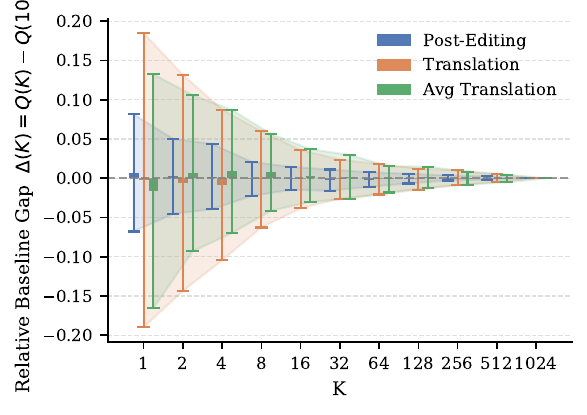}
\caption{
Convergence of the GRPO baseline estimation with respect to the number of sampled trajectories $K$ for post-editing, translation, and average translation (baseline estimation for the translation task in our framework).  
For each of 100 sampled instances, 1024 trajectories are rolled out and the resulting baseline is used as a reference.  
The figure reports the mean and standard deviation (error bars) of the relative baseline gap 
$\Delta(K)=Q(K)-Q(1024)$ computed from the first $K$ trajectories.  
Smaller error bars indicate lower variance in baseline estimation across instances, corresponding to more stable policy gradient estimates.
}

    \label{fig:extra_baseline}
\end{figure}

\section{Equivalence Between Absolute and Relative Rewards}
\label{app:grpo-qe-equivalence}

\begin{theorem}
\label{thm:grpo-qe-equivalence}
Under GRPO group-advantage normalization, optimizing post-editing rewards
defined by absolute quality scores is equivalent to optimizing rewards
defined by quality improvements.
\end{theorem}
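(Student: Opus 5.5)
The argument rests on the fact that, within a single GRPO group, every post-editing trajectory shares the same conditioning draft. Fix a source and a draft $\textit{pred}_i$, and consider the group $\{\textit{pe}_{i,j}\}_{j=1}^{M}$. Let the absolute reward be $r_j = f(\textit{pe}_{i,j}\mid \textit{src},\textit{tgt})$. Let the improvement reward be $r'_j = f(\textit{pe}_{i,j}\mid\textit{src},\textit{tgt}) - f(\textit{pred}_i\mid\textit{src},\textit{tgt})$, which we write as $r'_j = r_j - c_i$. The offset $c_i = f(\textit{pred}_i\mid\textit{src},\textit{tgt})$ depends only on the draft, so it is the same constant for every member of the group. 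This per-group invariance is what the proof will exploit.

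Next, recall that GRPO assigns each trajectory the advantage $\hat A_j = (r_j - \bar r)/(\sigma_r+\epsilon)$, where $\bar r = \frac1M\sum_k r_k$ and $\sigma_r$ is the group standard deviation. Both quantities are computed only within the group, which matches the partition into $N$ independent groups described in Section~\ref{sec:overall-reward}. From this I will check two routine identities:
\[
\bar{r'} = \bar r - c_i, \qquad \sigma_{r'} = \sigma_r .
\]
The first holds because the mean is linear. The second holds because variance is invariant under a constant shift. It follows that $r'_j - \bar{r'} = r_j - \bar r$, and therefore $\hat A'_j = \hat A_j$ for every $j$, including the $\epsilon$-regularized case.

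The last step transfers this to the training update. Since the advantages are identical, the GRPO surrogate loss is identical as well. This loss involves the clipped importance ratios multiplied by $\hat A_j$, plus a KL term that does not depend on the rewards at all. Consequently the gradients agree, and the optimization trajectories agree step by step. I will state the equivalence at the level of this update rule, which is the precise meaning of ``equivalent'' in Theorem~\ref{thm:grpo-qe-equivalence}. I will also observe that the translation-stage groups are unaffected, because the theorem concerns only the post-editing rewards.

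The main obstacle is the degenerate-case rule $\mathcal{D}(u)$ together with the $-1$ penalty. These assign fixed values rather than $f(\cdot)-c_i$, so on flagged trajectories the two reward definitions do not differ by a common constant. I plan to handle this by defining the improvement-based reward with the same overrides written in shifted form: the overrides become $-c_i$ and $-1-c_i$ in the relative formulation, which keeps the shift uniform across the whole group. Alternatively, I can state the theorem for the unpenalized quality term and note that the overrides are applied identically in both formulations. A secondary point is the case $\sigma_r=0$. There both advantage vectors are zero, so the equivalence still holds.
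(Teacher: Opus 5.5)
Your proposal is correct and is essentially the paper's argument: an offset that is constant within a GRPO group (the paper's generic $C$, your $c_i = f(\textit{pred}_i\mid\textit{src},\textit{tgt})$) cancels in $r_j-\bar r$ and leaves the group standard deviation unchanged, so the normalized advantages coincide; your handling of $\epsilon$, of $\sigma_r=0$, and of the $\mathcal{D}(u)$ and $-1$ overrides is extra care that the paper's proof omits. One caution about your side remark: the translation groups are unaffected only if $R_{\text{mt}}$ is still computed from absolute scores, because recomputing it as the mean of the improvement rewards subtracts the draft-dependent $c_i$, which is not constant across the single translation group $\{\textit{pred}_i\}_{i=1}^{N}$ and would therefore change those advantages.
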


\begin{proof}
Let $\mathrm{QE}(\mathrm{pe}_j)$ denote the quality score of the $j$-th
post-editing output, and define the quality improvement
$\Delta \mathrm{QE}(\mathrm{pe}_j) = \mathrm{QE}(\mathrm{pe}_j) - C$,
where $C$ is a constant baseline shared across all samples in the group.

For a group of $M$ post-editing outputs, the GRPO-normalized advantage is
\begin{equation}
A_j =
\frac{\mathrm{QE}(\mathrm{pe}_j)
      - \mathrm{Mean}(\{\mathrm{QE}(\mathrm{pe}_j)\}_{j=1}^{M})}
     {\mathrm{Std}(\{\mathrm{QE}(\mathrm{pe}_j)\}_{j=1}^{M})}.\notag
\end{equation}
Since subtracting a constant does not affect either the mean or the standard
deviation within a group, we equivalently obtain
\begin{equation}
A_j =
\frac{\Delta \mathrm{QE}(\mathrm{pe}_j)
      - \mathrm{Mean}(\{\Delta \mathrm{QE}(\mathrm{pe}_j)\}_{j=1}^{M})}
     {\mathrm{Std}(\{\Delta \mathrm{QE}(\mathrm{pe}_j)\}_{j=1}^{M})}.\notag
\end{equation}
Therefore, maximizing the GRPO objective based on absolute quality scores
is equivalent to maximizing the objective based on quality improvements.
\end{proof}

\section{Prompt Templates}
\label{sec:appendix-prompts}

\begin{tcolorbox}[
    title=Translation Prompt Template,
    boxrule=0.4pt,
    sharp corners,
]
Translate the following text into \{tgt\_lang\} without additional explanations:\\
$\{\textit{src}\}$
\end{tcolorbox}
\label{app:mt-prompt}

\begin{tcolorbox}[
    title=Post-editing Prompt Template,
    boxrule=0.4pt,
    sharp corners,
]
Given the source text:\\
$\{\textit{src}\}$\\
Improve the following draft \{tgt\_lang\} translation into a high-quality
\{tgt\_lang\} version, without explanations:\\
$\{\textit{pred}_i\}$
\end{tcolorbox}
\label{app:pe-prompt}

\section{Extended Results}

\subsection{Main Experiment}
\subsubsection{Evaluation}
\paragraph{Large Models.}  
For large-scale models such as Gemini-2.0-flash, DeepSeek-V3.2-Exp, and OpenAI GPT-5.2, we use the official APIs for evaluation. Only the prompt templates from Appendix~\ref{sec:appendix-prompts} are used, with the maximum output length set to 512 tokens. All other generation parameters are left at their default settings.

\paragraph{Small Models.}  
For smaller models, if an official translation prompt is available (e.g., Seed-X-PPO-7B, TowerInstruct-13B-v0.1), we use it; otherwise, we fall back to the prompt templates in Appendix~\ref{sec:appendix-prompts}. During evaluation, sampling parameters are set to recommended defaults, as summarized in Table~\ref{tab:sampling-params}.

\begin{table}[htbp]
\centering
\scriptsize
\setlength{\tabcolsep}{2pt}  
\begin{tabular}{lcccc}
\toprule
Model & Temp & Top-p & Top-k & Rep Pen \\
\midrule
Seed-X-PPO-7B         & 0.0 & -    & -  & - \\
TowerInstruct-13B-v0.1 & 0.0 & -    & -  & - \\
Qwen3                  & 0.6 & 0.95 & 20 & 1.05 \\
MT-R1-Zero             & 0.6 & 0.95 & 20 & 1.05 \\
Ours                   & 0.6 & 0.95 & 20 & 1.05 \\
\bottomrule
\end{tabular}
\caption{Sampling parameters for small models in translation experiments.}
\label{tab:sampling-params}
\end{table}

\subsubsection{Training Dynamics}
\label{app:training-dynamics}

As RL training exhibits non-monotonic convergence, we report the performance trajectories underlying the main experimental results.
Each training step processes 128 samples, and models are trained for 400 steps in total.
Evaluation is performed on the \textbf{test set} every 20 steps, and the corresponding metrics are plotted to illustrate training dynamics over time, as shown in Figures~\ref{fig:appendix_main_en2fi}--\ref{fig:appendix_main_zh2en}.

\subsection{Gradient Weight Analysis}
We report the metric values at step 100 for the three experimental settings (Figure~\ref{fig:grad_fig}) in a table for clarity (Table~\ref{tab:grad_tab}).

\subsection{Platform Generalization}
\label{app:ascend}

\begin{table*}[htbp]
\centering
\small
\setlength{\tabcolsep}{6pt}
\renewcommand{\arraystretch}{1.15}
\begin{tabular}{lcccccccc}
\toprule
\multirow{2}{*}{\textbf{Model}} 
& \multicolumn{2}{c}{\textbf{EN--FI (WMT24)}} 
& \multicolumn{2}{c}{\textbf{EN--FI (FLORES)}} 
& \multicolumn{2}{c}{\textbf{EN--TR (WMT24)}} 
& \multicolumn{2}{c}{\textbf{EN--TR (FLORES)}} \\
\cmidrule(lr){2-3}
\cmidrule(lr){4-5}
\cmidrule(lr){6-7}
\cmidrule(lr){8-9}
& chrF++ & Kiwi 
& chrF++ & Kiwi 
& chrF++ & Kiwi 
& chrF++ & Kiwi \\
\midrule

\textbf{Ours-4B (Nvidia)} 
& 45.29 & 62.49 
& 42.65 & 73.78 
& 45.39 & 65.49 
& 47.77 & 76.35 \\

\textbf{Ours-4B (Ascend)} 
& 45.15 & 63.07 
& 42.32 & 73.32 
& 45.06 & 65.41 
& 47.29 & 76.09 \\

\bottomrule
\end{tabular}
\caption{Comparison of Ours-4B trained on different hardware platforms. 
We report chrF++ and Kiwi on EN--FI and EN--TR translation tasks. 
Results show that the performance remains consistent across NVIDIA and Ascend platforms under comparable training settings.}
\label{tab:platform_generalization}
\end{table*}

Building upon verl, we extend pytorch\_lightning, which is used by COMET models, with NPU support to enable the entire pipeline to run on NPU servers. We conduct experiments on a server equipped with 8× Ascend 910 NPUs.

Under comparable training budgets, we evaluate the performance against models trained on NVIDIA H20 (96GB) GPUs, as shown in Table~\ref{tab:platform_generalization}. The results demonstrate that our approach maintains consistent performance across different hardware platforms.

We release the weights of our 4B model at \url{https://huggingface.co/collections/DGME/pegrl}. The NVIDIA-based model is re-trained on A6000 GPUs, while the Ascend-based model is trained on the aforementioned NPU server.

\begin{table*}[htbp]
    \centering
    \small
    \begin{tabular}{cccc}
        \toprule
        $\lambda_{\text{pe}}$ & $\lambda_{\text{mt}}$ & \multicolumn{1}{c}{chrF++} & \multicolumn{1}{c}{COMETKIWI} \\
        \midrule
$M$ & $1$ & 43.79 & 56.96 \\
        $M$ & $0$ & 42.48 (\textcolor{blue}{\(\downarrow 1.31\)}) & 55.80 (\textcolor{blue}{\(\downarrow 1.16\)}) \\
        $1$ & $1$ & 43.06 (\textcolor{blue}{\(\downarrow 0.73\)}) & 53.09 (\textcolor{blue}{\(\downarrow 3.87\)}) \\
        \bottomrule
    \end{tabular}
\caption{Performance at step 100 (corresponding to Figure~\ref{fig:grad_fig}). Values in subsequent rows are compared to the first row.}

    \label{tab:grad_tab}
\end{table*}

\begin{figure*}[htbp]
    \centering
    \setlength{\tabcolsep}{0pt}

    \subfigure{
        \includegraphics[width=0.29\textwidth]{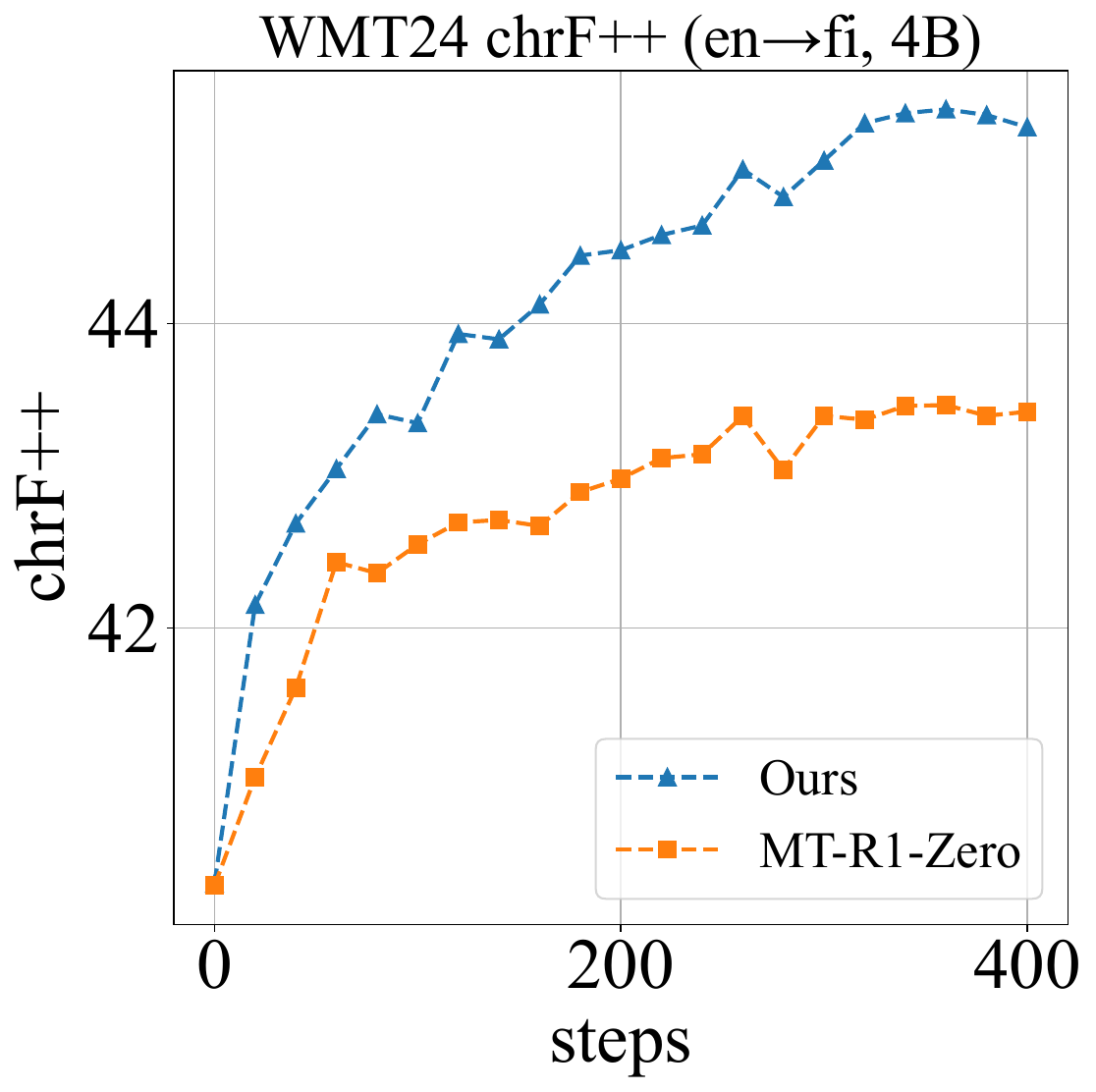}
    }
    \hfill
    \subfigure{
        \includegraphics[width=0.29\textwidth]{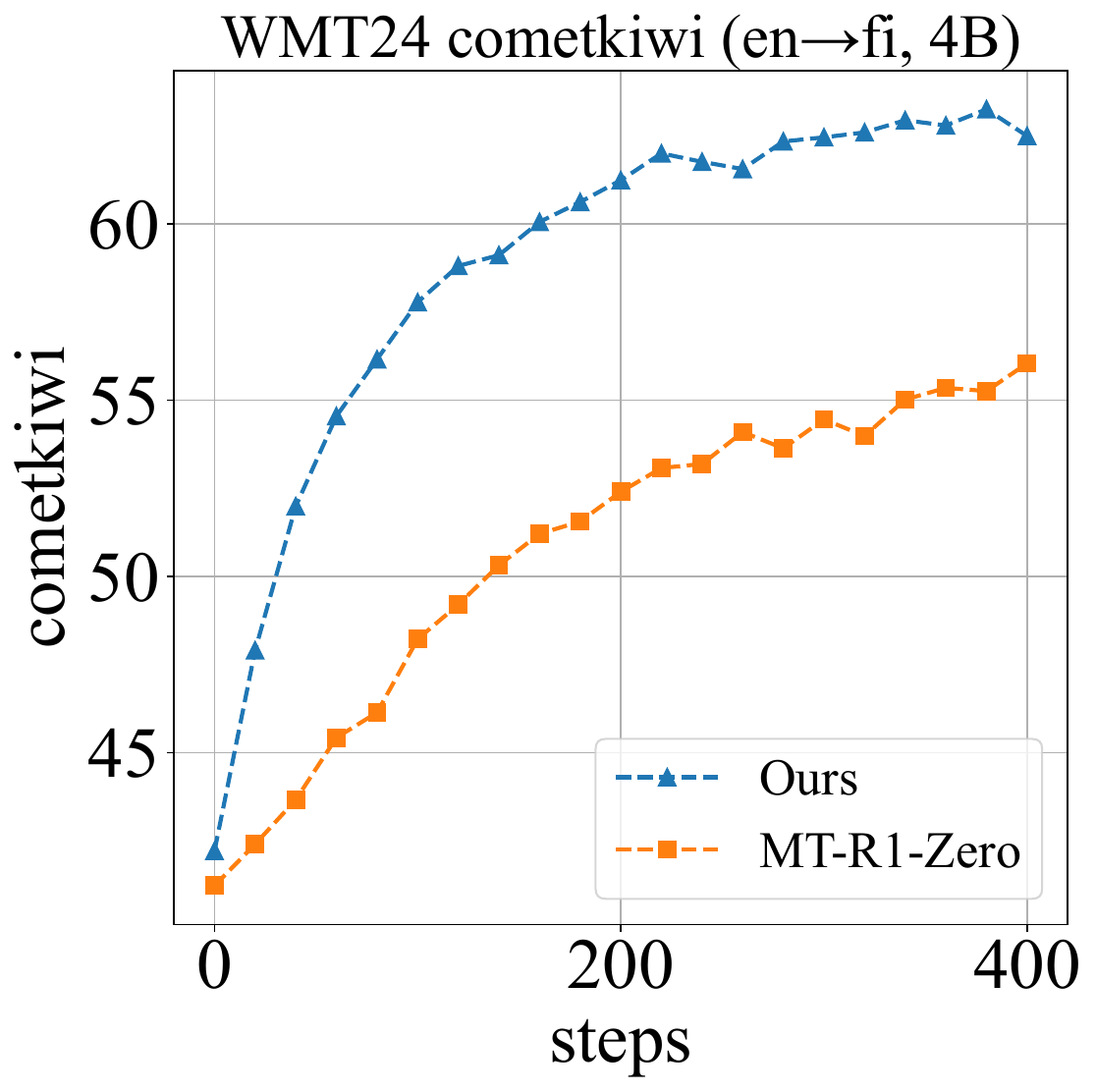}
    }
    \hfill
    \subfigure{
        \includegraphics[width=0.29\textwidth]{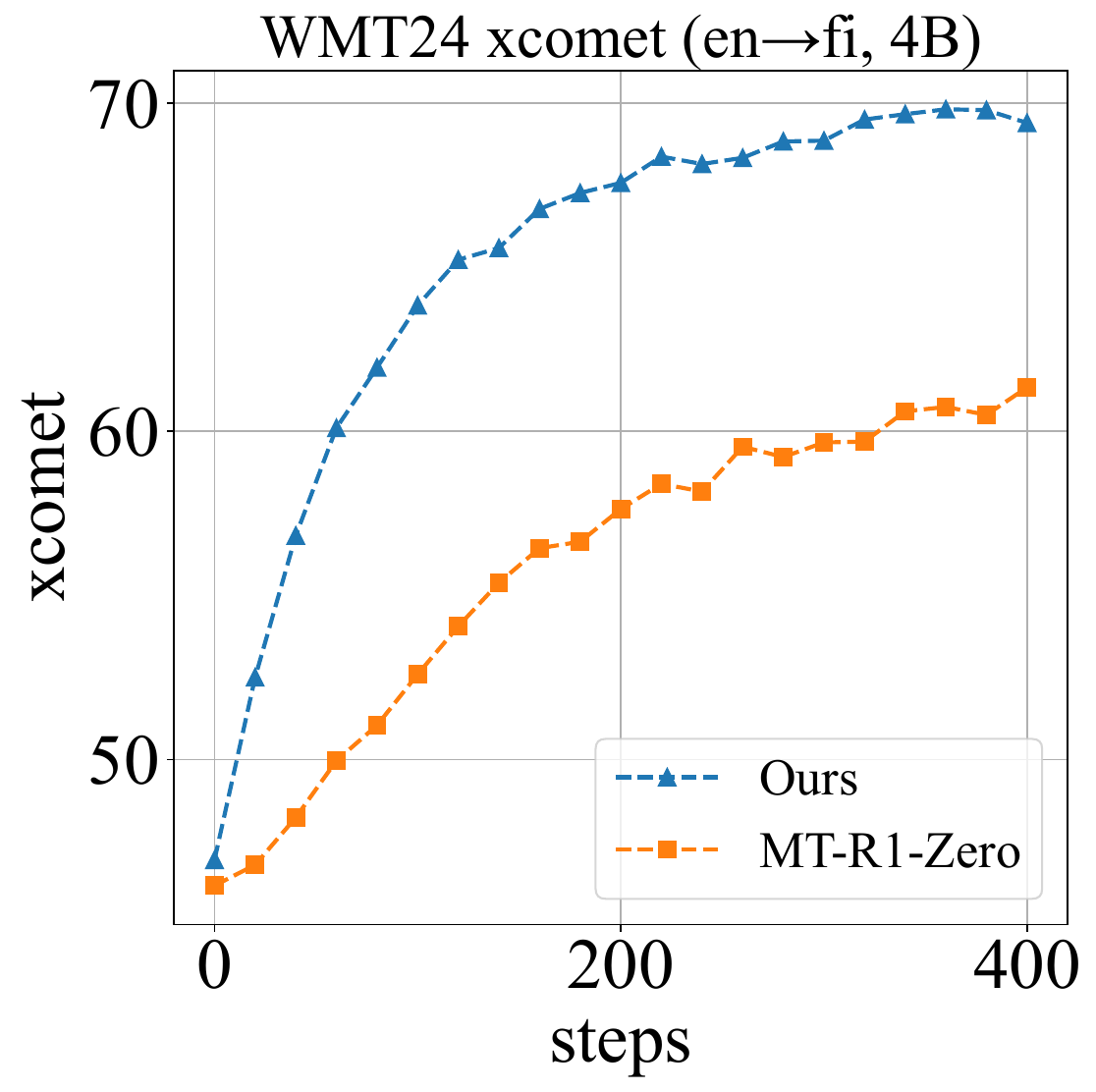}
    }

    \vspace{2mm}

    \subfigure{
        \includegraphics[width=0.29\textwidth]{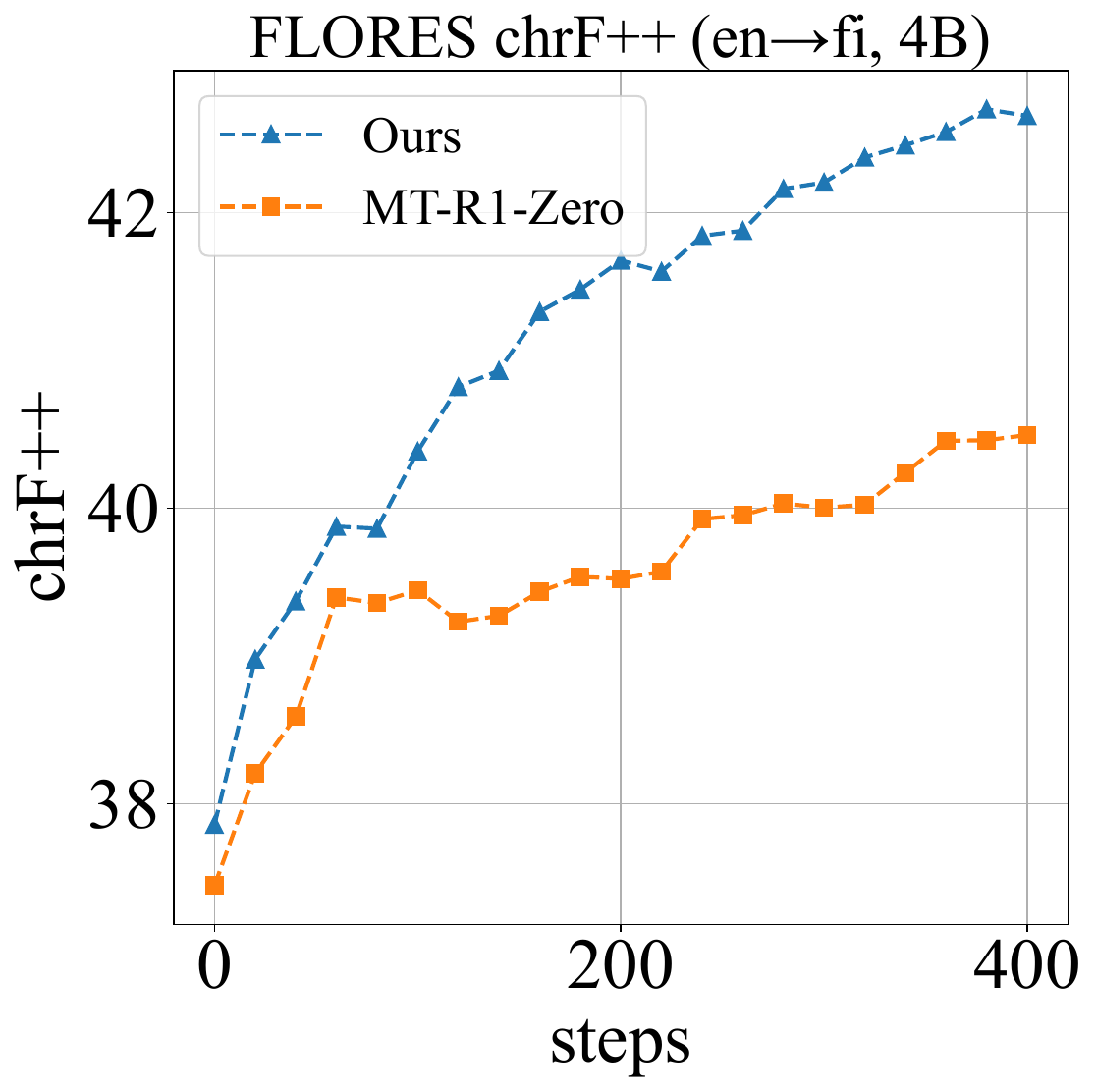}
    }
    \hfill
    \subfigure{
        \includegraphics[width=0.29\textwidth]{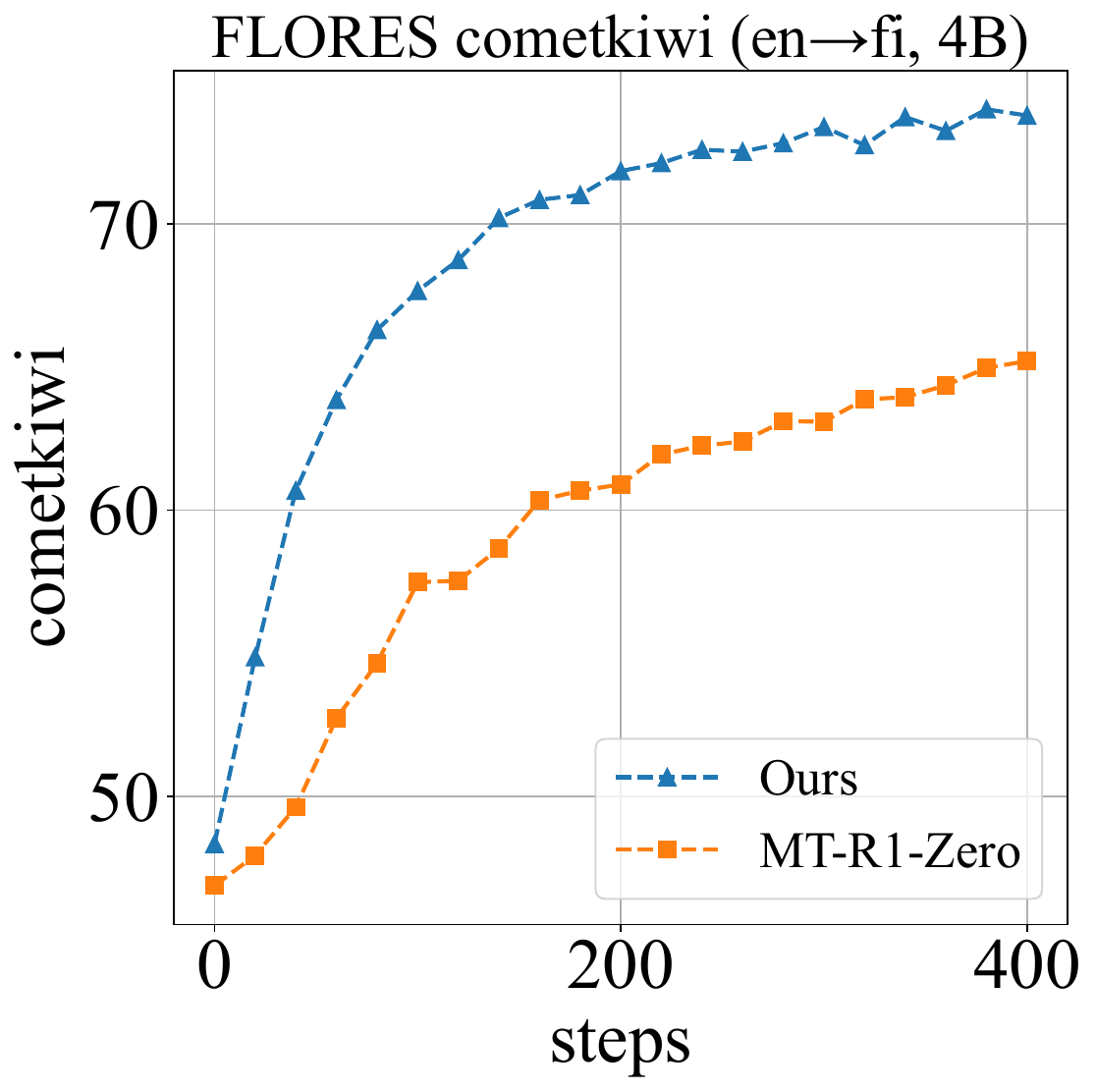}
    }
    \hfill
    \subfigure{
        \includegraphics[width=0.29\textwidth]{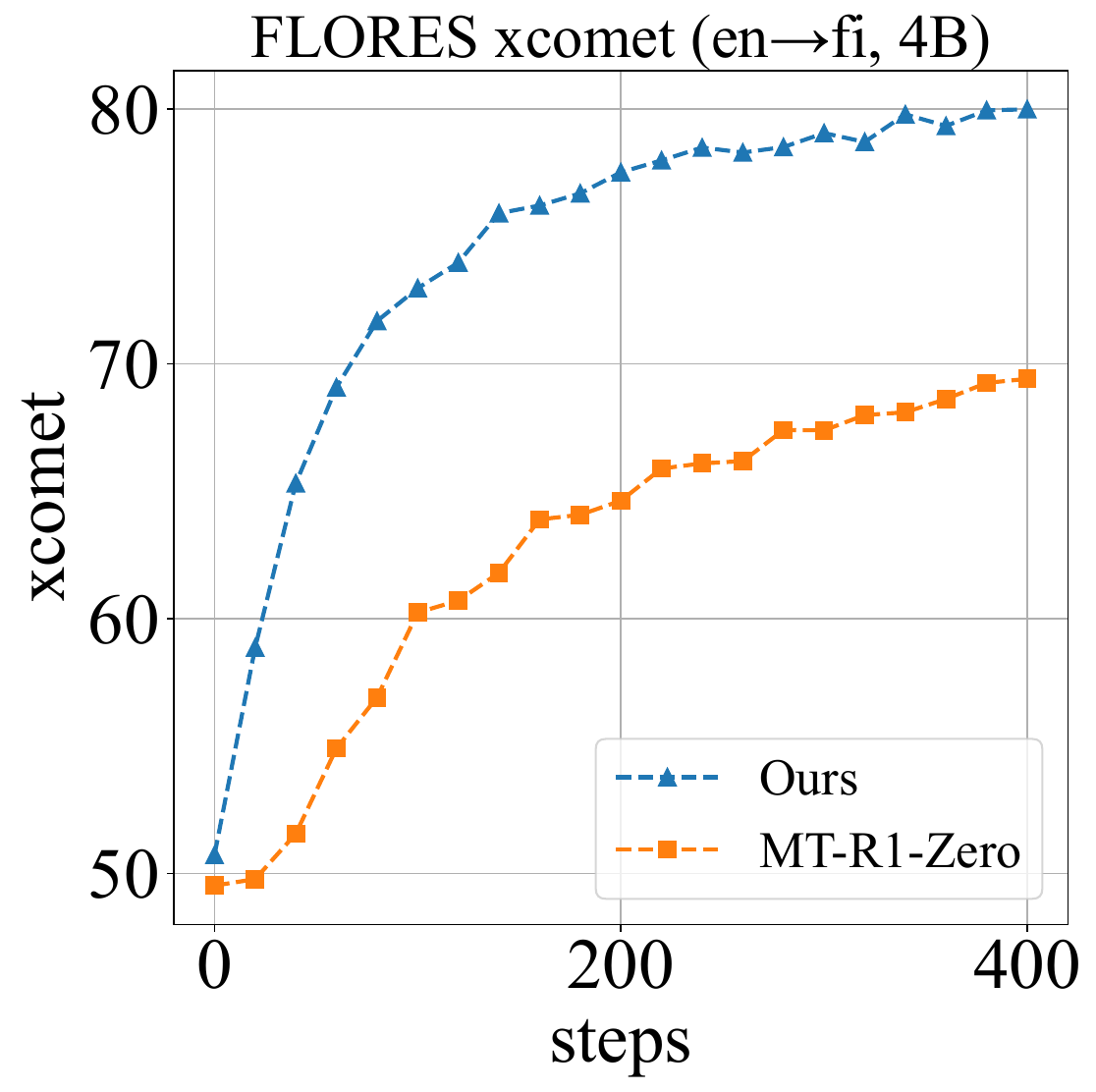}
    }

    \vspace{4mm}

    \subfigure{
        \includegraphics[width=0.29\textwidth]{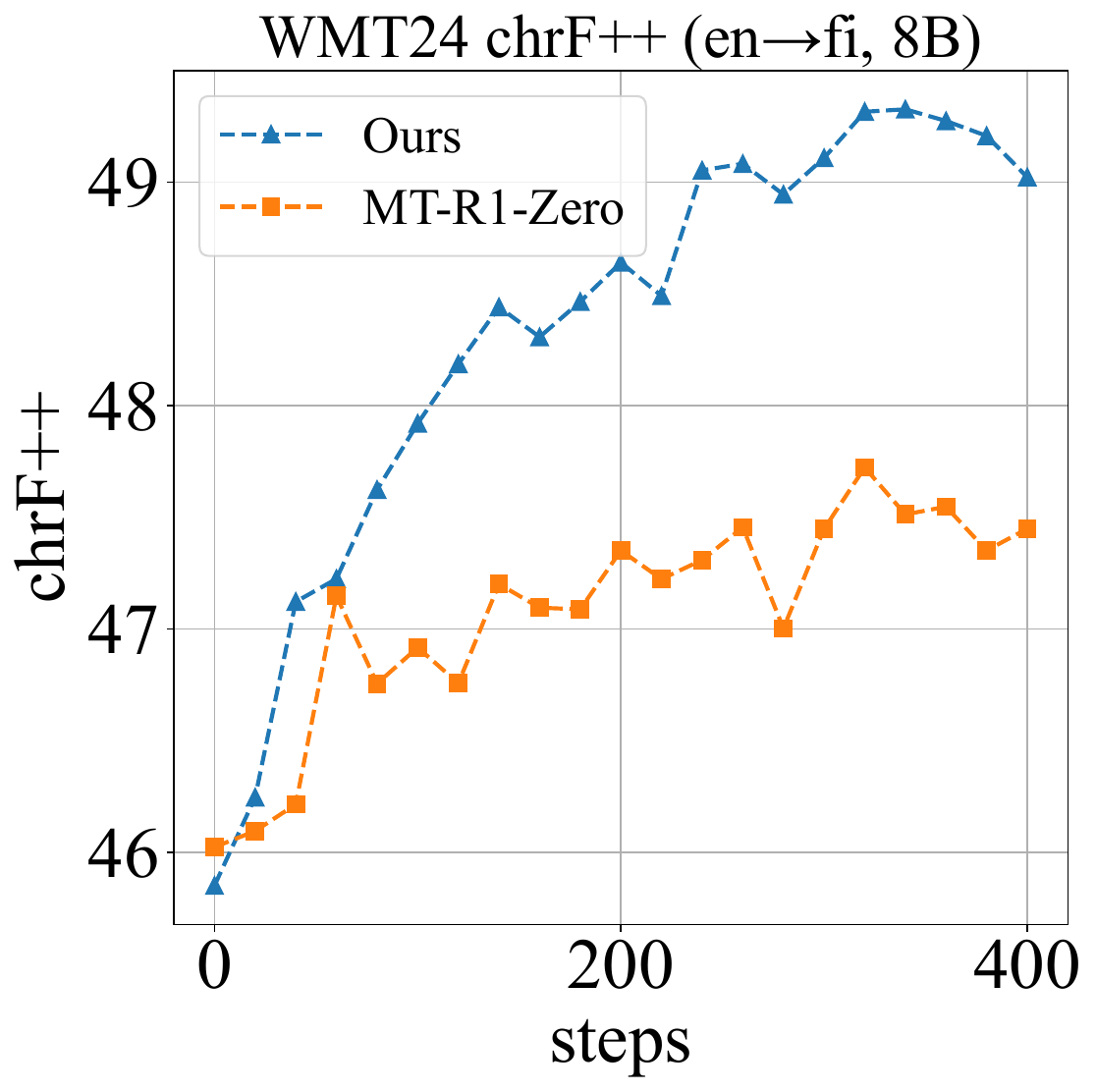}
    }
    \hfill
    \subfigure{
        \includegraphics[width=0.29\textwidth]{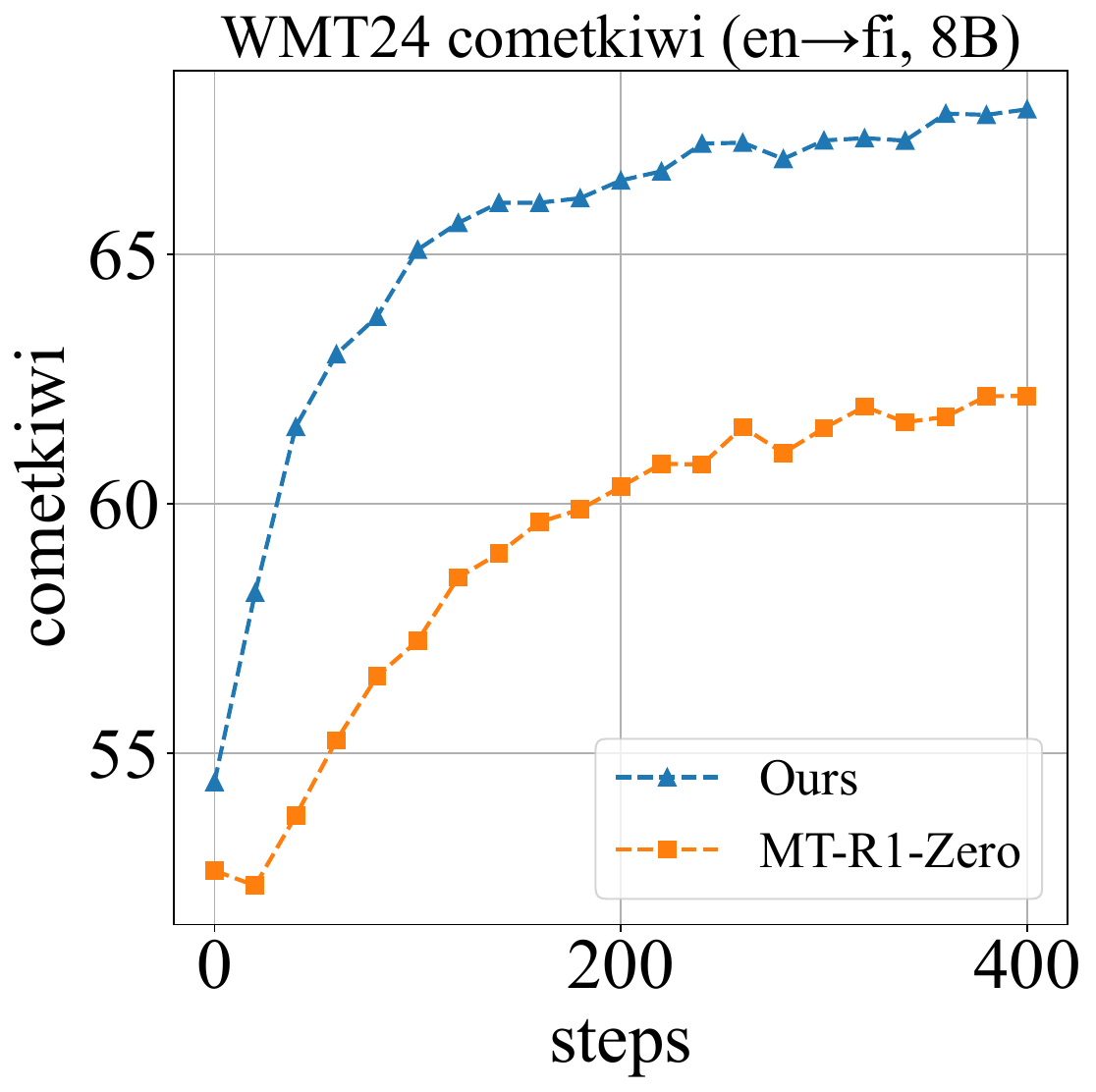}
    }
    \hfill
    \subfigure{
        \includegraphics[width=0.29\textwidth]{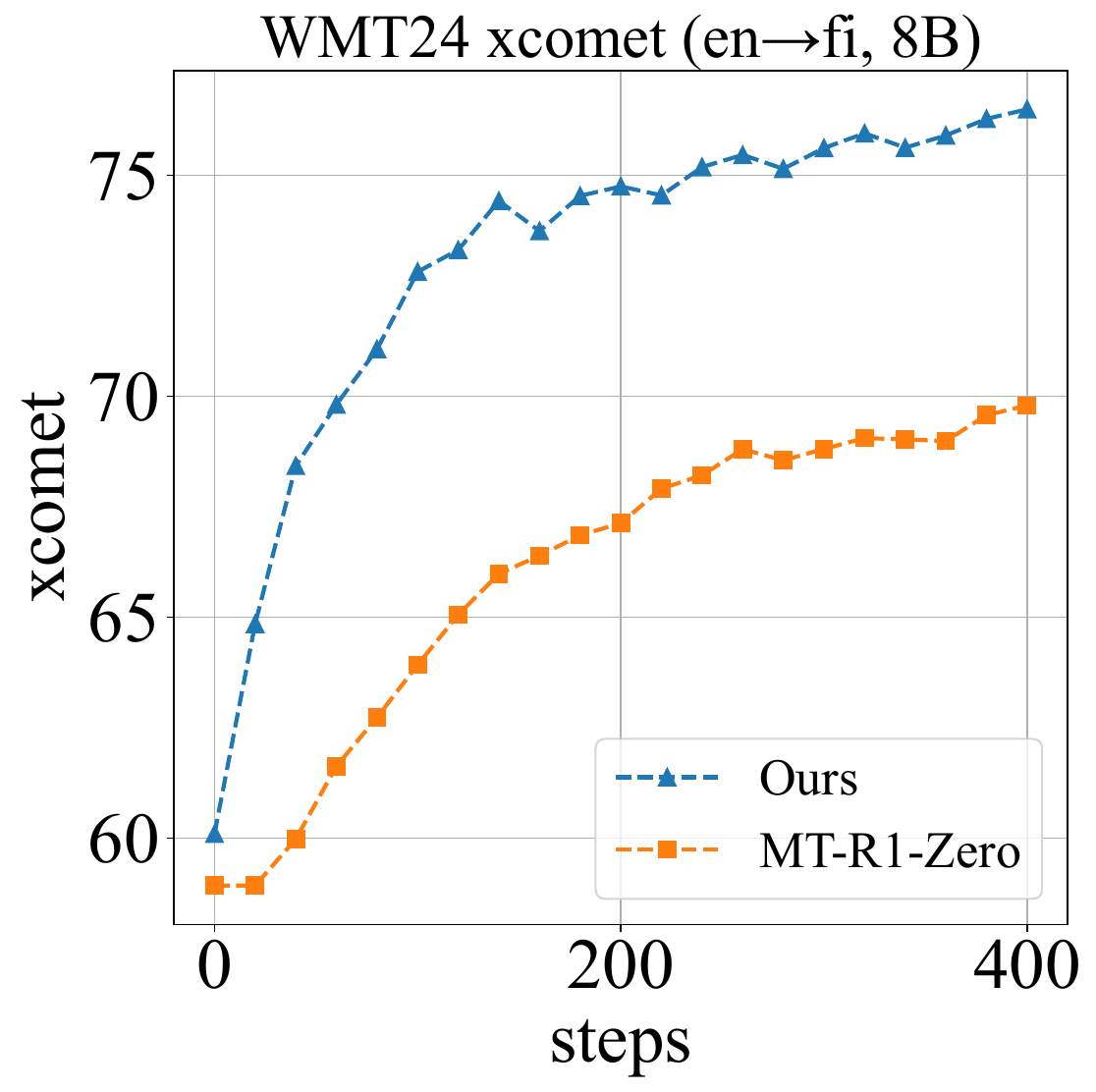}
    }

    \vspace{2mm}

    \subfigure{
        \includegraphics[width=0.29\textwidth]{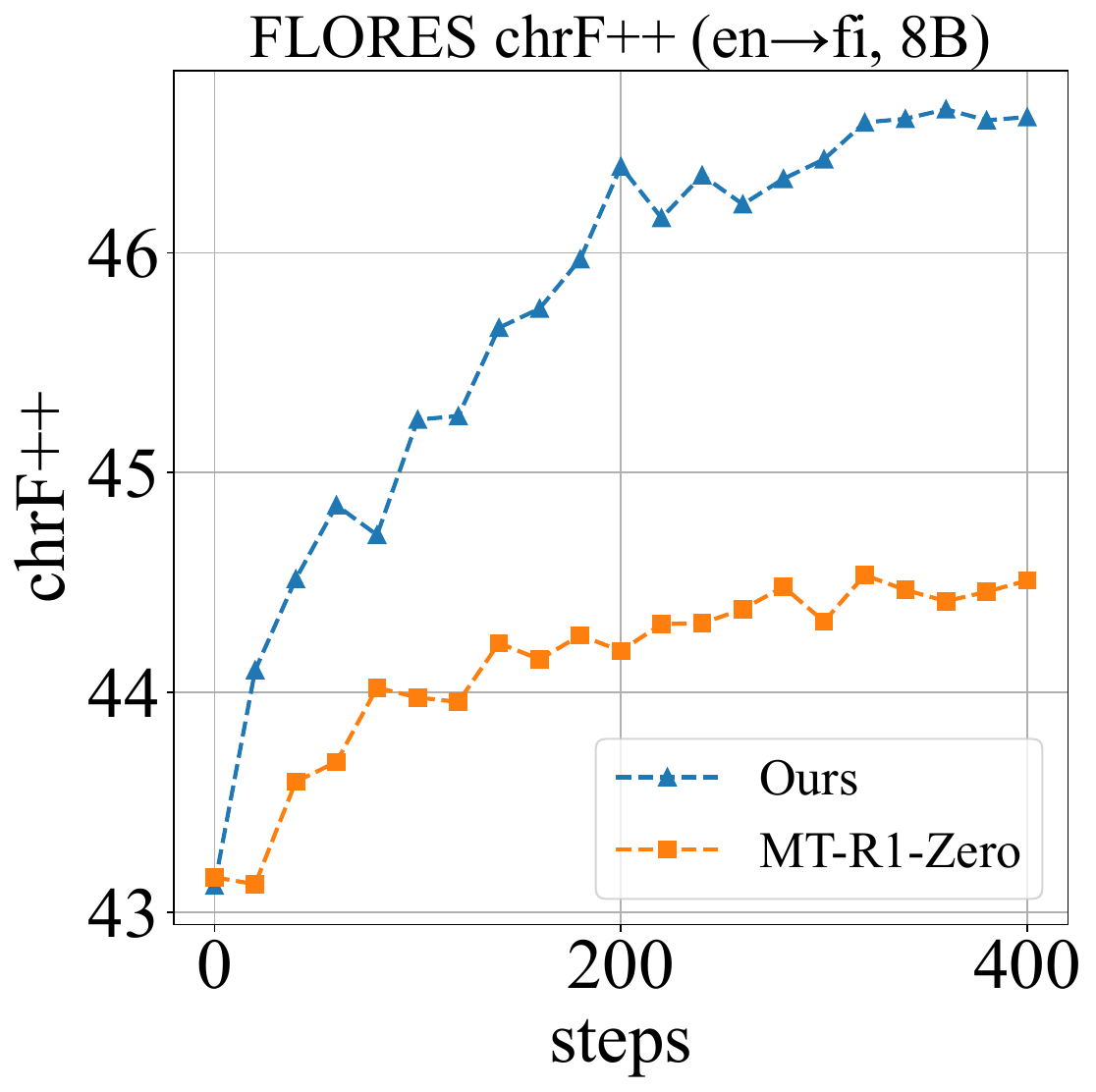}
    }
    \hfill
    \subfigure{
        \includegraphics[width=0.29\textwidth]{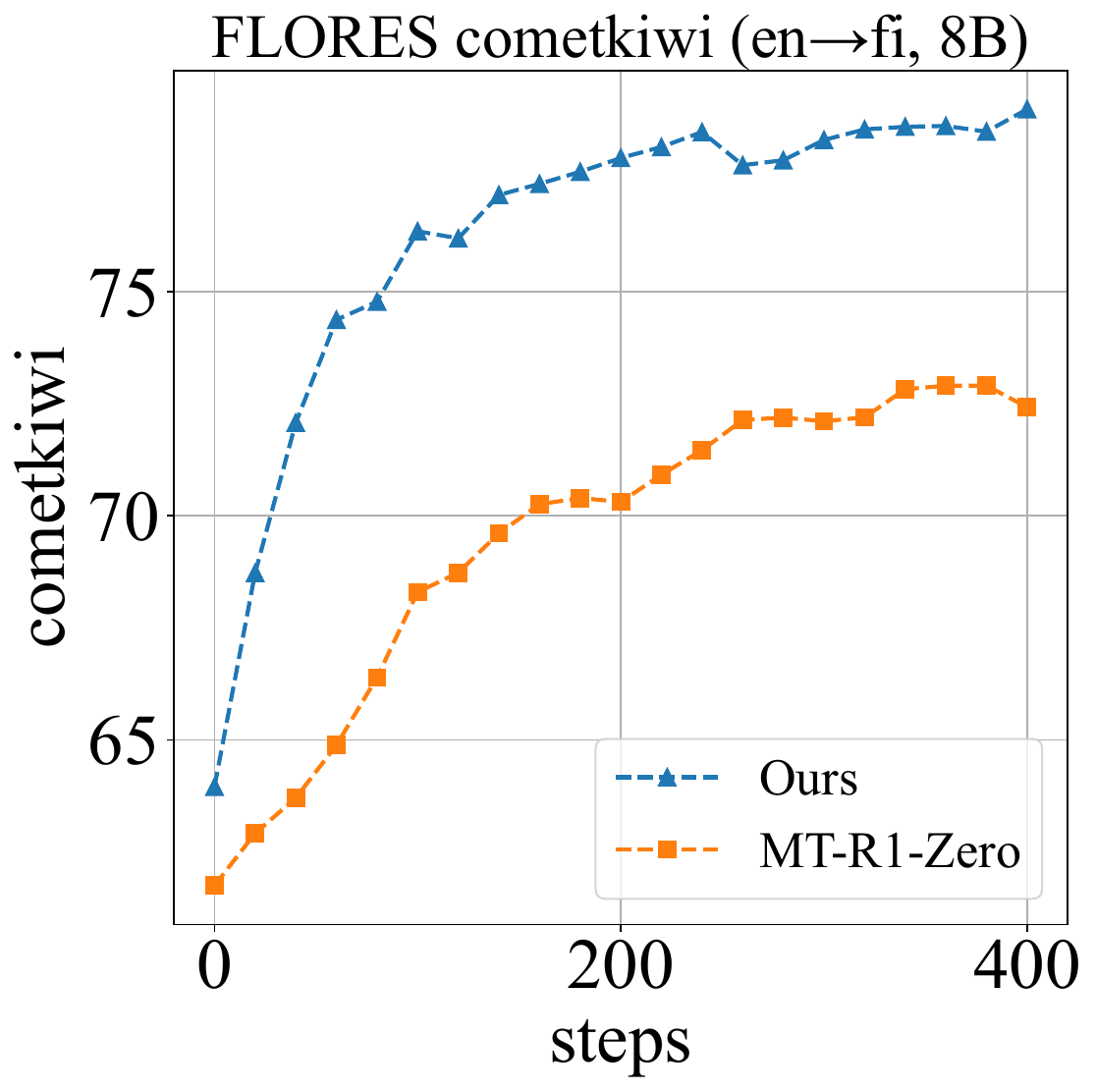}
    }
    \hfill
    \subfigure{
        \includegraphics[width=0.29\textwidth]{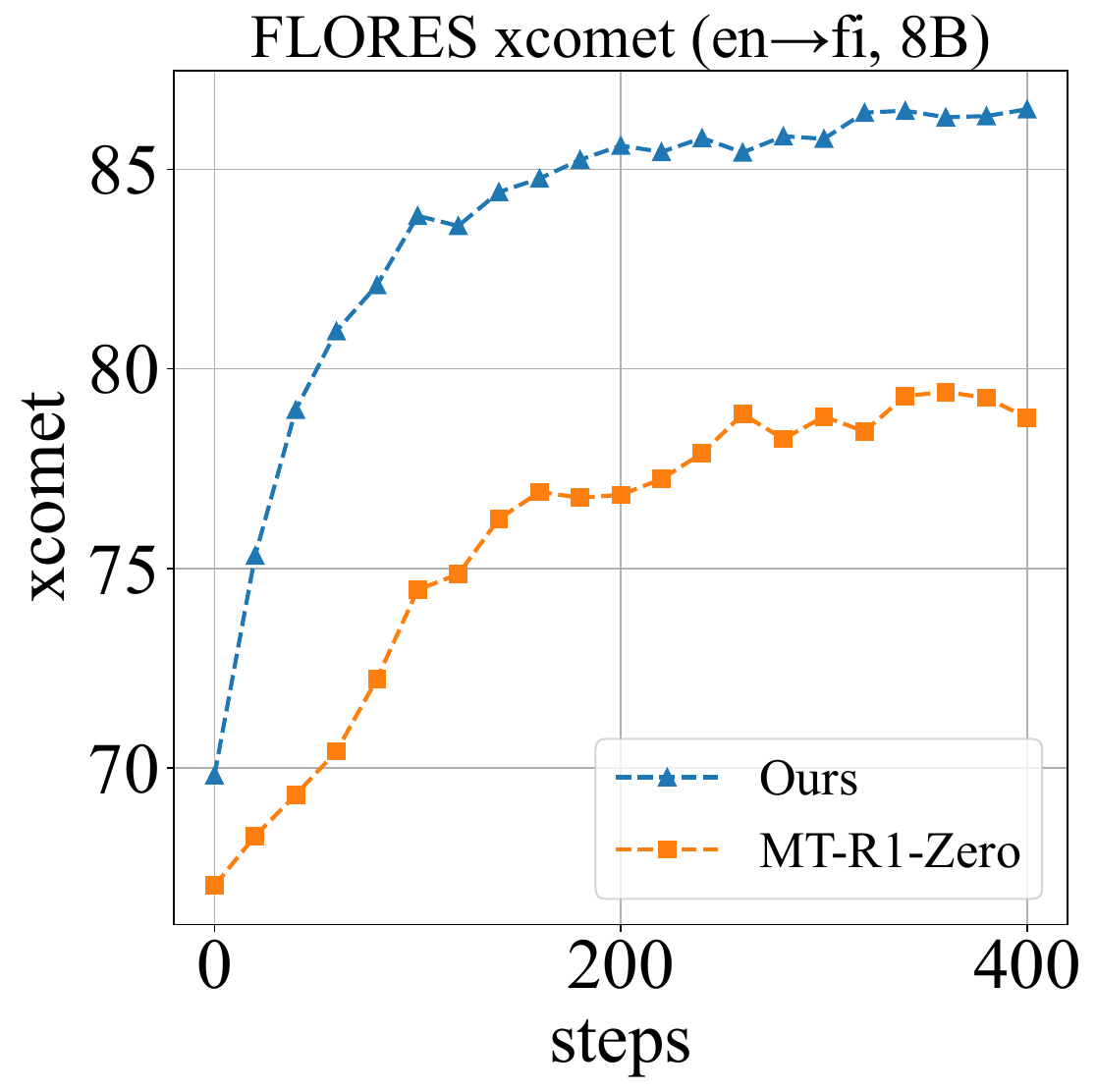}
    }

    \vspace{6mm}

    \caption{
        Training dynamics on FLORES and WMT24 for EN$\rightarrow$FI
        under different model scales (4B, 8B), evaluated by chrF++, COMET-Kiwi, and XCOMET.
    }
    \label{fig:appendix_main_en2fi}
\end{figure*}

\begin{figure*}[htbp]
    \centering
    \setlength{\tabcolsep}{0pt}

    \subfigure{
        \includegraphics[width=0.29\textwidth]{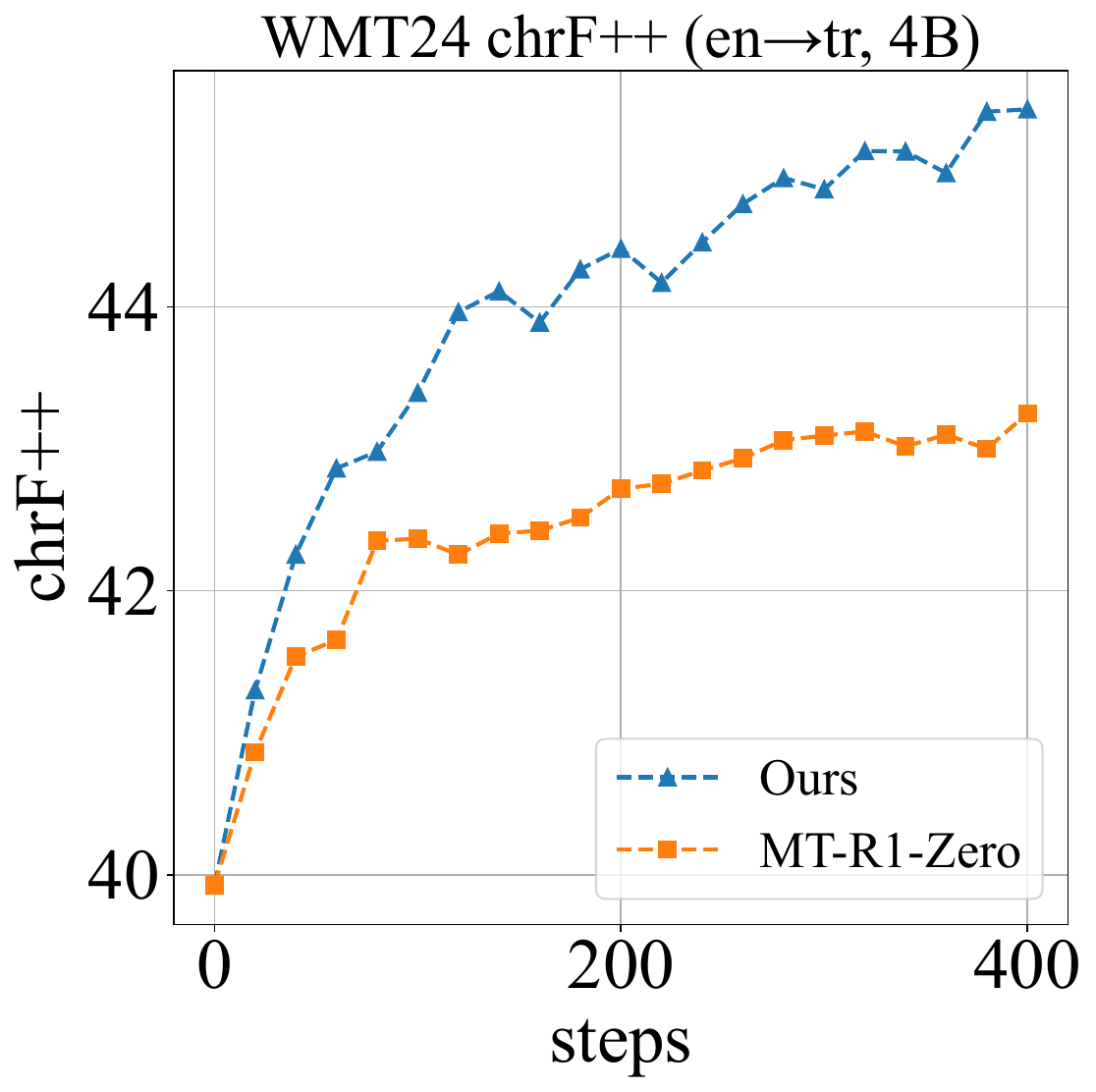}
    }
    \hfill
    \subfigure{
        \includegraphics[width=0.29\textwidth]{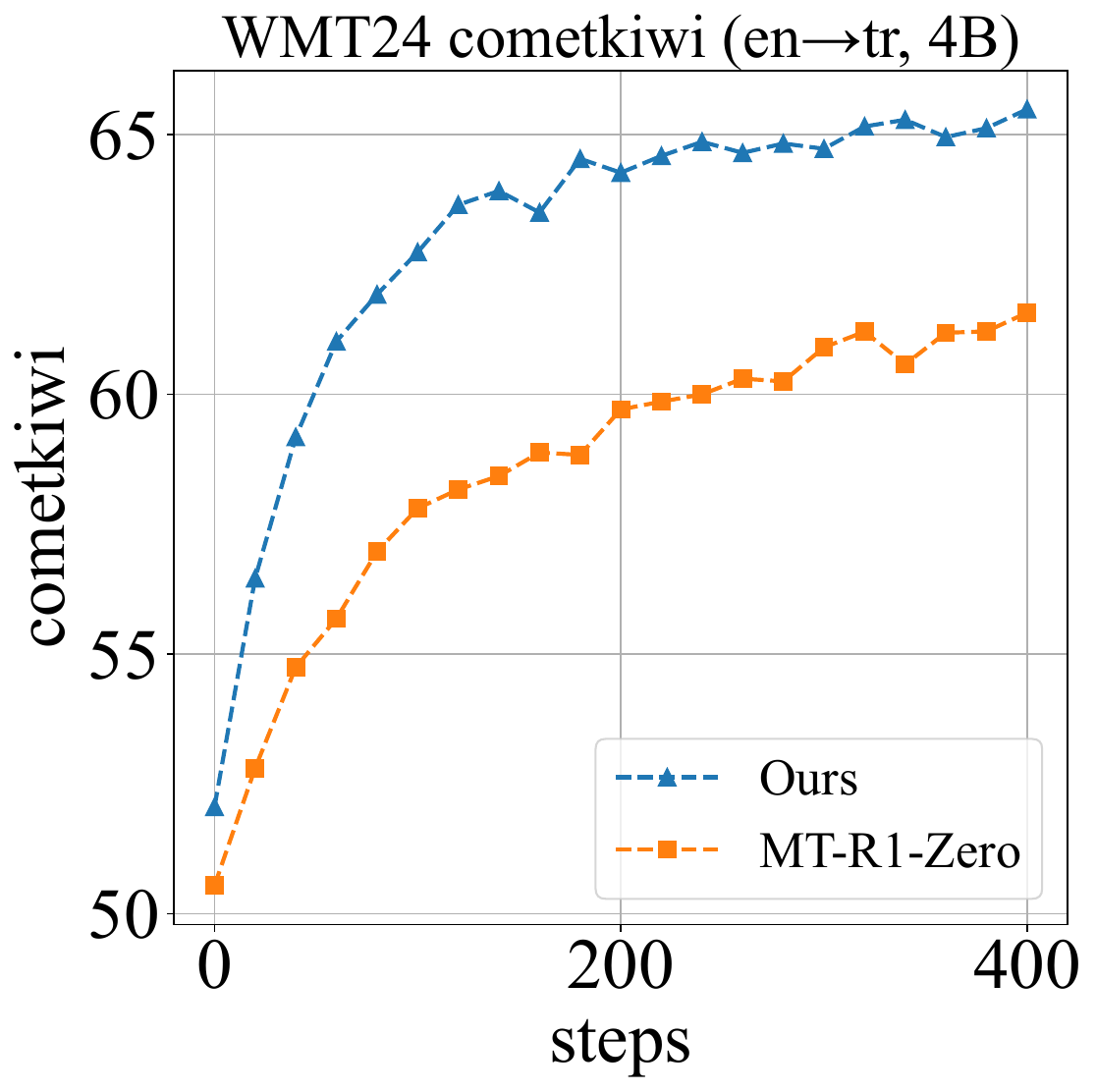}
    }
    \hfill
    \subfigure{
        \includegraphics[width=0.29\textwidth]{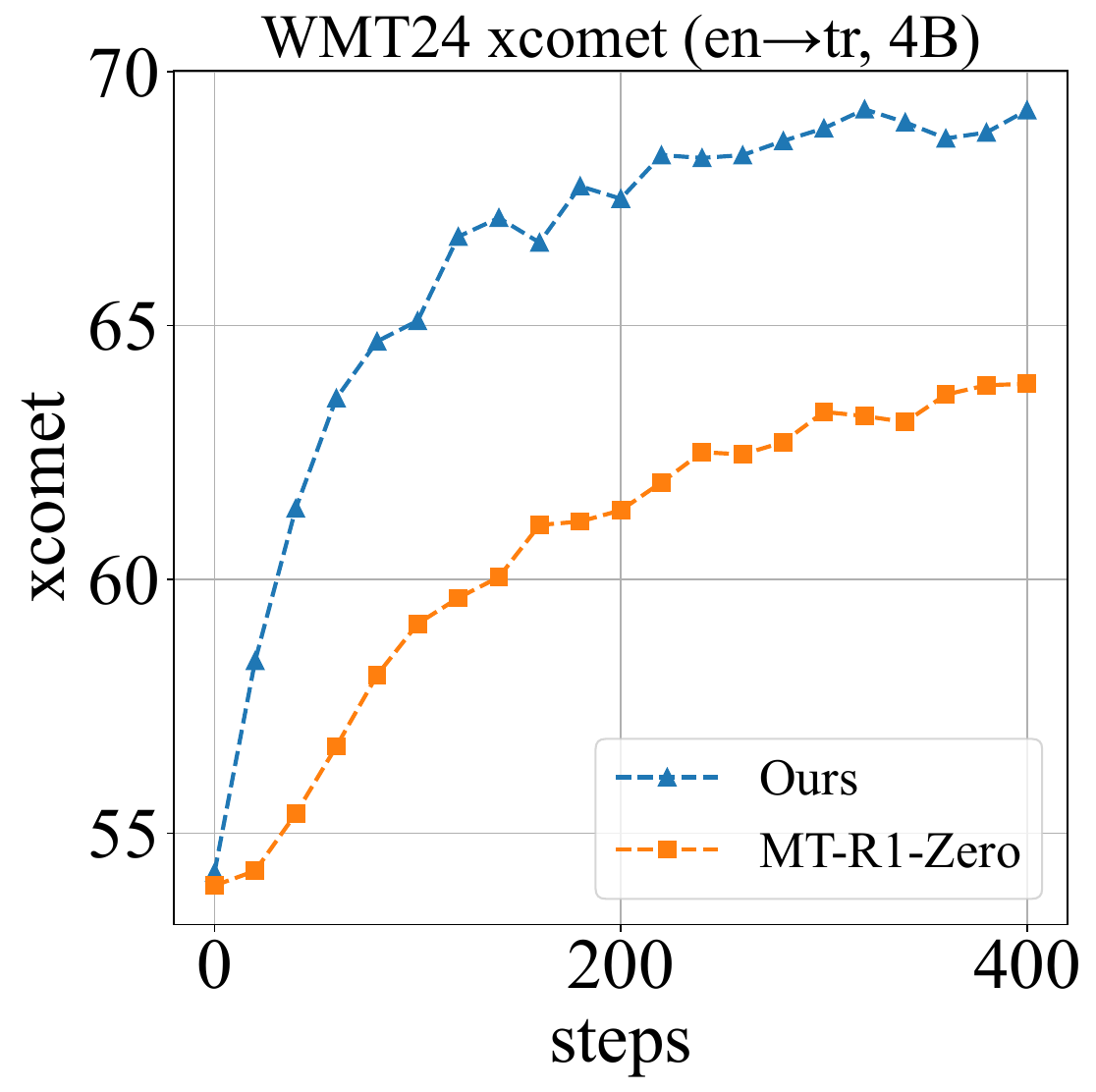}
    }

    \vspace{2mm}

    \subfigure{
        \includegraphics[width=0.29\textwidth]{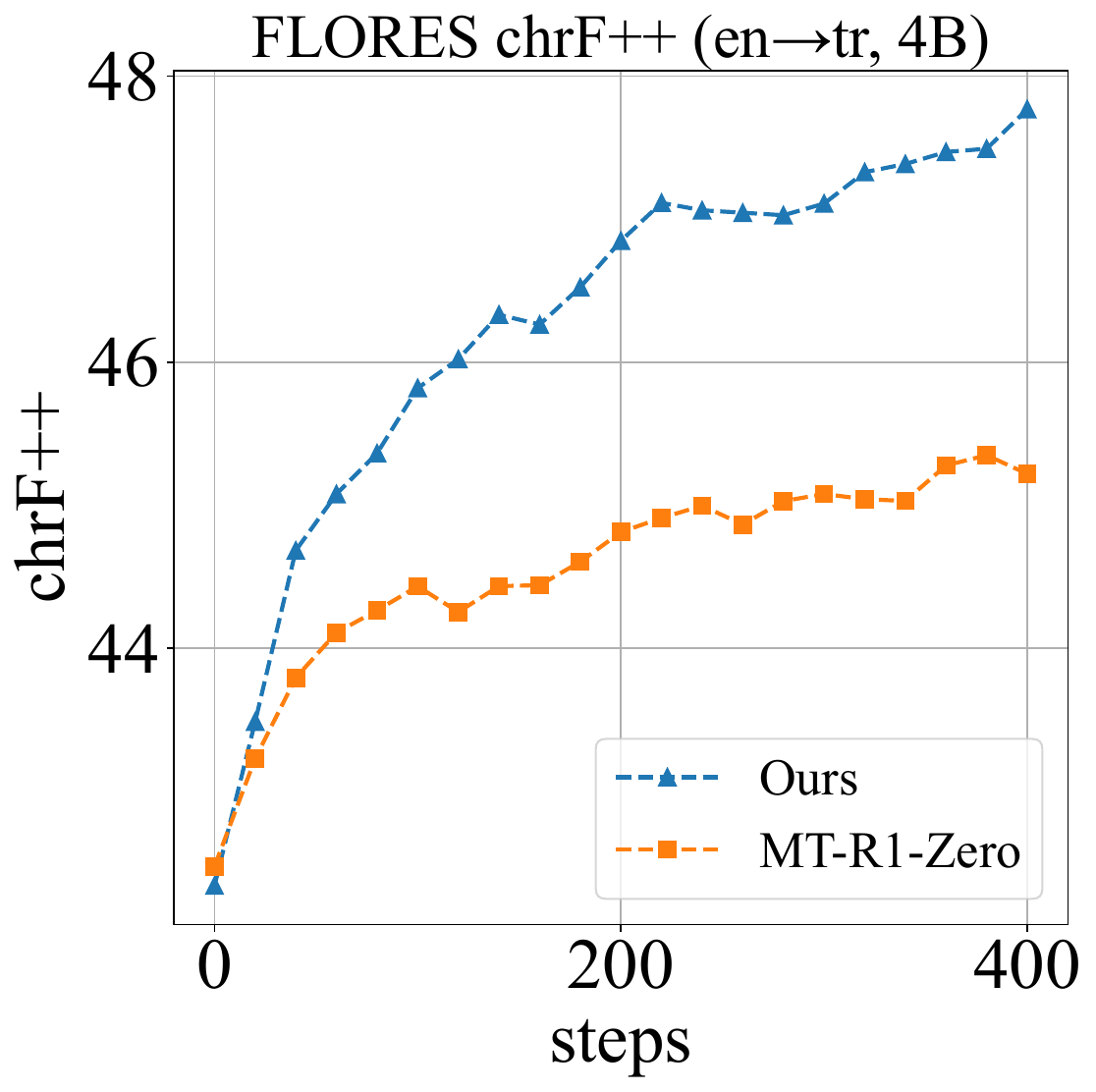}
    }
    \hfill
    \subfigure{
        \includegraphics[width=0.29\textwidth]{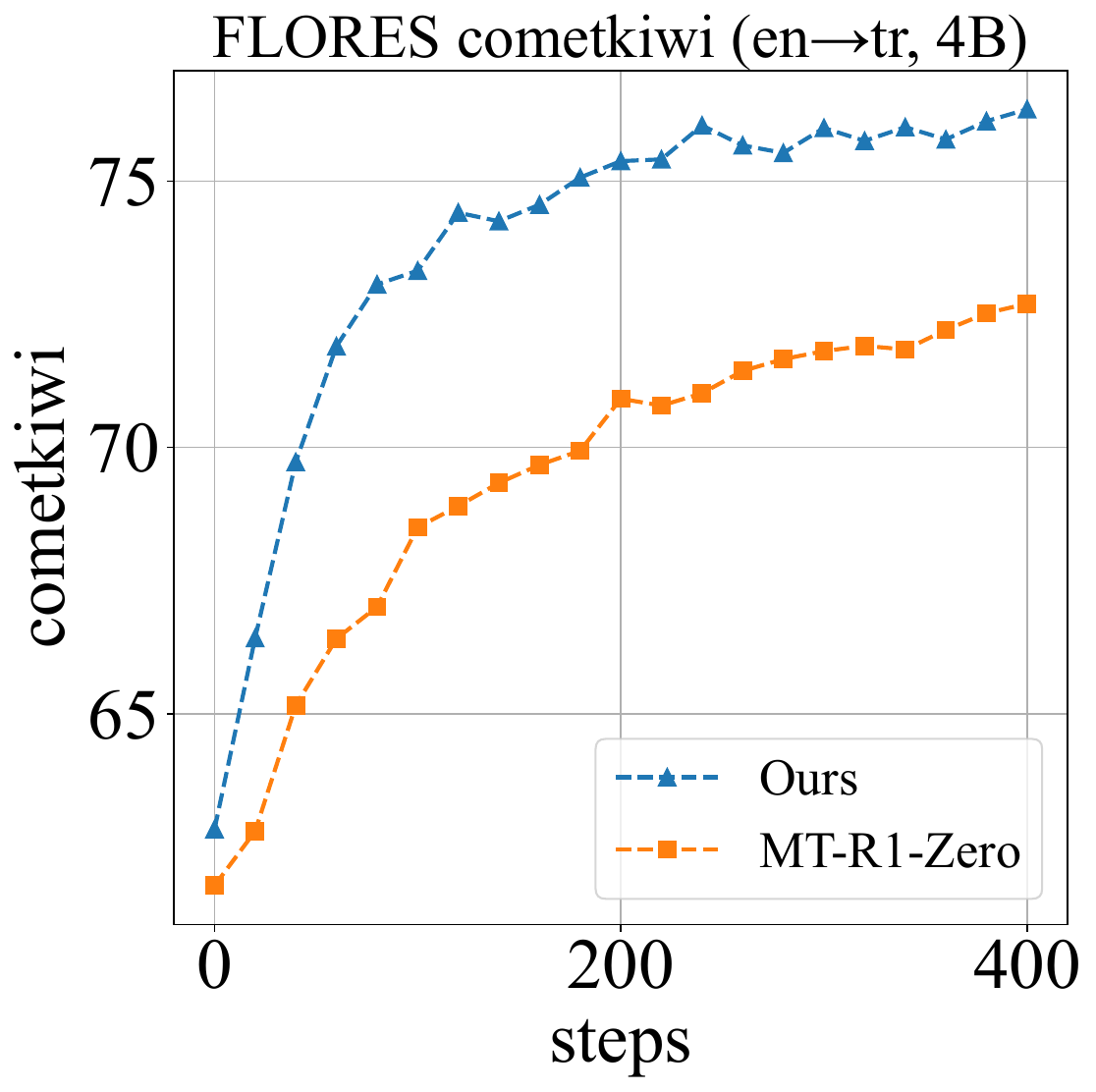}
    }
    \hfill
    \subfigure{
        \includegraphics[width=0.29\textwidth]{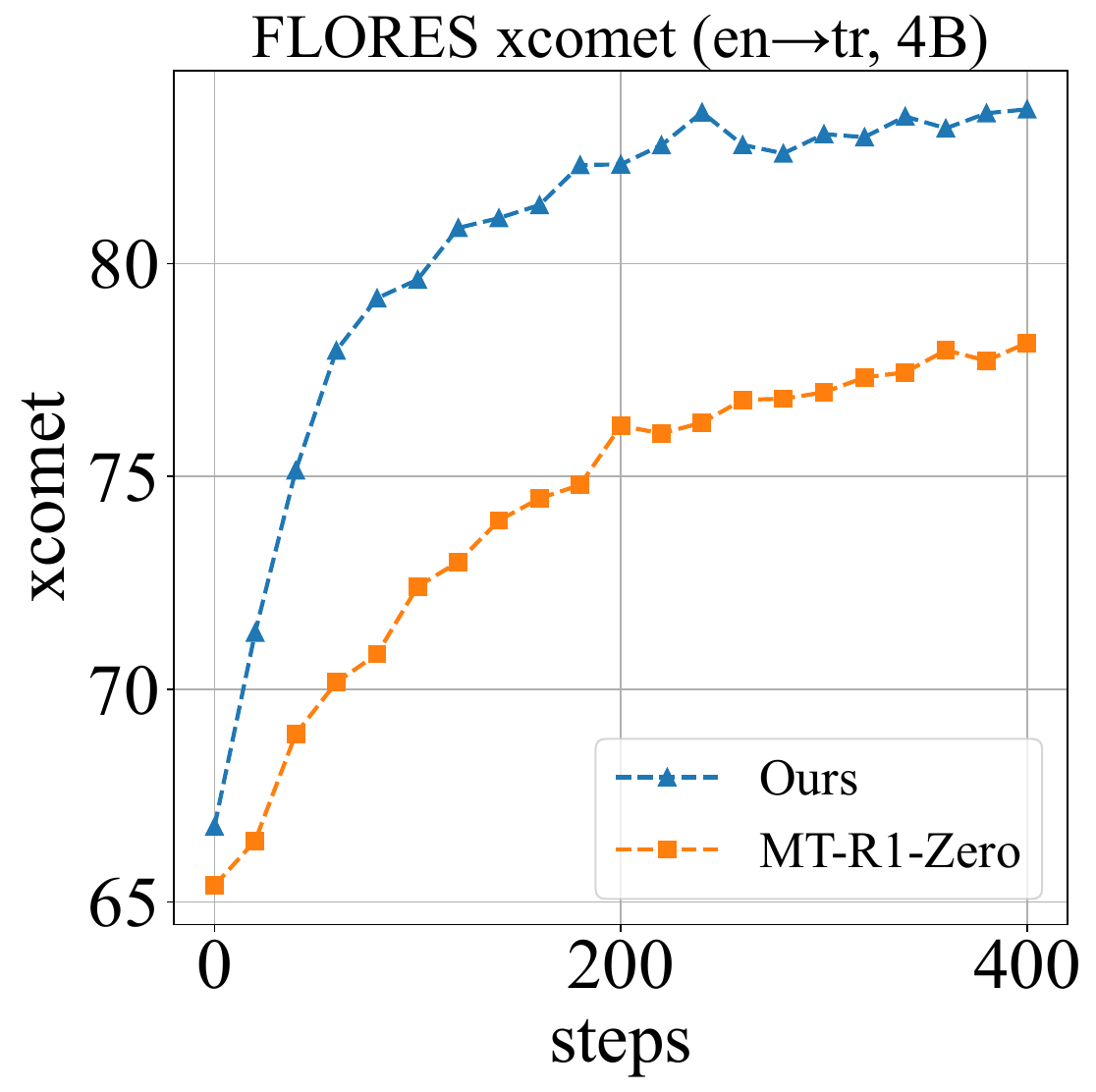}
    }

    \vspace{4mm}

    \subfigure{
        \includegraphics[width=0.29\textwidth]{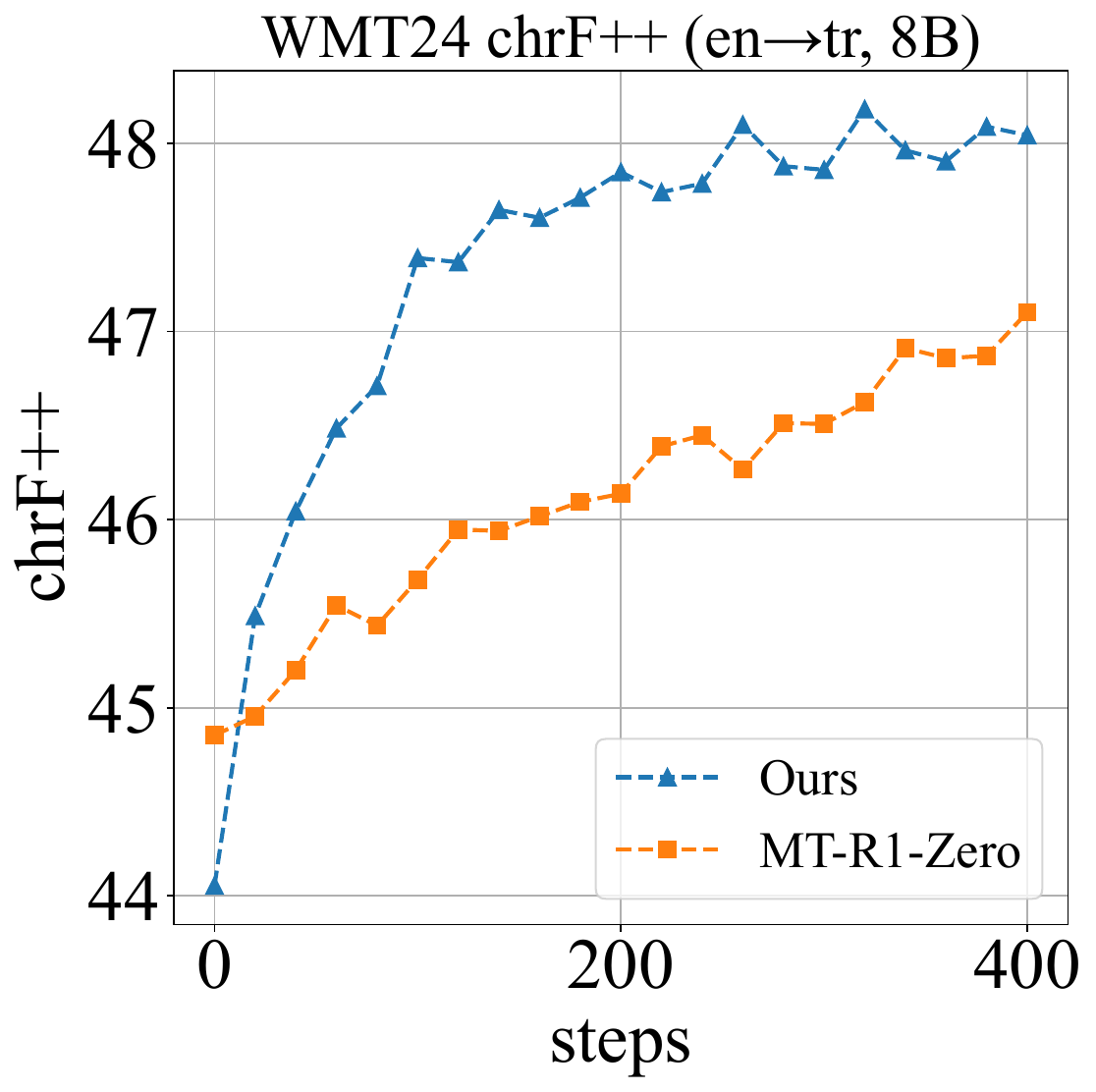}
    }
    \hfill
    \subfigure{
        \includegraphics[width=0.29\textwidth]{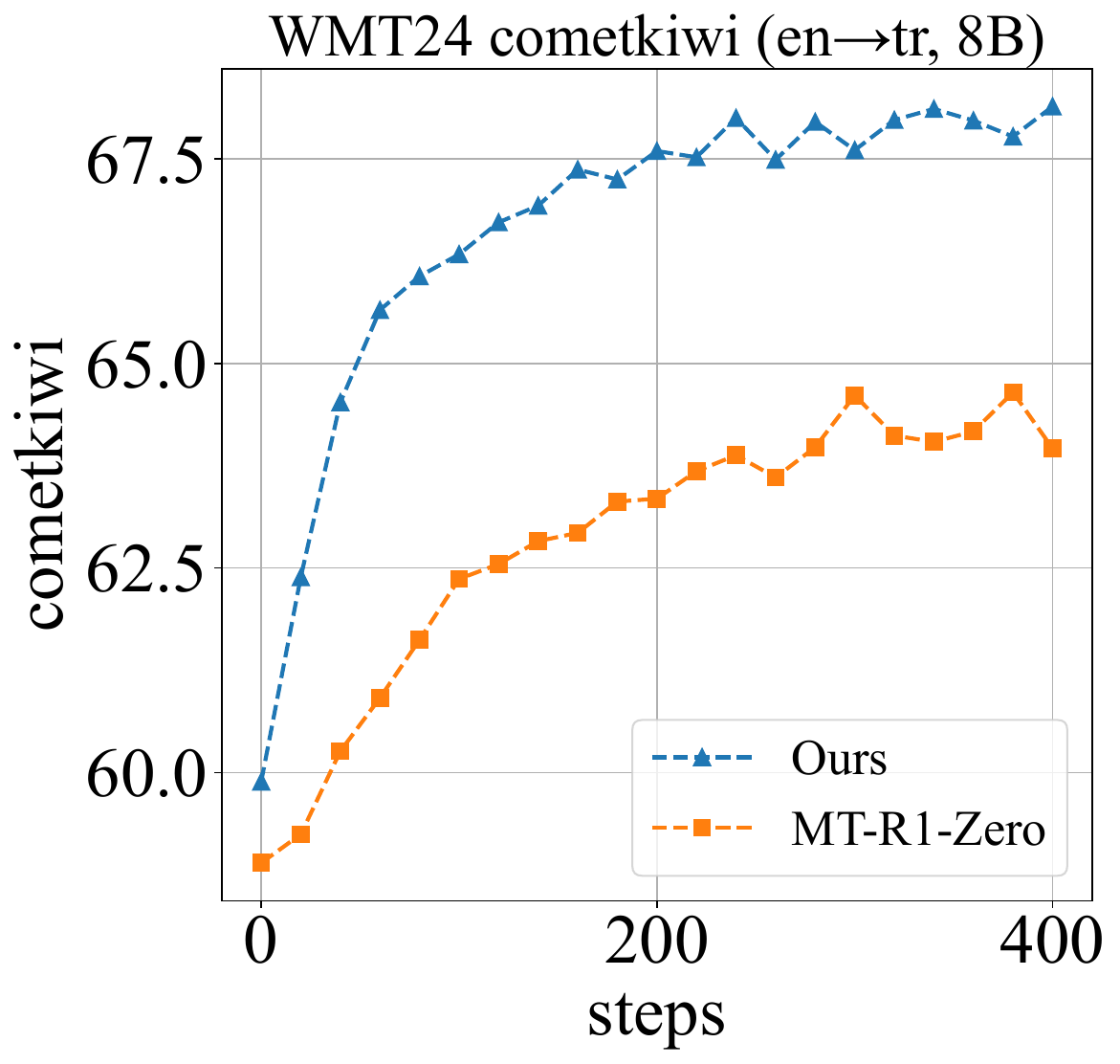}
    }
    \hfill
    \subfigure{
        \includegraphics[width=0.29\textwidth]{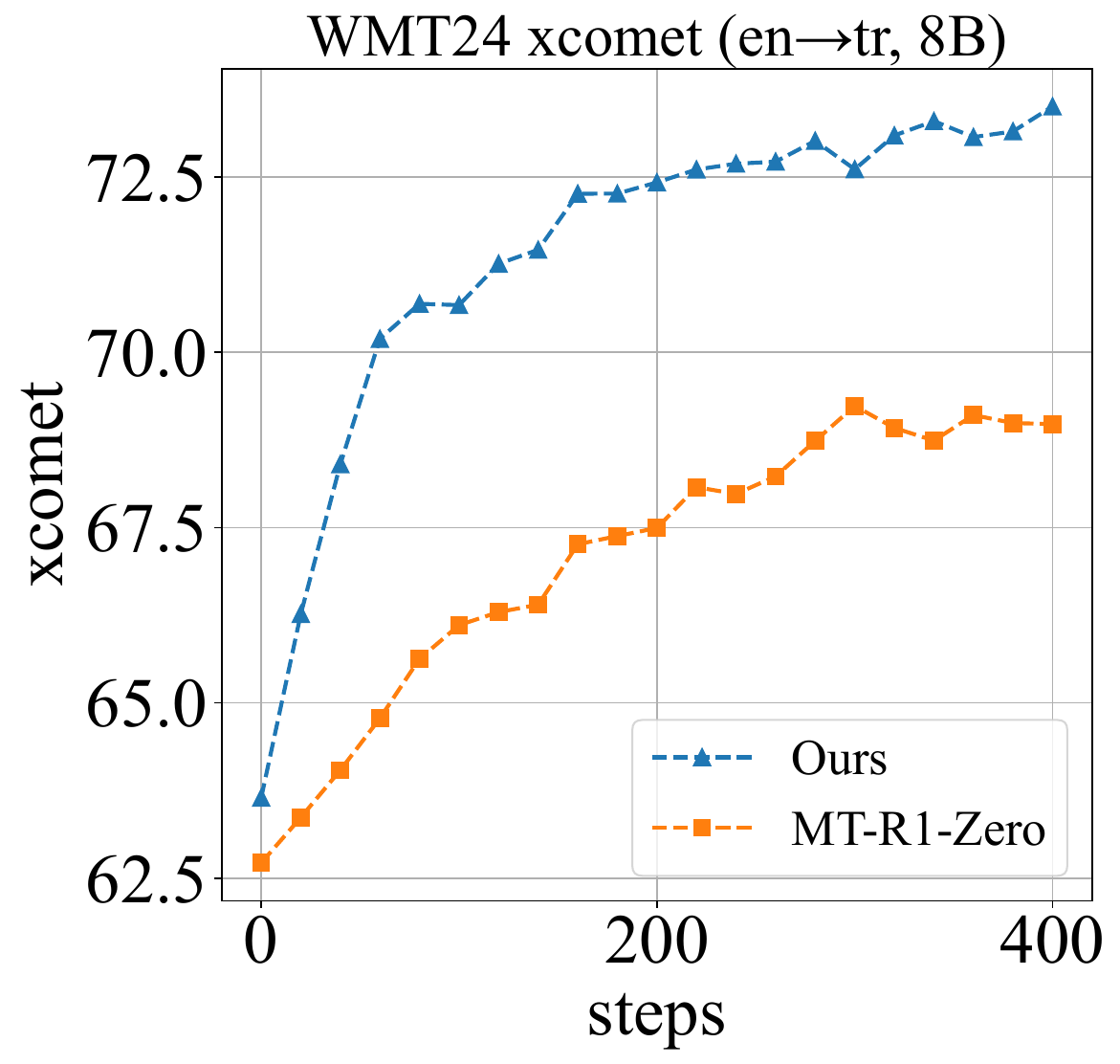}
    }

    \vspace{2mm}

    \subfigure{
        \includegraphics[width=0.29\textwidth]{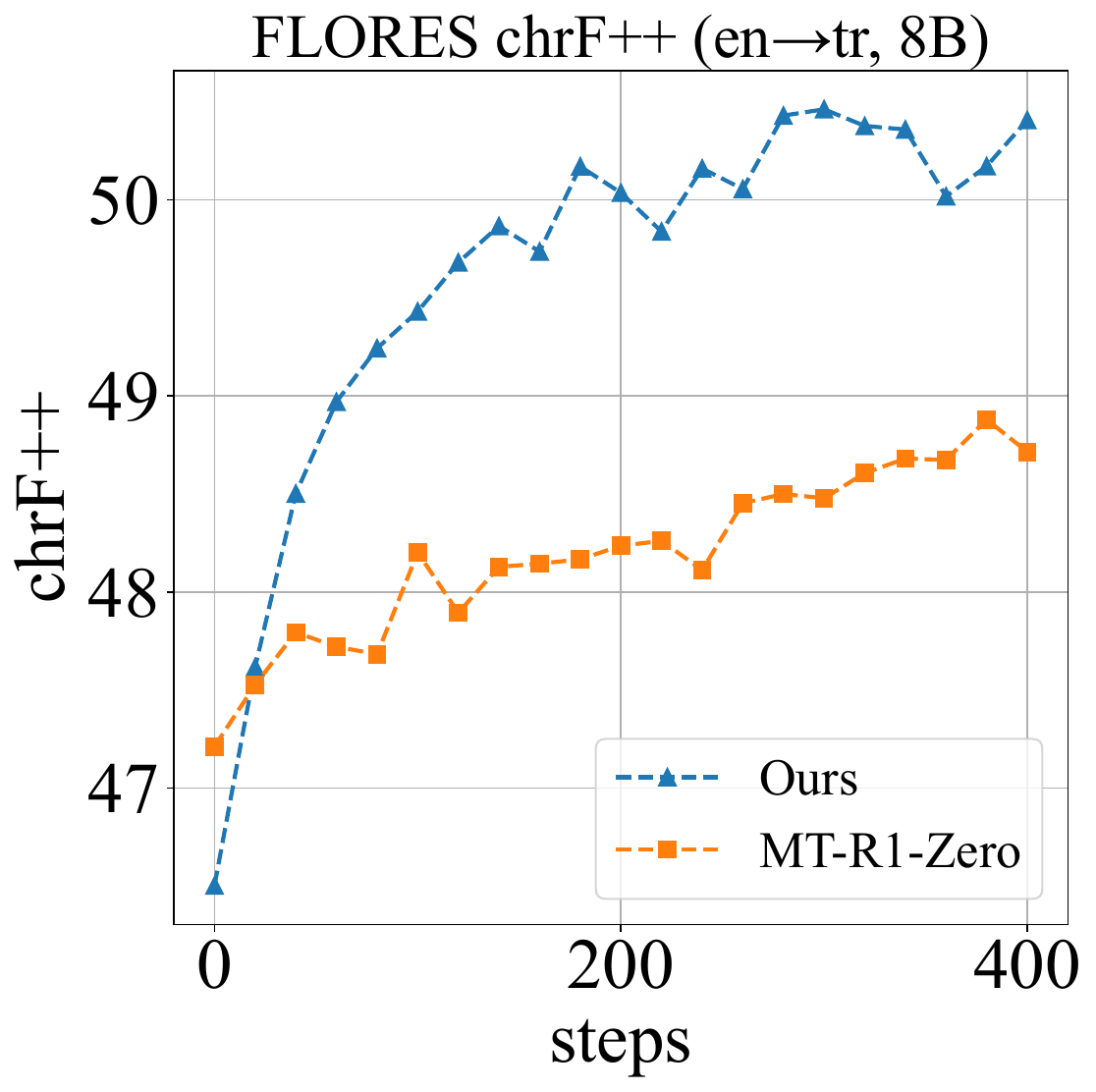}
    }
    \hfill
    \subfigure{
        \includegraphics[width=0.29\textwidth]{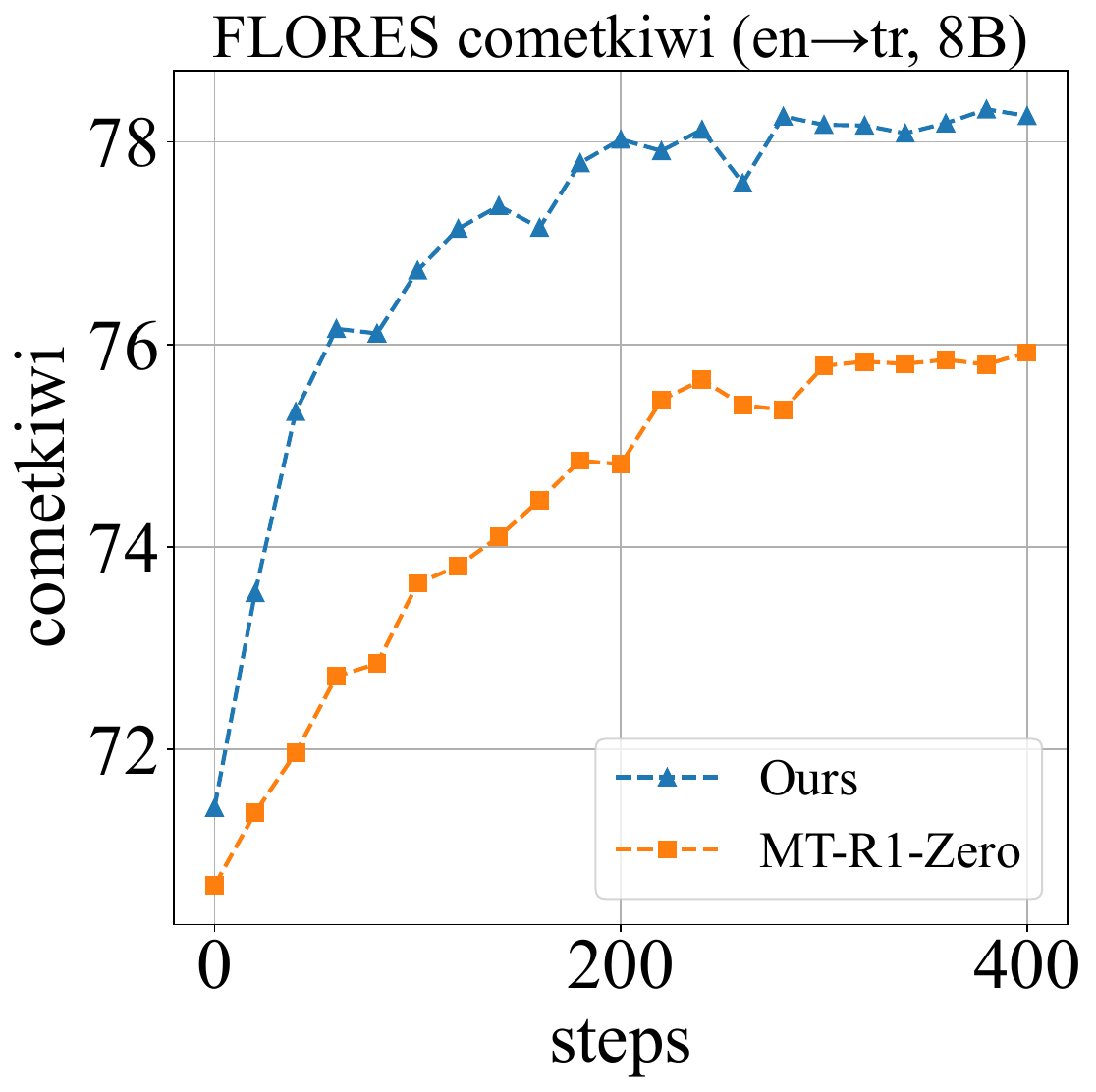}
    }
    \hfill
    \subfigure{
        \includegraphics[width=0.29\textwidth]{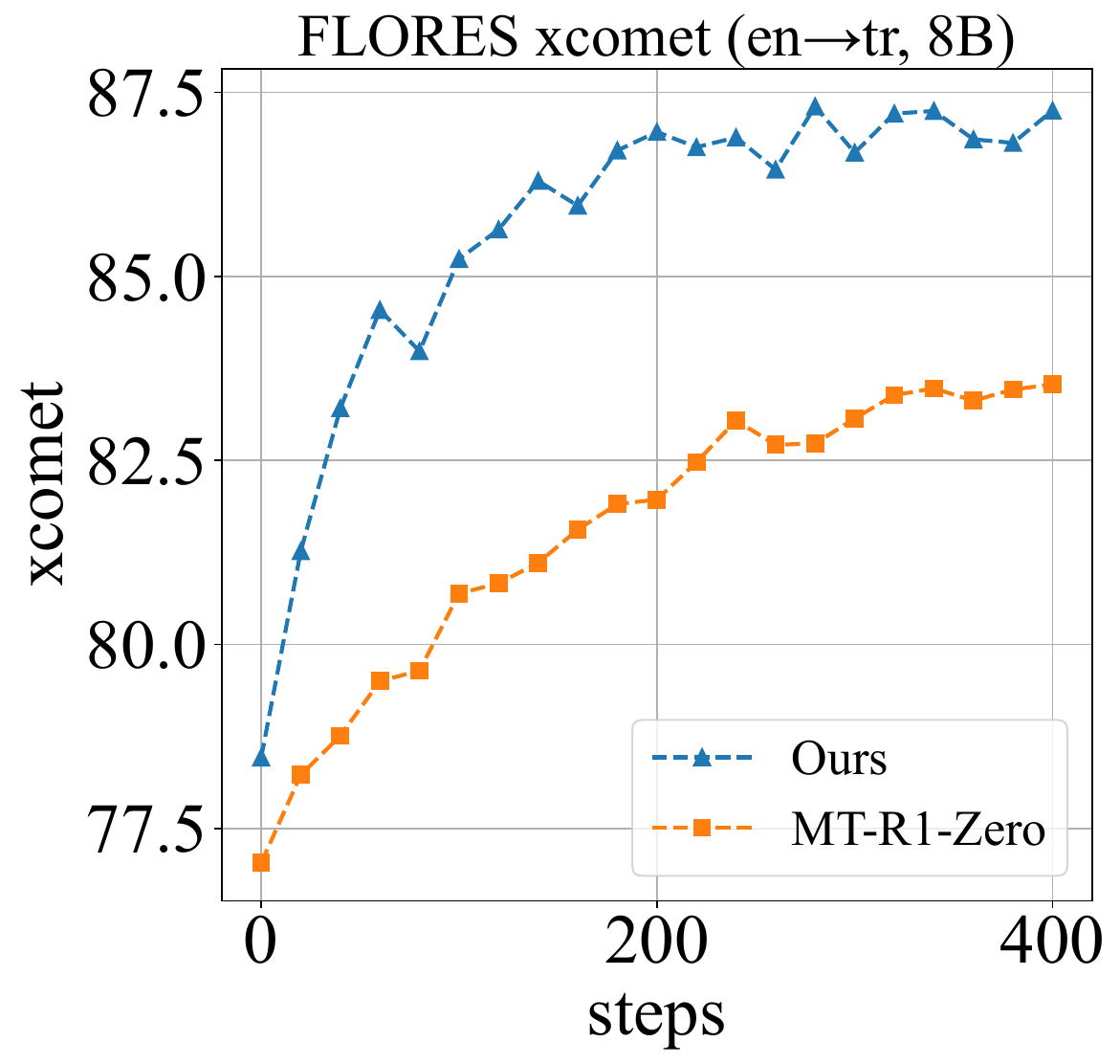}
    }

    \caption{
        Training dynamics on FLORES and WMT24 for EN$\rightarrow$TR
        under different model scales (4B, 8B), evaluated by chrF++, COMET-Kiwi, and XCOMET.
    }
    \label{fig:appendix_main_en2tr}
\end{figure*}

\begin{figure*}[htbp]
    \centering
    \setlength{\tabcolsep}{0pt}

    \subfigure{
        \includegraphics[width=0.29\textwidth]{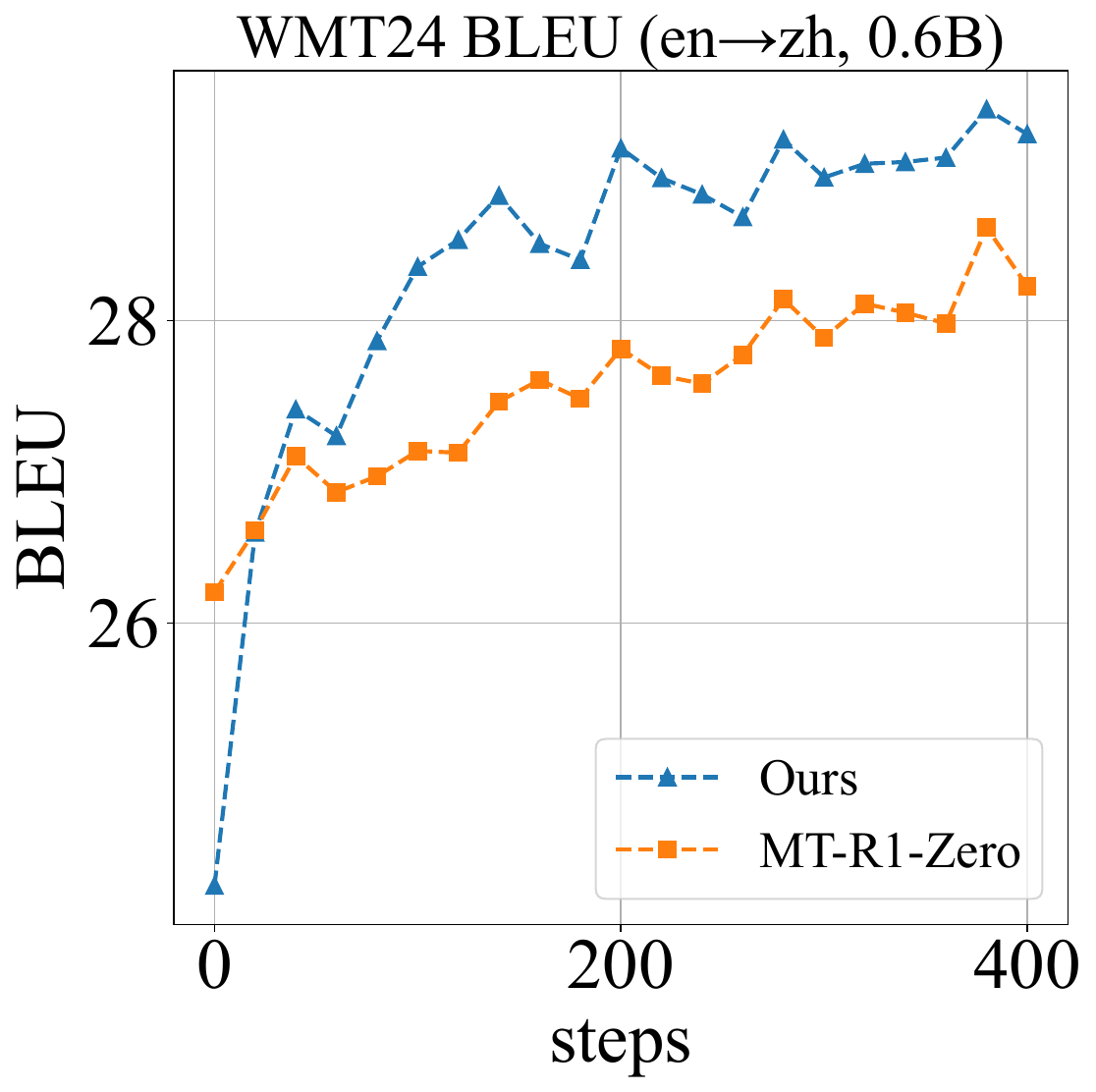}
    }
    \hfill
    \subfigure{
        \includegraphics[width=0.29\textwidth]{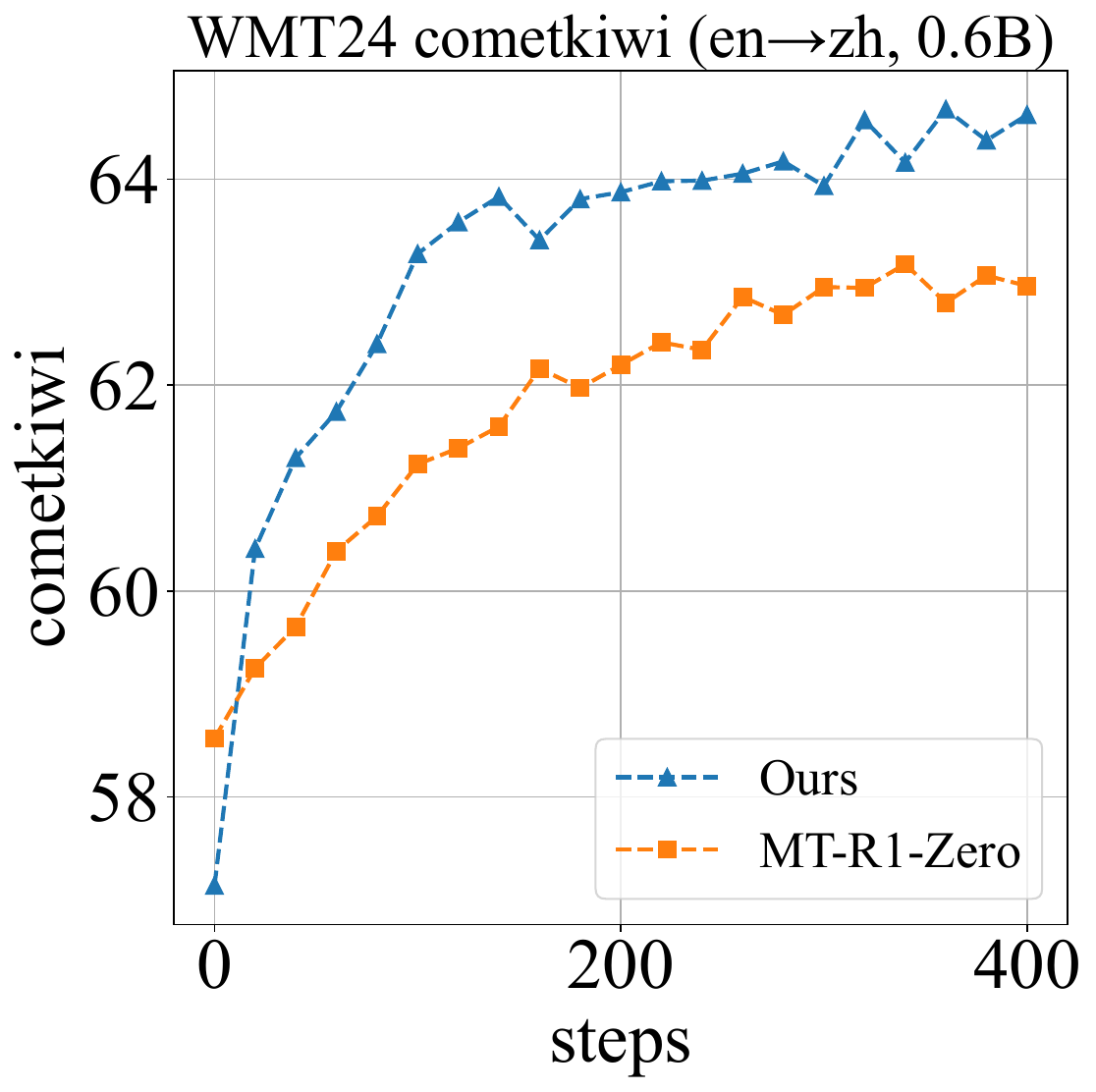}
    }
    \hfill
    \subfigure{
        \includegraphics[width=0.29\textwidth]{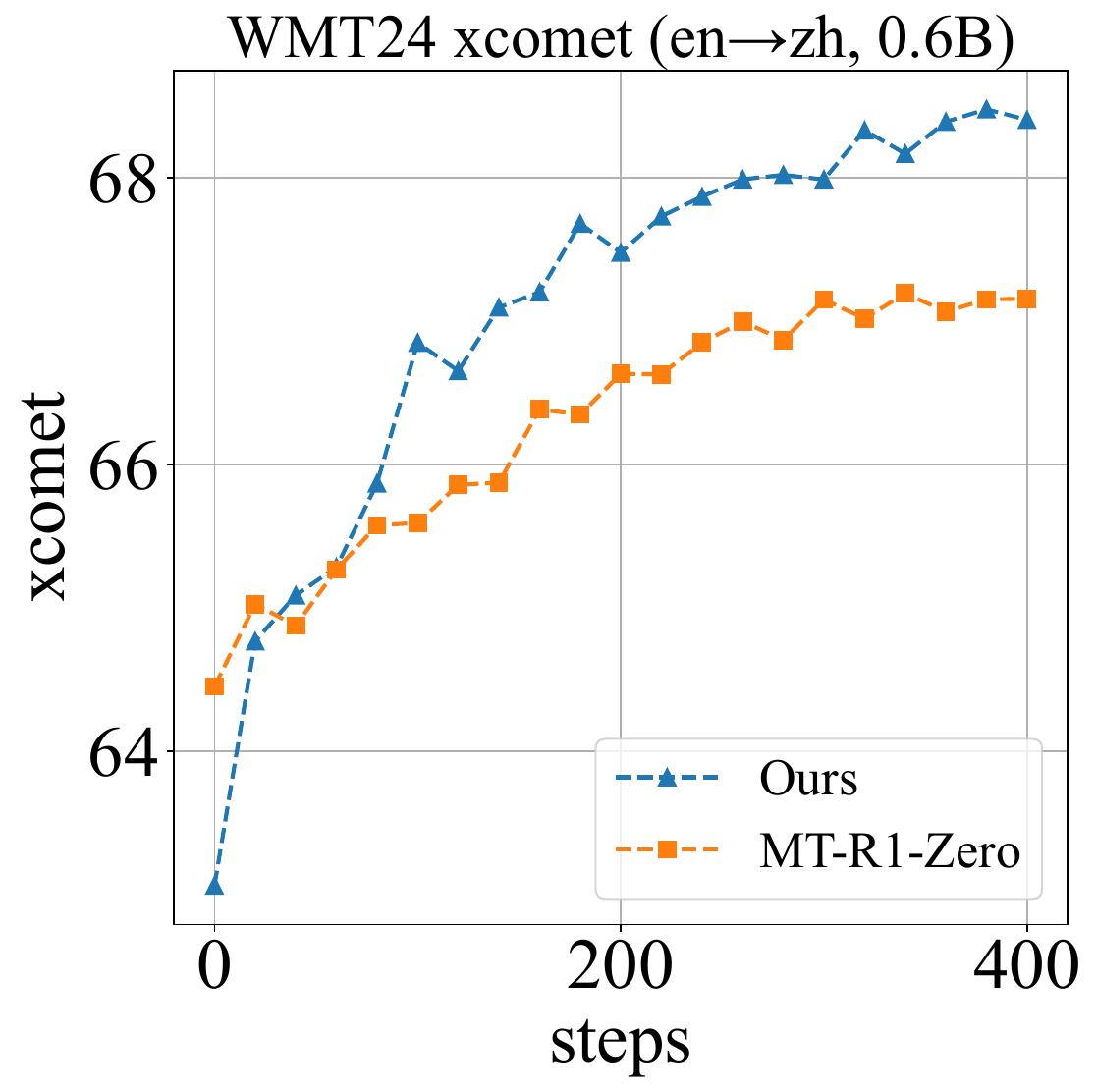}
    }

    \vspace{2mm}

    \subfigure{
        \includegraphics[width=0.29\textwidth]{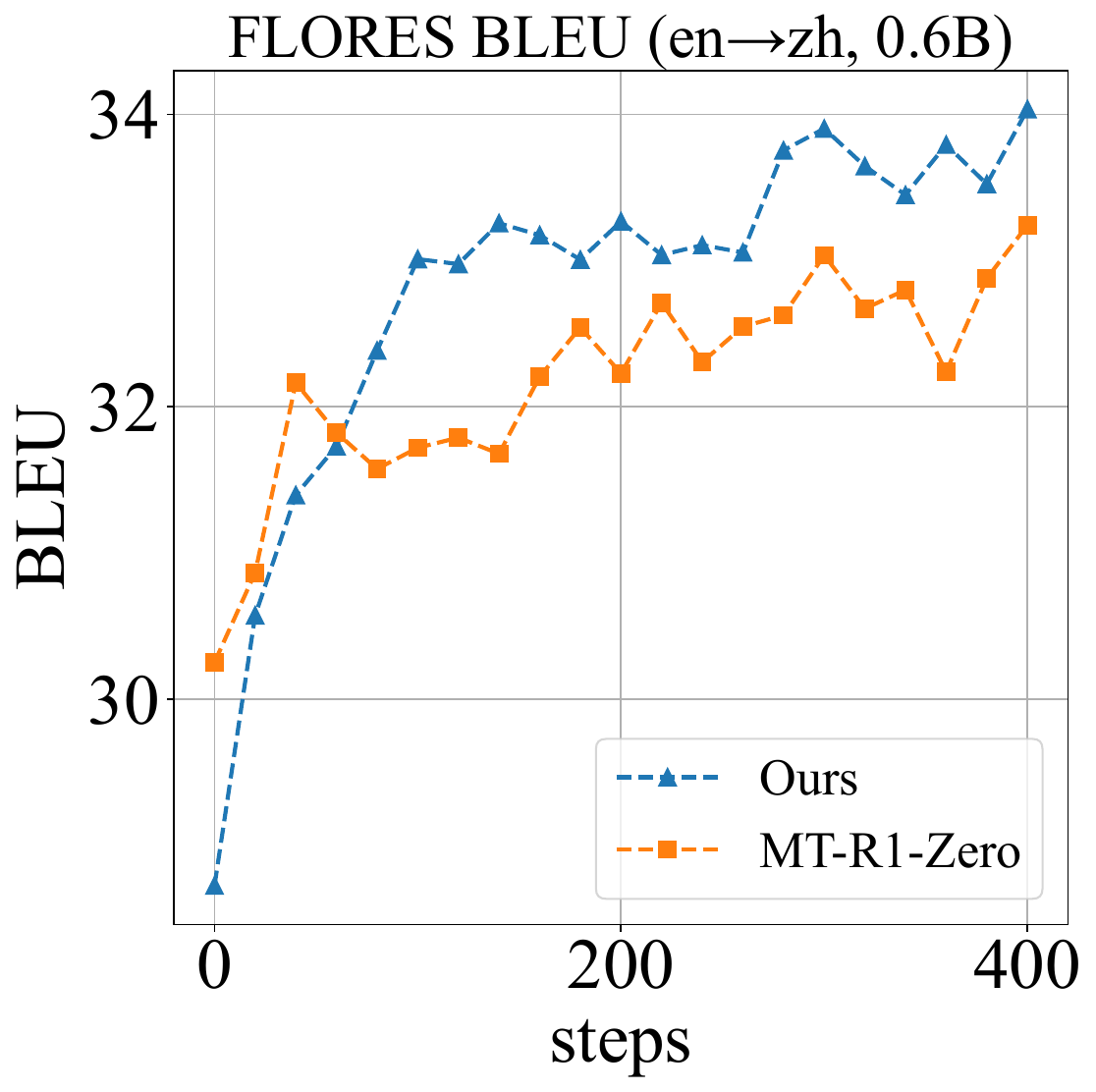}
    }
    \hfill
    \subfigure{
        \includegraphics[width=0.29\textwidth]{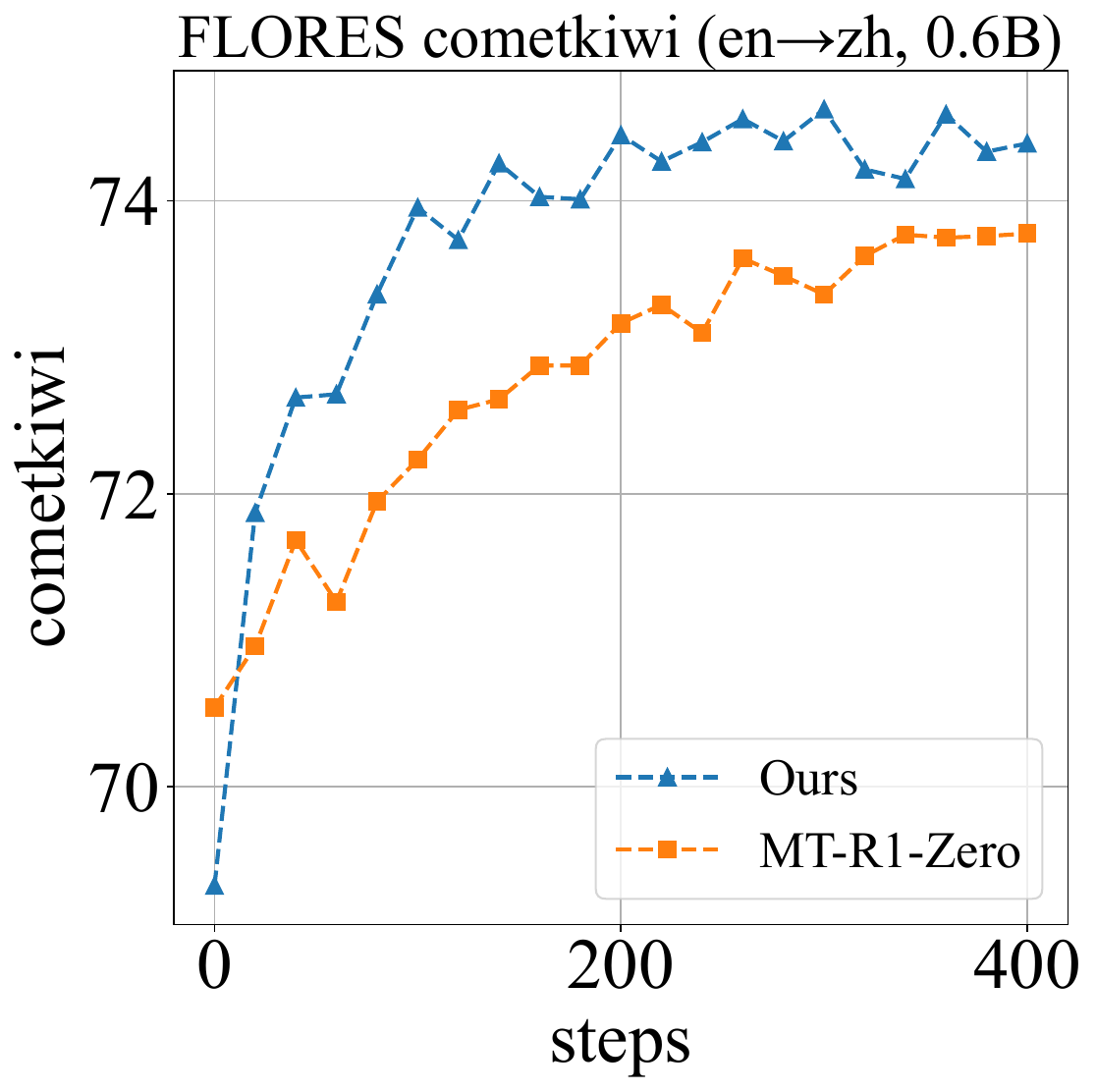}
    }
    \hfill
    \subfigure{
        \includegraphics[width=0.29\textwidth]{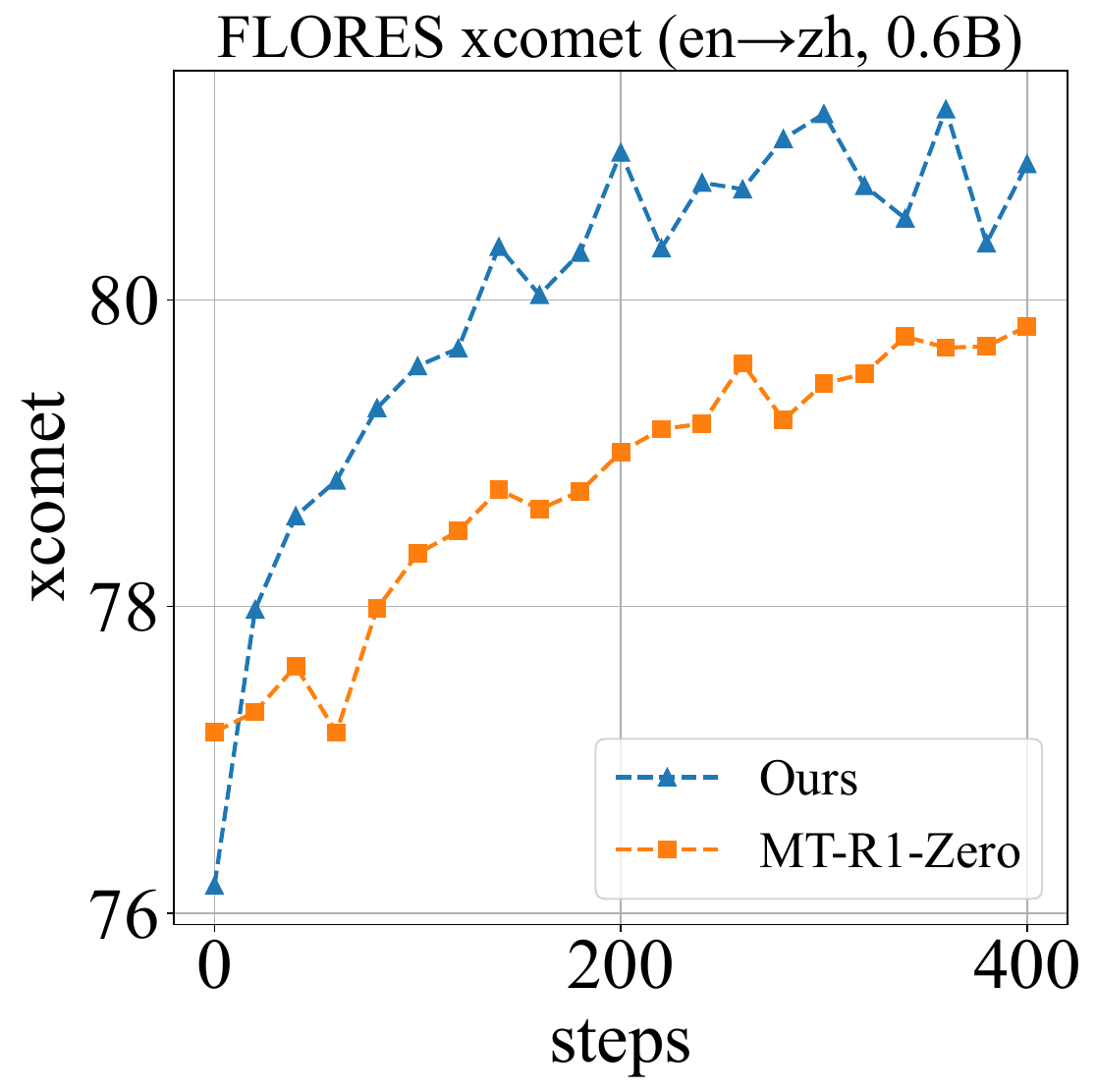}
    }

    \vspace{2mm}

    \subfigure{
        \includegraphics[width=0.29\textwidth]{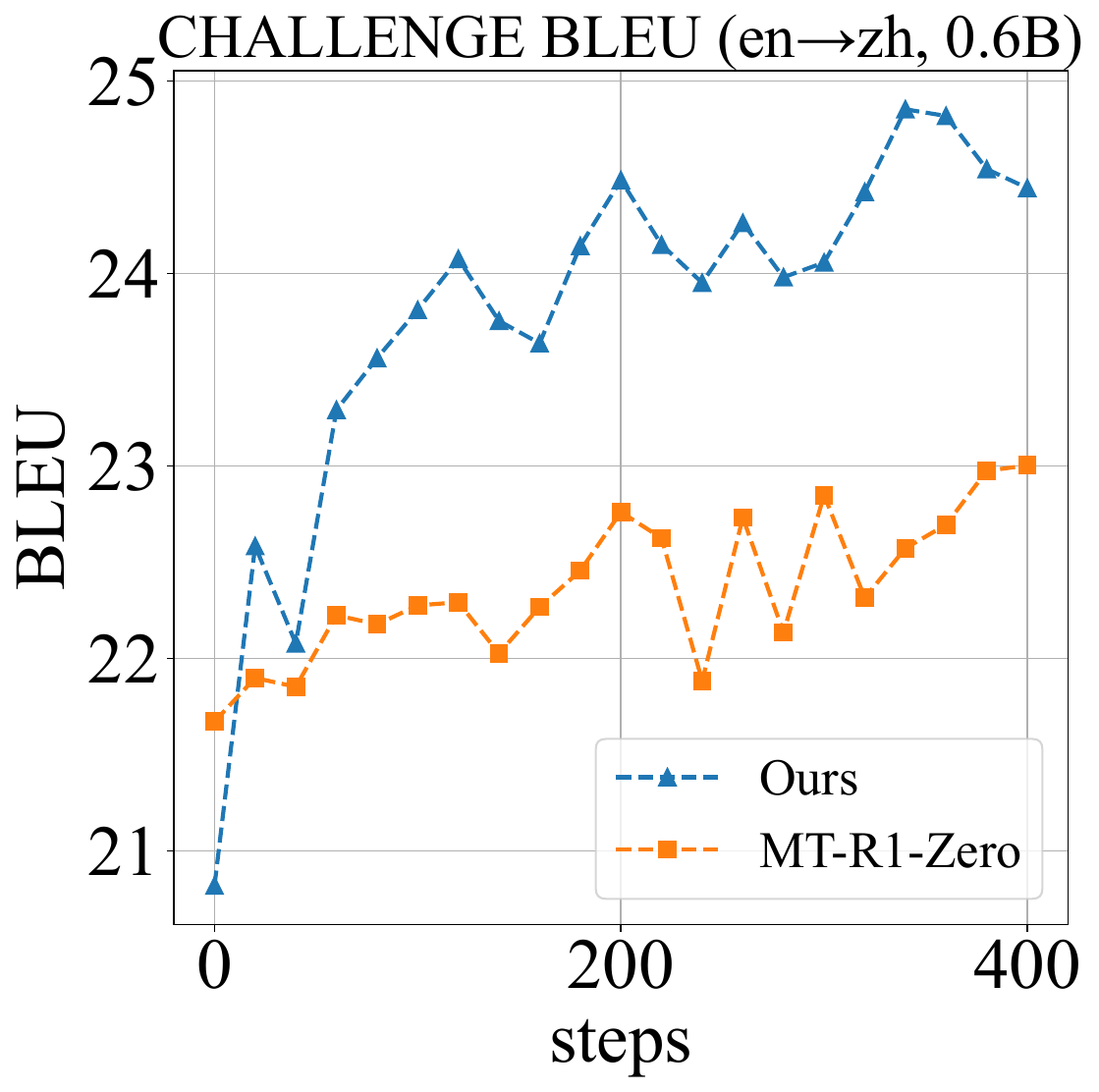}
    }
    \hfill
    \subfigure{
        \includegraphics[width=0.29\textwidth]{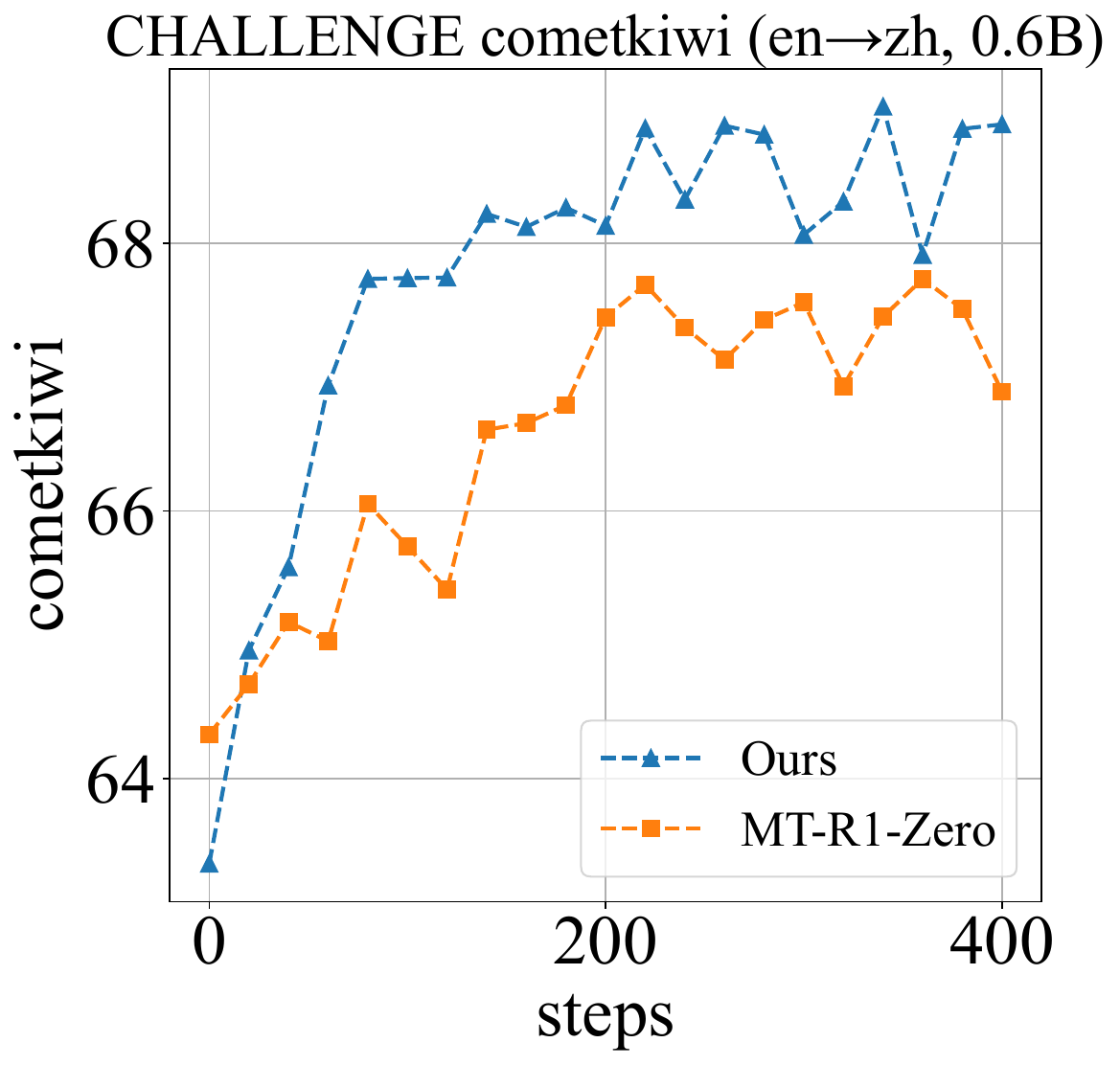}
    }
    \hfill
    \subfigure{
        \includegraphics[width=0.29\textwidth]{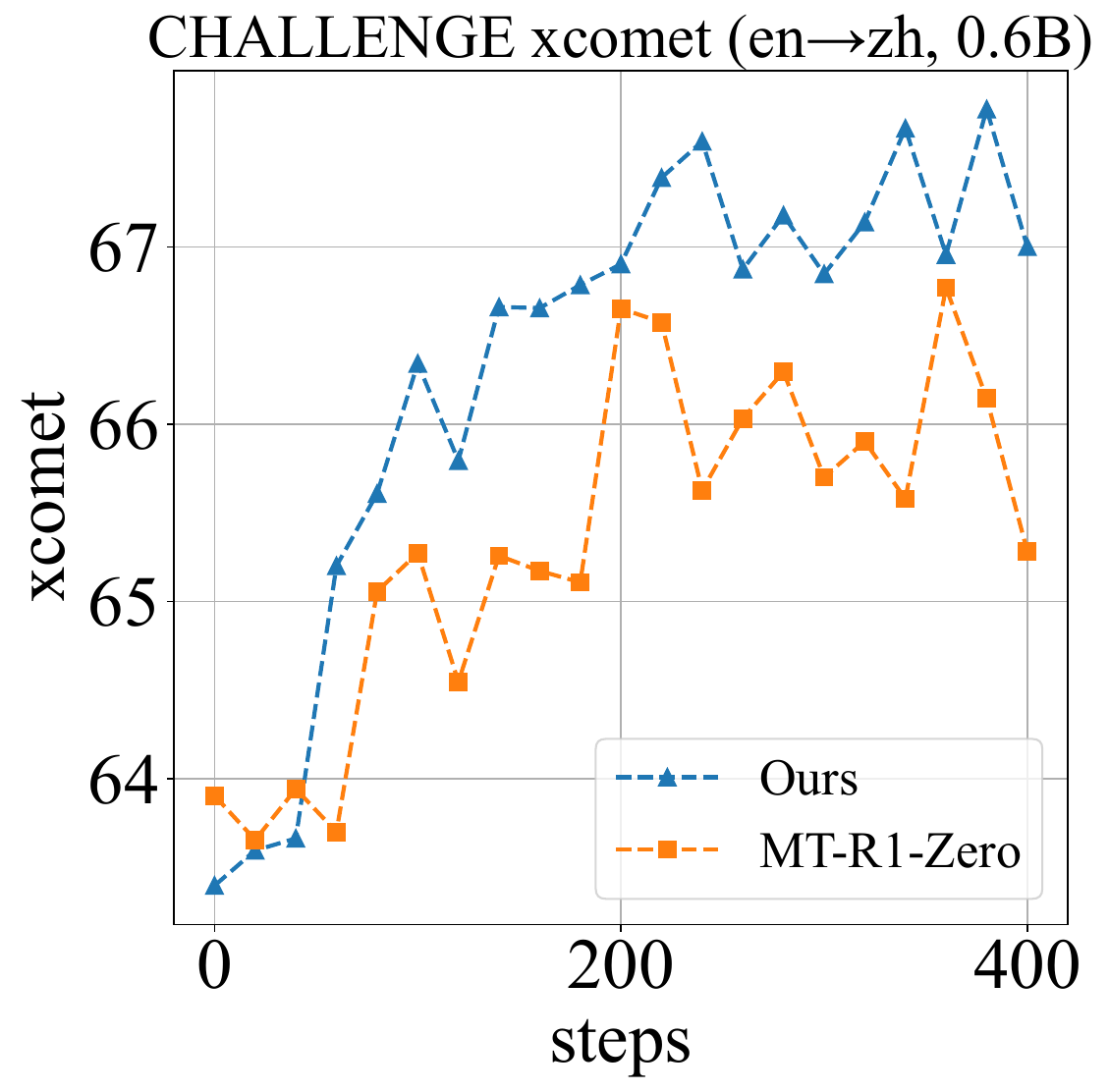}
    }

    \caption{
        Training dynamics on FLORES, WMT24, and Challenge for EN$\rightarrow$ZH
        with a 0.6B model, evaluated by BLEU, COMET-Kiwi, and XCOMET.
    }
    \label{fig:appendix_main_en2zh}
\end{figure*}

\begin{figure*}[htbp]
    \centering
    \setlength{\tabcolsep}{0pt}

    \subfigure{
        \includegraphics[width=0.29\textwidth]{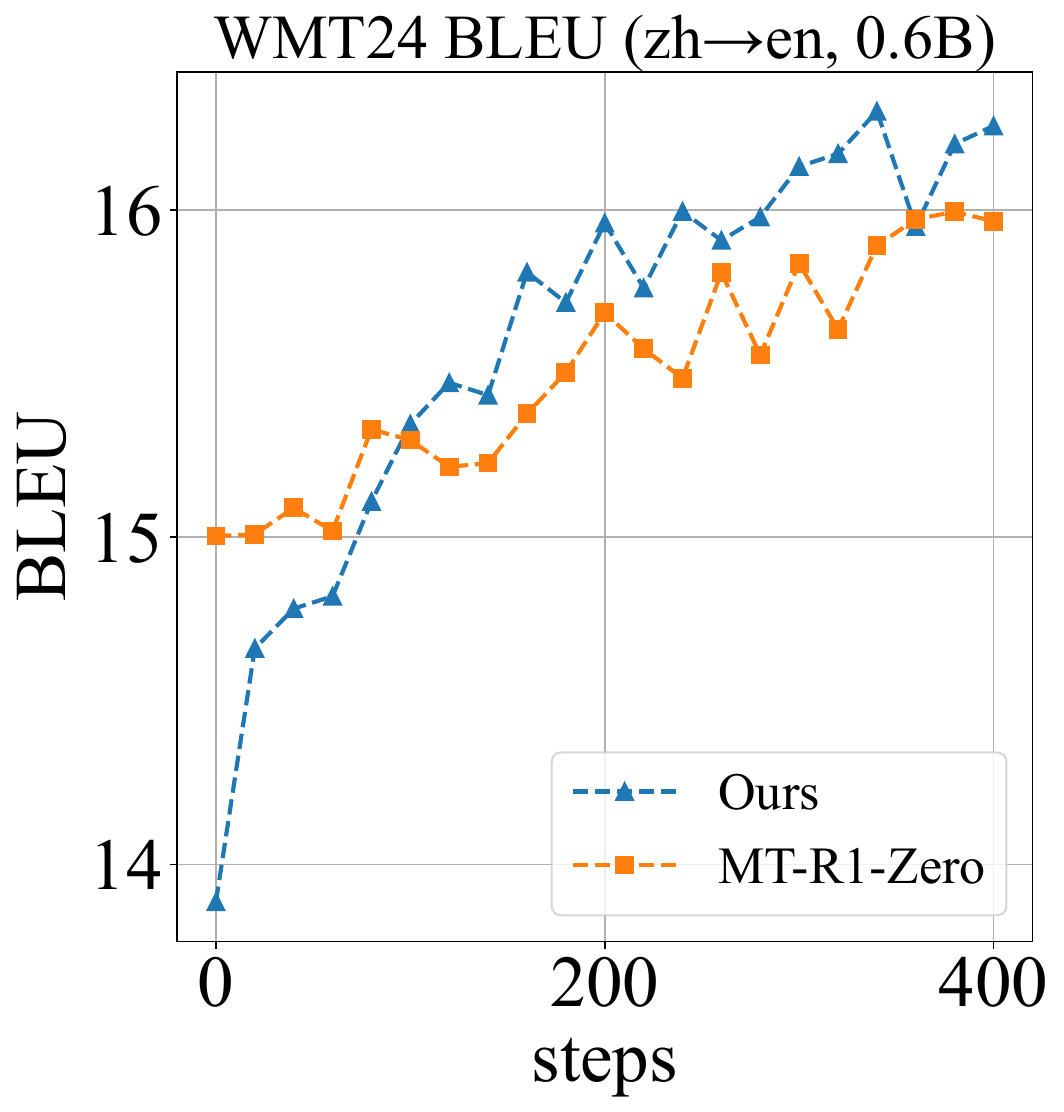}
    }
    \hfill
    \subfigure{
        \includegraphics[width=0.29\textwidth]{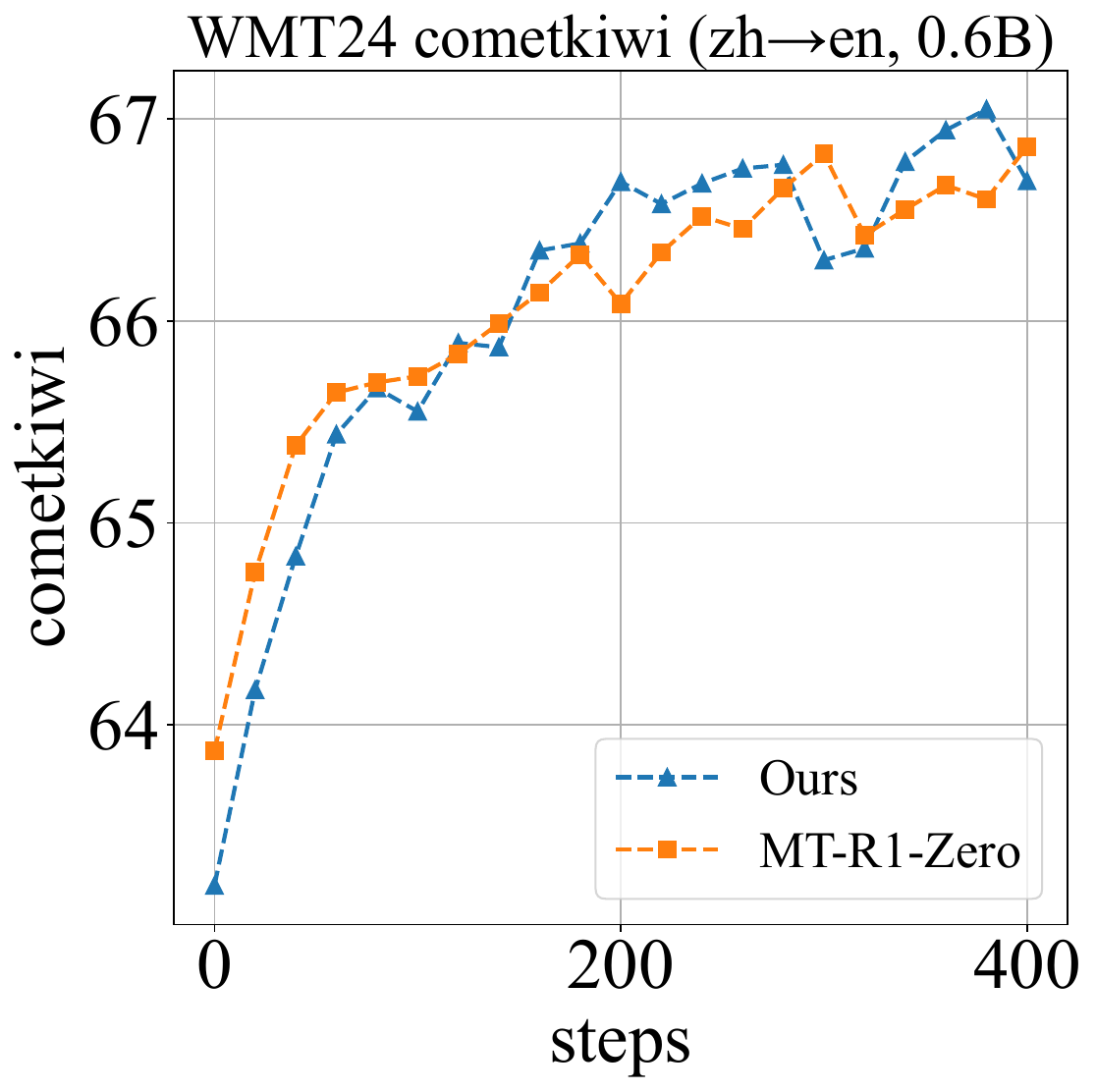}
    }
    \hfill
    \subfigure{
        \includegraphics[width=0.29\textwidth]{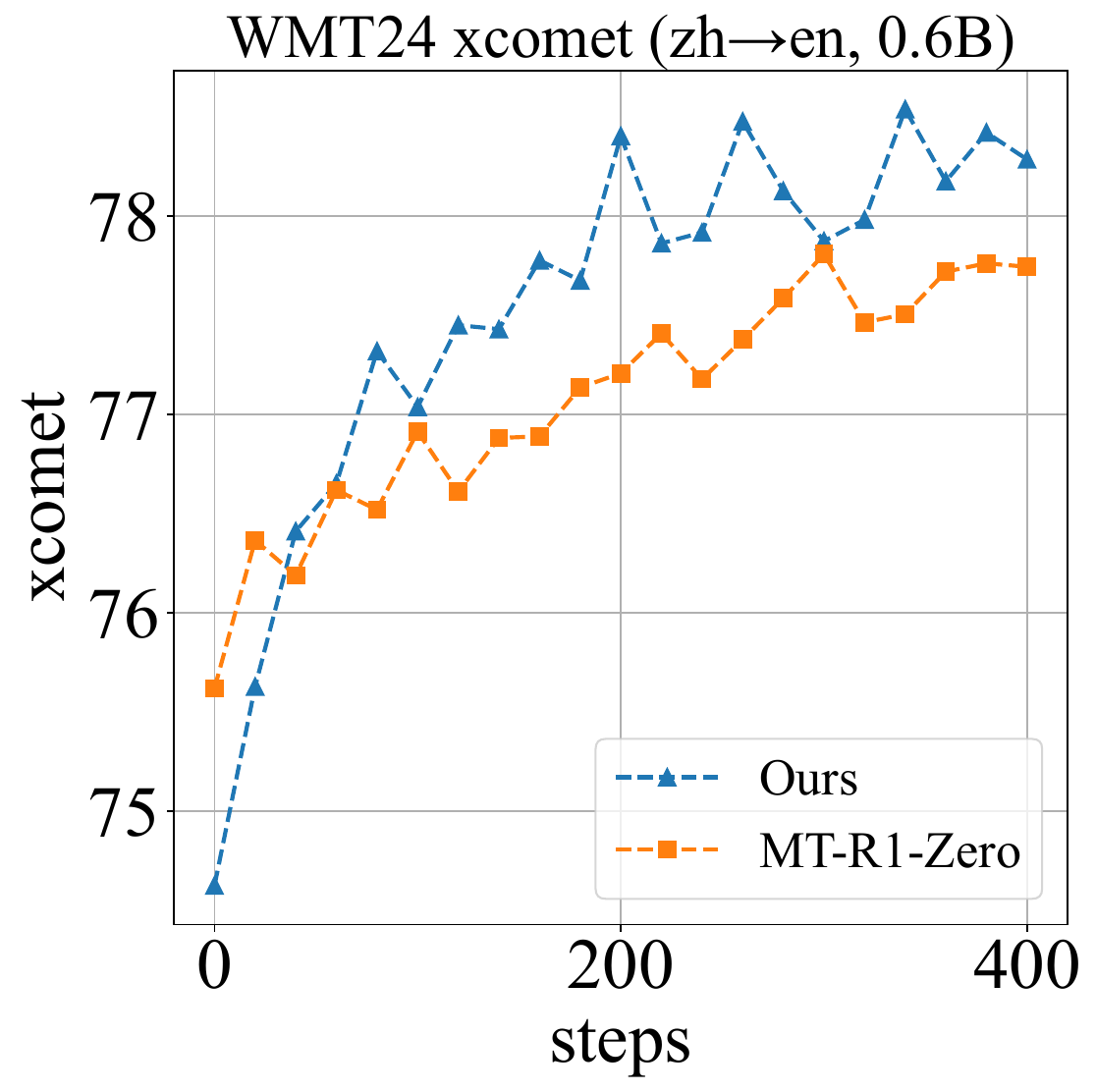}
    }

    \vspace{2mm}

    \subfigure{
        \includegraphics[width=0.29\textwidth]{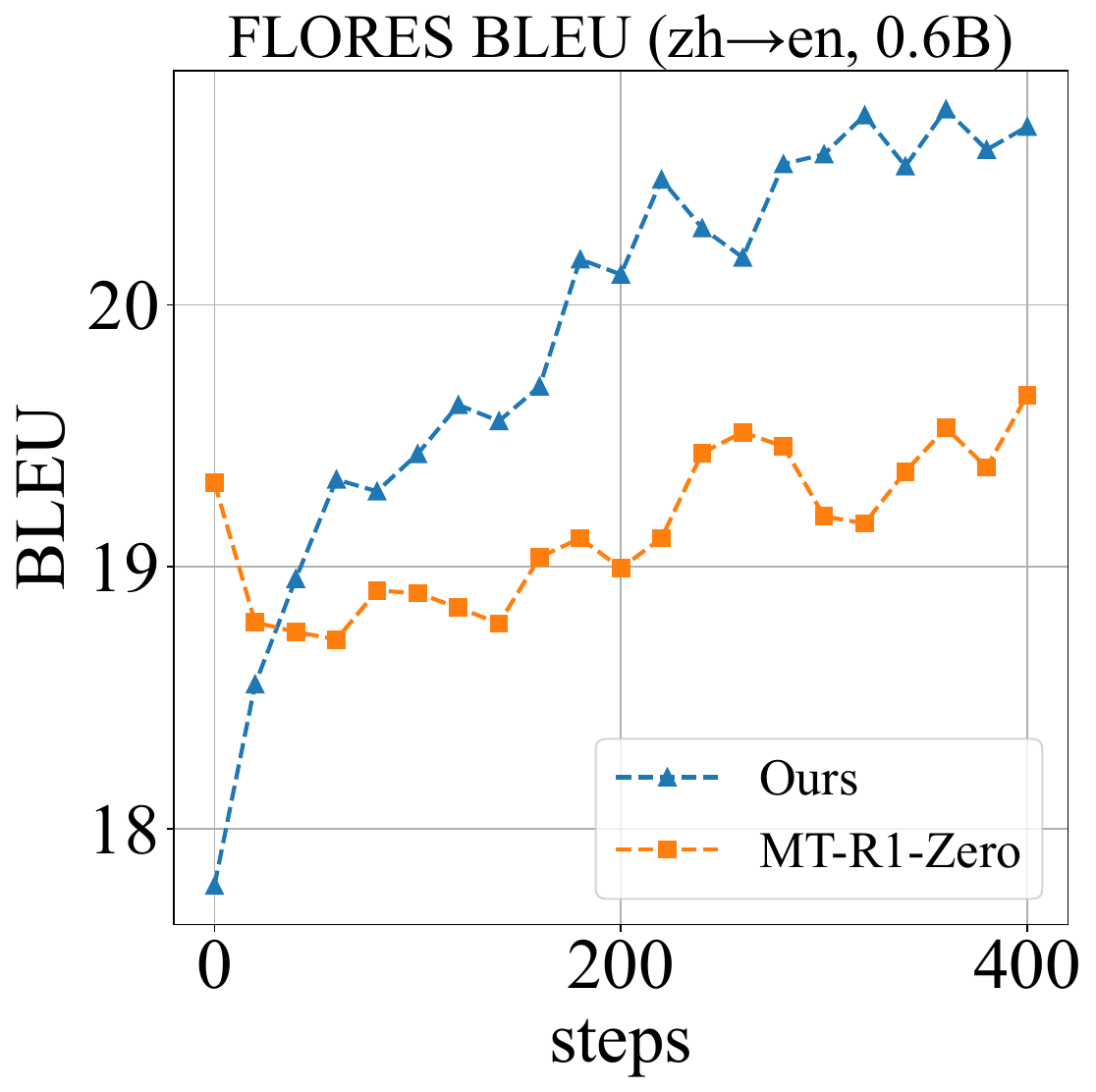}
    }
    \hfill
    \subfigure{
        \includegraphics[width=0.29\textwidth]{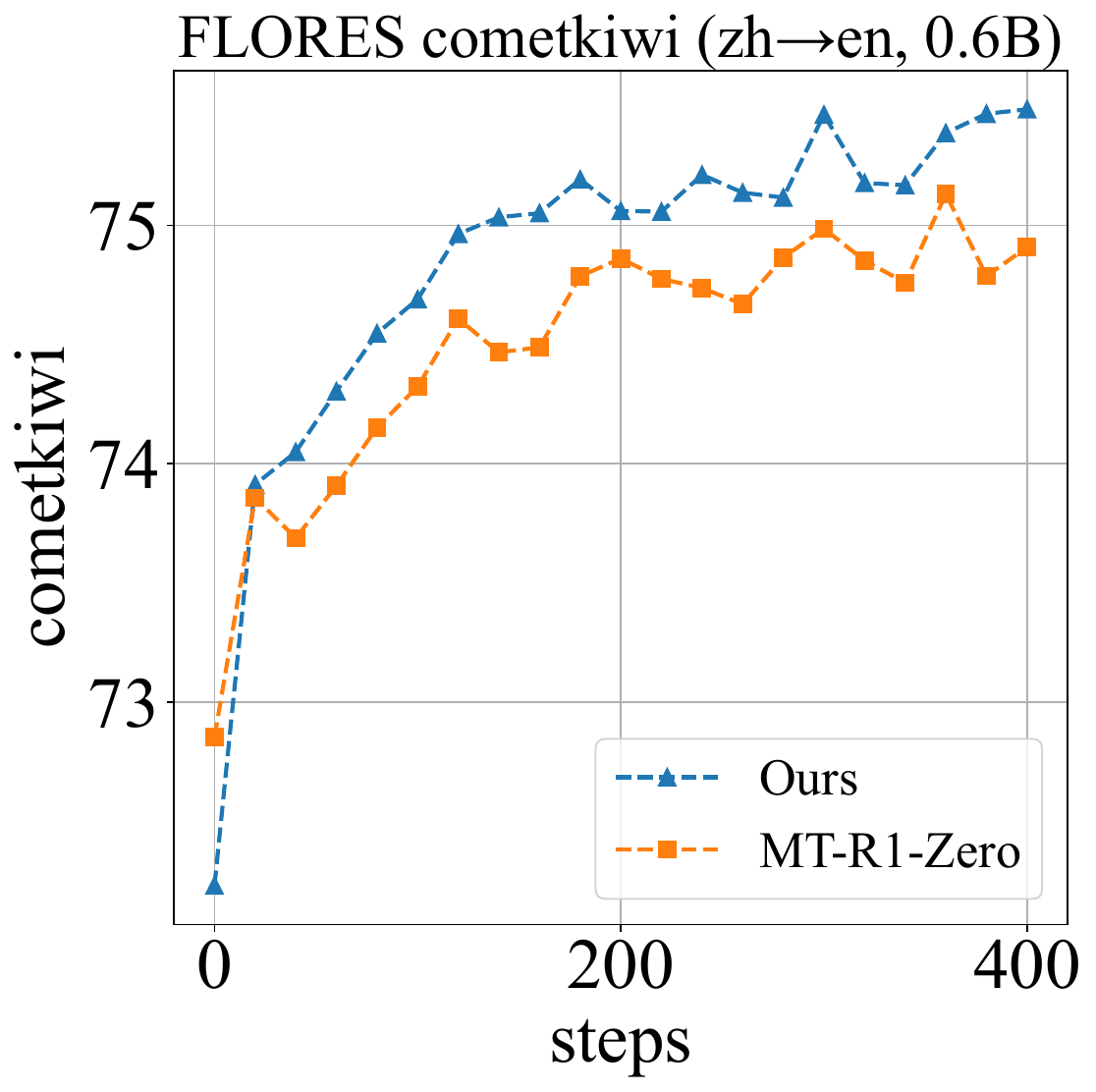}
    }
    \hfill
    \subfigure{
        \includegraphics[width=0.29\textwidth]{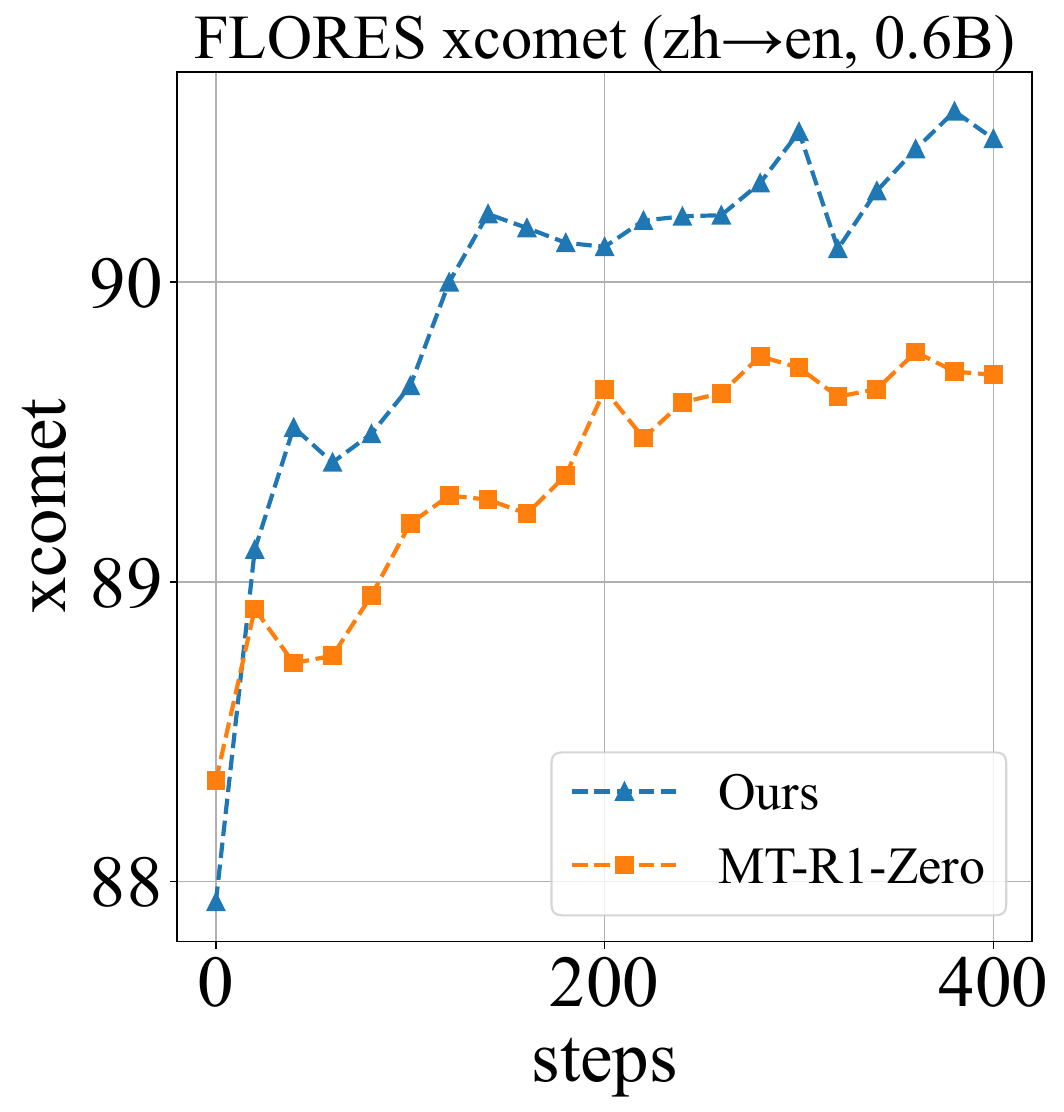}
    }

    \vspace{2mm}

    \subfigure{
        \includegraphics[width=0.29\textwidth]{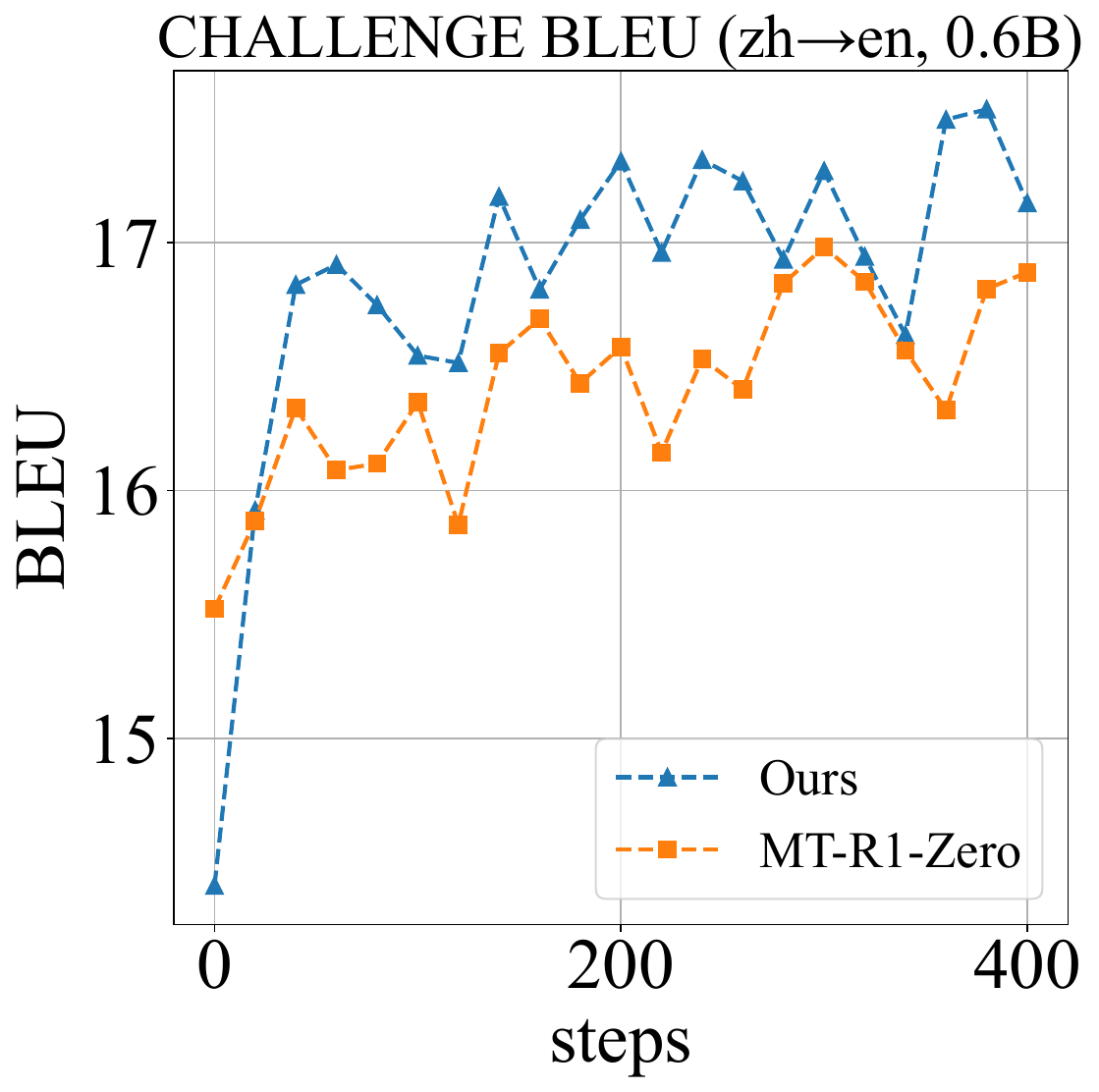}
    }
    \hfill
    \subfigure{
        \includegraphics[width=0.29\textwidth]{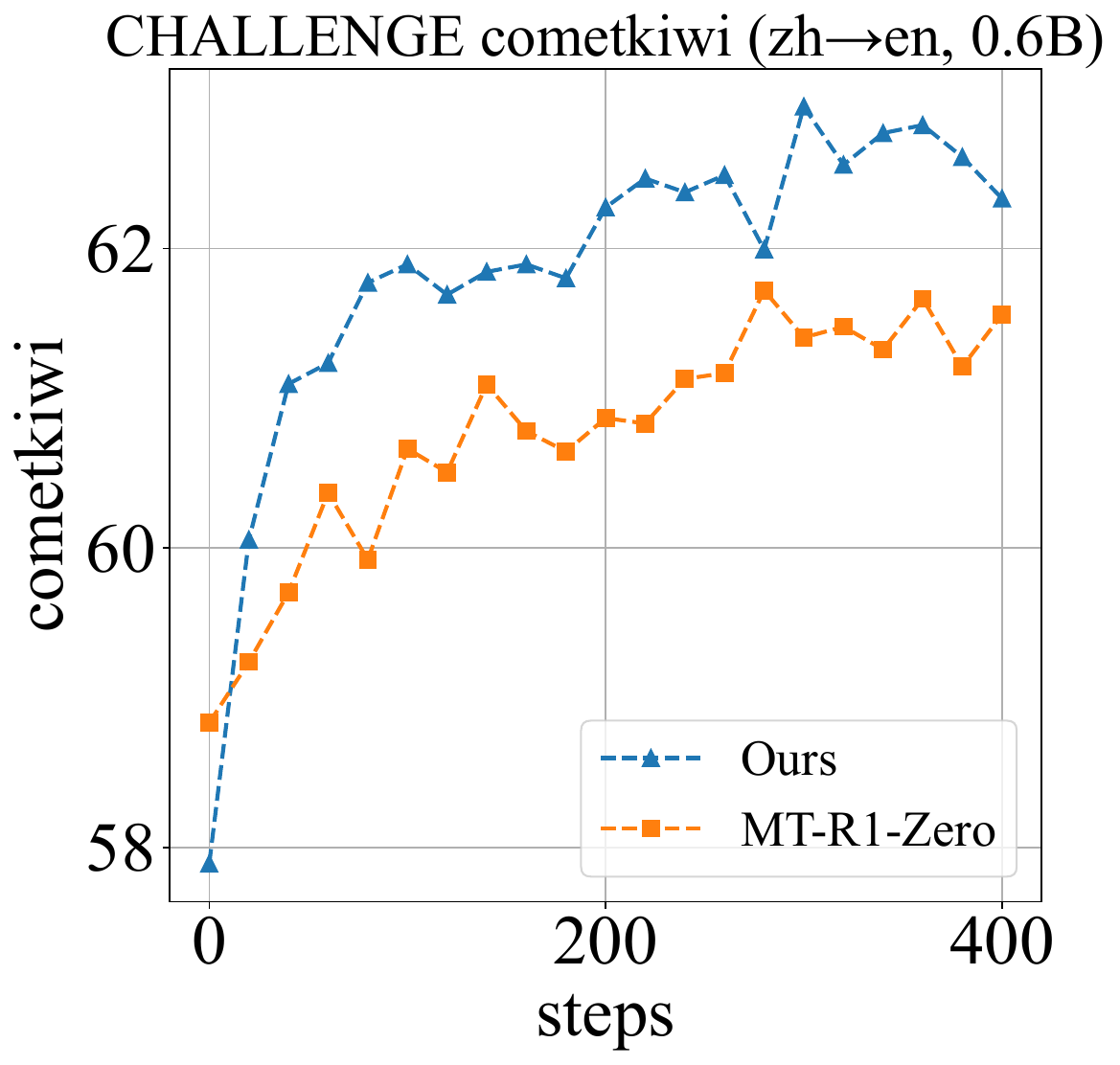}
    }
    \hfill
    \subfigure{
        \includegraphics[width=0.29\textwidth]{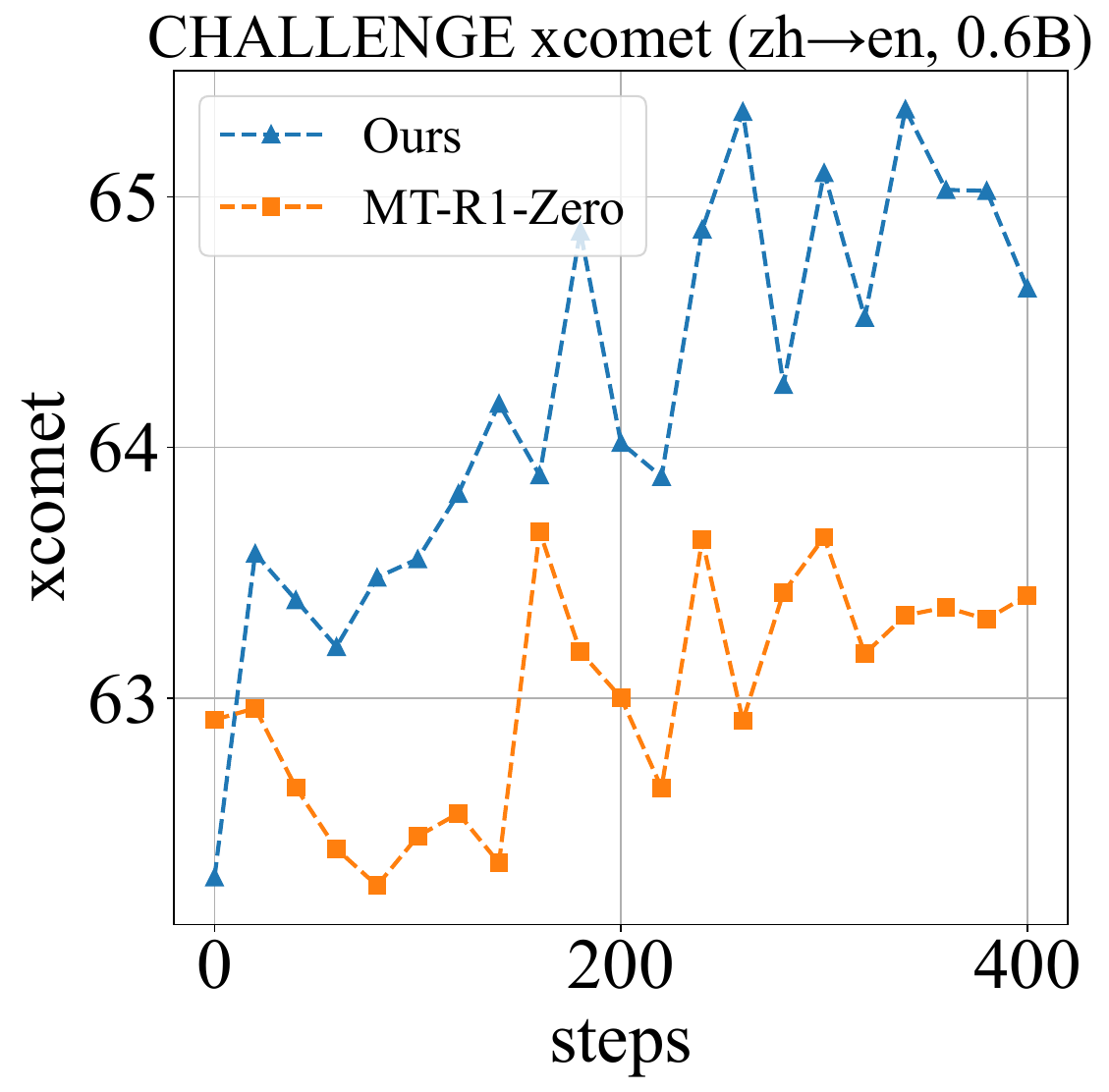}
    }

    \caption{
        Training dynamics on FLORES, WMT24, and Challenge for ZH$\rightarrow$EN
        with a 0.6B model, evaluated by BLEU, COMET-Kiwi, and XCOMET.
    }
    \label{fig:appendix_main_zh2en}
\end{figure*}

\end{document}